\documentclass[twoside]{article}

\usepackage[accepted]{aistats2024}
\usepackage[round]{natbib}

\usepackage[utf8]{inputenc} %
\usepackage[T1]{fontenc}    %
\usepackage{hyperref}       %
\usepackage{url}            %
\usepackage{booktabs}       %
\usepackage{amsfonts}       %
\usepackage{nicefrac}       %
\usepackage{microtype}      %
\usepackage{xcolor}         %

\definecolor{linkcolor}{RGB}{83,83,182}
\hypersetup{
    colorlinks=true,
    citecolor=linkcolor,
    linkcolor=linkcolor
}

\newcommand{\bfu}{\mathbf{u}}
\newcommand{\bfdelta}{\mathbf{\Delta}}

\usepackage{amsmath}
\usepackage{algorithm}
\usepackage{algorithmic}
\usepackage{empheq}
\usepackage{amssymb}
\usepackage{mathtools}
\usepackage{amsthm}
\usepackage{colortbl}
\usepackage{thmtools}
\usepackage[capitalize,noabbrev]{cleveref}
\usepackage{hhline}


\newcommand{\bbE}{\mathbb{E}}

\newcommand{\bbN}{\mathbb{N}}

\newcommand{\bbR}{\mathbb{R}}

\DeclareMathOperator*{\argmin}{arg\,min}

\newcommand{\calA}{\mathcal{A}}

\newcommand{\calC}{\mathcal{C}}
\newcommand{\calD}{\mathcal{D}}

\newcommand{\calG}{\mathcal{G}}

\newcommand{\calI}{\mathcal{I}}

\newcommand{\calL}{\mathcal{L}}

\newcommand{\calN}{\mathcal{N}}
\newcommand{\calO}{\mathcal{O}}

\newcommand{\calV}{\mathcal{V}}

\newcommand{\train}{{\mathrm{train}}}

\newcommand{\val}{{\mathrm{val}}}


\newcommand{\diff}{\mathrm{d}}
\newcommand{\e}{\mathrm{e}}



\newcommand{\setcomb}[1]{[#1]}

\newcommand{\Span}{\mathrm{Span}}

\theoremstyle{plain}
\newtheorem{theorem}{Theorem}[section]
\newtheorem{proposition}[theorem]{Proposition}
\newtheorem{lemma}[theorem]{Lemma}
\newtheorem{corollary}[theorem]{Corollary}
\theoremstyle{definition}
\newtheorem{definition}[theorem]{Definition}
\newtheorem{assumption}[theorem]{Assumption}
\theoremstyle{remark}

\usepackage{thmtools}
\usepackage{thm-restate}

\declaretheoremstyle[
    shaded={bgcolor=\color{HTML}{E9F6FF}}
]{shadedtheorem}

\newcommand{\checklistans}[1]{\textbf{\textcolor{blue}{#1}}}

\newcommand{\new}[1]{#1}

\usepackage[textsize=tiny]{todonotes}

\begin{document}

\runningtitle{A Lower Bound and a Near-Optimal Algorithm for Bilevel Empirical Risk Minimization}
\twocolumn[

\aistatstitle{A Lower Bound and a Near-Optimal Algorithm \\
for Bilevel Empirical Risk Minimization}

\aistatsauthor{Mathieu Dagréou \And Thomas Moreau \And  Samuel Vaiter \And Pierre Ablin}

\aistatsaddress{Université Paris-Saclay \\ Inria, CEA \\ Palaiseau, France \And Université Paris-Saclay \\ Inria, CEA \\ Palaiseau, France \And Université de Côte d'Azur \\ CNRS, LJAD \\ Nice, France \And Apple \\ Paris, France} ]

\begin{abstract}
    Bilevel optimization problems, which are problems where two optimization problems are nested, have more and more applications in machine learning.
    In many practical cases, the upper and the lower objectives correspond to empirical risk minimization problems and therefore have a sum structure.
    In this context, we propose a bilevel extension of the celebrated SARAH algorithm.
    We demonstrate that the algorithm requires $\mathcal{O}((n+m)^{\frac12}\varepsilon^{-1})$ oracle calls to achieve $\varepsilon$-stationarity with $n+m$ the total number of samples, which improves over all previous bilevel algorithms.
    Moreover, we provide a lower bound on the number of oracle calls required to get an approximate stationary point of the objective function of the bilevel problem.
    This lower bound is attained by our algorithm, making it optimal in terms of sample complexity.
\end{abstract}

\section{Introduction}
\label{sec:intro}

\looseness=-1
In the last few years, bilevel optimization has become an essential tool for the machine learning community thanks to its numerous applications. Among them, we can cite hyperparameter selection~\citep{Bengio2000, Pedregosa2016hoag, Franceschi2017ForwardReverse, Lorraine2020MillionsofHyperparameters}, implicit deep learning \citep{Bai2019}, neural architecture search \citep{Liu2019darts, Zhang2021idarts}, data augmentation~\citep{Li2020dada, Rommel2022} or meta-learning \citep{Franceschi2018hyperparameter, Rajeswaran2019imaml}. Bilevel optimization consists in minimizing a function under the constraint that one variable minimizes another function. This can be formalized as follows
\begin{align}\label{eq:bilevel_pb}
    \begin{split}
        &\min_{x\in\bbR^d} h(x) = F(z^*(x), x)\enspace,\\
        &\text{subject to }z^*(x) \in \argmin_{z\in\bbR^p} G(z,x)\enspace .
    \end{split}
\end{align}
The function $F$ is called the outer function and the function $G$ is the inner function. Likewise, we refer to $x$ as the outer variable and $z$ as the inner variable.
\new{The function $h$ is the value function and it can be minimized using gradient descent. To compute its gradient, we use implicit differentiation which yields}
\begin{equation}\label{eq:grad_h}
    \nabla h(x) = \nabla_2 F(z^*(x),x) + \nabla_{21}^2 G(z^*(x),x)v^*(x)
\end{equation}
where $v^*(x)$ is the solution of a linear system
\begin{equation}\label{eq:lin_syst}
    v^*(x) = -\left[\nabla^2_{11}G(z^*(x),x)\right]^{-1}\nabla_1F(z^*(x),x)\enspace.
\end{equation}
When we have exact access to $z^*(x)$, solving \eqref{eq:bilevel_pb} boils down to a smooth nonconvex optimization problem which can be solved using solvers for single-level problems. However, computing exactly $z^*(x)$ and $v^*(x)$ is often too costly, and implicit differentiation-based algorithms rely on approximations of $z^*(x)$ and $v^*(x)$ rather than their exact value. Depending on the precision of the different approximations, we are not ensured that the approximate gradient used is a descent direction.
Results by \citet{Pedregosa2016hoag} characterized the approximation quality for $z^*(x)$ and $v^*(x)$ required to ensure convergence, opening the door to various algorithms to solve bilevel optimization problems~\citep{Lorraine2020MillionsofHyperparameters, Ramzi2022}.

In many applications of interest, the functions $F$ and $G$ correspond to Empirical Risk Minimization (ERM), and as a consequence have a finite sum structure
\begin{equation*}
    F(z,x) = \frac1m\sum_{j=1}^m F_j(z,x),\quad G(z,x) = \frac1n\sum_{i=1}^n G_i(z,x)\enspace.
\end{equation*}
For instance, in hyperparameter selection, $F$ is the validation loss which is an average on the validation set and $G$ is the training loss which is an average on the training set. In single-level optimization, the finite sum structure has been widely leveraged to produce fast first-order algorithms that provably converge faster than gradient descent.
Among them, we can cite stochastic methods such as stochastic gradient descent (SGD) \citep{Robbins1951, Bottou2010} and its variance-reduced variants such as SAGA~\citep{Defazio2014}, STORM \citep{Cutkosky2019} or SPIDER/SARAH \citep{Fang2018, Nguyen2017} that use only a handful of samples at a time to make progress.
To get faster methods than full-batch approaches, it is natural to extend these methods to the bilevel setting.
The main obstacle comes from the difficulty of obtaining stochastic approximations of $\nabla h(x)$ because of its structure \eqref{eq:grad_h} \new{which involves a Hessian inversion}.
Several strategies have been proposed to overcome this obstacle, and some works demonstrate that stochastic implicit differentiation-based algorithms for solving \eqref{eq:bilevel_pb}  have the same complexity as single-level analogous algorithms.
For instance, ALSET from \citet{Chen2021alset} and SOBA from \cite{Dagreou2022SABA} have the same convergence rate as SGD for nonconvex single-level problems \citep{Ghadimi2013, Bottou2018}.
Also, \citet{Dagreou2022SABA} show that SABA, an adaptation of SAGA \citep{Defazio2014}, has an analogous sample complexity to its single-level counterparts for nonconvex problems \citep{Reddi2016saga}.

Yet, in classical single-level optimization, it is known that neither of these algorithms is optimal: the SARAH algorithm~\citep{Nguyen2017} achieves a better sample complexity of $\calO(m^{\frac12}\varepsilon^{-1})$ with $m$ the number of samples.
Furthermore, this algorithm is \emph{near-optimal} (\textit{i.e.} optimal up to constant factors) because the lower bound for single-level nonconvex optimization is $\Omega(m^{\frac12}\varepsilon^{-1})$ as proved by \citet{Zhou2019}.
It is natural to ask if we can extend these results to bilevel optimization.

\textbf{Contributions}~ In \cref{sec:algo}, we introduce SRBA, an adaptation of the SARAH algorithm to the bilevel setting.
We then demonstrate in \cref{sec:theory} that, similarly to the single-level setting, $\calO\left((n+m)^\frac12\varepsilon^{-1}\vee (n+m)\right)$ oracle calls are sufficient to reach an $\varepsilon$-stationary point.
As shown in \cref{table:complexities}, it achieves the best-known complexity in the regime $n+m \lesssim \calO(\varepsilon^{-2})$.
In \cref{sec:lower_bound}, we analyze the lower bounds for such problems.
We show that we need at least $\Omega(m^{\frac12}\varepsilon^{-1})$ oracle calls to reach an $\varepsilon$-stationary point (see \cref{def:stationary_point}), hereby matching the previous upper-bound in the case where $n \asymp m$ and $\varepsilon\leq m^{-\frac12}$. 
SRBA is thus near-optimal in that regime.
Even though our main contribution is theoretical, we illustrate the numerical performances of the algorithm in \cref{sec:expe}.

\vspace{-.5em}
\begin{table*}[ht]
    \centering
    \begin{tabular}{c|c|c|c|c|}
        \hhline{~----}
                             & \cellcolor{gray!20} Sample complexity  &\cellcolor{gray!20} Stochastic setting &\cellcolor{gray!20}$F$ & \cellcolor{gray!20}$G$                      \\ \hline
        \multicolumn{1}{|c|}{StocBiO \citep{Ji2021stocbio} }      & $\tilde{\calO}(\varepsilon^{-2})$  & General expectation      & $\calC^{1,1}_L$ & SC and $\calC^{2,2}_L$       \\ \hline
        \multicolumn{1}{|c|}{AmIGO \citep{Arbel2022amigo}}         & $\calO(\varepsilon^{-2})$  & General expectation &$\calC^{1,1}_L$& SC and $\calC^{2,2}_L$    \\ \hline
        \multicolumn{1}{|c|}{MRBO \citep{Yang2021a}}          & $\tilde{\calO}(\varepsilon^{-\frac32})$  & General expectation    &$\calC^{1,1}_L$& SC and $\calC^{2,2}_L$      \\ \hline
        \multicolumn{1}{|c|}{VRBO \citep{Yang2021a}}          & $\tilde{\calO}(\varepsilon^{-\frac32})$  & General expectation  &$\calC^{1,1}_L$& SC and $\calC^{2,2}_L$       \\ \hline
        \multicolumn{1}{|c|}{SABA \citep{Dagreou2022SABA} }         & $\calO((n+m)^{\frac23}\varepsilon^{-1})$  & Finite sum   &$\calC^{2,2}_L$& SC and $\calC^{3,3}_L$       \\ \hline
        \multicolumn{1}{|c|}{\new{F$^2$SA \citep{Kwon2023f2sa} }}         & $\calO(\varepsilon^{-\frac72})$  & General expectation  &$\calC^{2,2}_L$& SC and $\calC^{2,2}_L$       \\ \hline
        \rowcolor{blue!10}
        \multicolumn{1}{|c|}{\textbf{SRBA}} & \textbf{$\calO((n+m)^{\frac12}\varepsilon^{-1})$}  & Finite sum  &$\calC^{2,2}_L$& SC and $\calC^{3,3}_L$\\ \hline
    \end{tabular}
    \vspace{.5em}
    \caption{Comparison between the sample complexities and the Assumptions of some stochastic bilevel solvers. It corresponds to the number of calls to gradient, Hessian-vector products, and Jacobian-vector product sufficient to get an $\varepsilon$-stationary point. The tilde on the $\tilde{\calO}$ hide a factor $\log(\varepsilon^{-1})$. "SC" means "strongly convex". $\calC^{p,k}_L$ means $p$-times differentiable with Lipschitz $k$th order derivatives for $k\leq p$.}
    \label{table:complexities}
    \vspace{-1.5em}
\end{table*}

\textbf{Related work}~
There are several strategies to solve \eqref{eq:bilevel_pb} with a stochastic method.
\new{The first one is the value-function-based method which consists in recasting Problem \ref{eq:bilevel_pb} as a single-level constrained optimization problem as done with F$^2$SA \citep{Kwon2023f2sa} or BOME \citep{Ye2022}.
}
\new{The second way is to use first-order methods on $h$ with \emph{approximate} gradients.
The approximate gradient of $h$ can be estimated using two approaches}: iterative differentiation (ITD) and approximate implicit differentiation (AID).
On the one hand, in ITD algorithms, the Jacobian of $z^*$ is estimated by differentiating the different steps used to compute an approximation of $z^*$.
On the other hand, AID algorithms leverage the implicit gradient given by \eqref{eq:grad_h} replacing $z^*$ and $v^*$ by some approximations $z$ and $v$.
In the class of ITD algorithms, \citet{Maclaurin2015} propose to approximate the Jacobian of the solution of the inner problem by differentiating through the iterations of SGD with momentum.
The complexity of the hypergradient computation in ITD solvers is studied in \cite{Franceschi2017ForwardReverse, Grazzi2020, Ablin2020}.
For AID algorithms, \citet{Ghadimi2018, Chen2021alset, Ji2021stocbio} propose to perform several SGD steps in the inner problem and then use Neumann approximations to approximate $v^*(x)$ defined in~\eqref{eq:lin_syst}.
A method consisting of alternating steps in the inner and outer variables was proposed in~\cite{Hong2021}.
These methods can be improved by using a warm start strategy for the inner problem \citep{Ji2021stocbio, Chen2021alset} and for the linear system \citep{Arbel2022amigo}.
Some works adapt variance reduction methods to like STORM \citep{Cutkosky2019, Khanduri2021sustain, Yang2021a} or SAGA \citep{Defazio2014,Dagreou2022SABA}.
We take a similar approach and extend the SARAH variance reduction method to the bilevel setting.
Recent works propose to approximate the Jacobian of $z^*$ by stochastic finite difference \citep{Sow2022a} or to use Bregman divergence-based methods \citep{Huang2022BilevelBregmana}. 

In single-level optimization, the problem of finding complexity lower bounds has been widely studied since the seminal work of \citet{Nemirovsky1983}. 
On the one hand, \citet{Agarwal2015} provided a lower bound to minimize strongly convex and smooth finite sums with deterministic algorithms that have access to individual gradients.
These results were extended to randomized algorithms for (strongly) convex finite sum objective by \citet{Woodworth2016}. 
On the other hand, \citet{Carmon2017lowerbound1} provided a lower bound for minimizing nonconvex functions with deterministic and randomized algorithms.
The nonconvex finite sum case is treated in \cite{Fang2018, Zhou2019}.
In the bilevel case, \citet{Ji2023accBio} showed a lower bound for deterministic algorithms.
However, this result is restricted to the case where the value function $h$ is convex or strongly convex, which is not the case with most ML-related bilevel problems.
Our results are instead in a nonconvex setting.

\textbf{Notation}~
The quantity $A_\bullet$ refers to $A_z$, $A_v$, or $A_x$, depending on the context.
If $f:\bbR^p\times\bbR^d\to\bbR$ is a twice differentiable function, we denote $\nabla_i f(z,x)$ its gradient w.r.t. its $i^\text{th}$ variable.
Its Hessian with respect to $z$ is denoted $\nabla_{11}^2f(z,x)~\in~\bbR^{p\times p}$ and its cross derivative matrix $\left(\frac{\partial^2 f}{\partial z_i\partial x_j}\right)_{\substack{i\in\setcomb{p} \\j\in\setcomb{d}}}$ is denoted $\nabla^2_{12} f(z,x)~\in~\bbR^{p\times d}$. We denote $\Pi_\calC$ the projection on a closed convex set $\calC$.

\section{SRBA: a Near-Optimal Algorithm for Bilevel Empirical Risk Minimization}
\label{sec:algo}

In this section, we introduce SRBA (Stochastic Recursive Bilevel Algorithm), a novel algorithm for bilevel empirical risk minimization which is provably near-optimal for this problem. This algorithm is inspired by the algorithms SPIDER \citep{Fang2018} and SARAH \citep{Nguyen2017, Nguyen2022} which are known for being near-optimal algorithms for nonconvex finite sum minimization problems. It relies on a recursive estimation of directions of interest, which is restarted periodically. Proofs are deferred to the appendix.

\subsection{Assumptions}\label{subsec:ass}
Before presenting our algorithm, we formulate regularity Assumptions on the functions $F$ and $G$.
\begin{assumption}\label{ass:regul_F}
    For all $j\in\setcomb{m}$, $F_j$ is twice differentiable and $L^F_0$-Lipschitz continuous. Its gradient is $L^F_1$-Lipschitz continuous and its Hessian is $L^F_2$-Lipschitz continuous.
\end{assumption}

\begin{assumption}\label{ass:regul_G}
    For all $i\in\setcomb{n}$, $G_i$ is three times differentiable. Its first, second, and third order derivatives are respectively $L^G_1$-Lipschitz continuous, $L^G_2$-Lipschitz continuous, and $L^G_3$-Lipschitz continuous. For $x\in\bbR^d$, the function $G_i(\,.\,,x)$ is $\mu_G$-strongly convex.
\end{assumption}
The strong convexity and the smoothness with respect to $z$ hold for instance when we consider an $\ell^2$-regularized logistic regression problem with non-separable data.
These regularity assumptions up to first-order for $F$ and second-order for $G$ are standard in the stochastic bilevel literature \citep{Arbel2022amigo, Ji2021stocbio, Yang2021a}.
The second-order regularity for $F$ and third-order regularity for $G$ are necessary for the analysis of the dynamics of the auxiliary variable $v$ we introduce in \cref{ssec:hyper_grad_approx}, as is the case in \cite{Dagreou2022SABA}.
As shown in \citet[Lemma 2.2]{Ghadimi2018}, these assumptions imply the smoothness of $h$, which is a fundamental property to get a descent.
\begin{proposition}\label{prop:smoothness_h}
Under Assumptions \ref{ass:regul_F} and \ref{ass:regul_G}, the function $h$ is $L^h$ smooth for some $L^h>0$ which is precised in \cref{app:sec:smoothness_h}.
\end{proposition}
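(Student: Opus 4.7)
The plan is to leverage the explicit expression for $\nabla h$ in \eqref{eq:grad_h}--\eqref{eq:lin_syst} and bound the Lipschitz constant of each factor that appears there. First, I would establish that $x \mapsto z^*(x)$ is Lipschitz. Differentiating the optimality condition $\nabla_1 G(z^*(x), x) = 0$ using the implicit function theorem (which applies because $\nabla^2_{11} G$ is invertible by $\mu_G$-strong convexity) gives
\begin{equation*}
\nabla z^*(x) = -\bigl[\nabla^2_{11} G(z^*(x), x)\bigr]^{-1} \nabla^2_{12} G(z^*(x), x).
\end{equation*}
Since $\nabla G_i$ is $L^G_1$-Lipschitz we have $\|\nabla^2_{12} G\| \le L^G_1$, and strong convexity gives $\|[\nabla^2_{11} G]^{-1}\| \le 1/\mu_G$. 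Hence $z^*$ is $(L^G_1/\mu_G)$-Lipschitz.

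Next, I would show that $v^*$ is bounded and Lipschitz. Boundedness follows from $\|v^*(x)\| \le \tfrac{1}{\mu_G}\|\nabla_1 F(z^*(x), x)\| \le L^F_0/\mu_G$ using Assumption~\ref{ass:regul_F}. For Lipschitzness I would write, for any $x, x'$,
\begin{align*}
v^*(x) - v^*(x') &= -\bigl[\nabla^2_{11} G(z^*(x), x)\bigr]^{-1}\bigl(\nabla_1 F(z^*(x), x) - \nabla_1 F(z^*(x'), x')\bigr) \\
&\quad - \Bigl(\bigl[\nabla^2_{11} G(z^*(x), x)\bigr]^{-1} - \bigl[\nabla^2_{11} G(z^*(x'), x')\bigr]^{-1}\Bigr)\nabla_1 F(z^*(x'), x'),
\end{align*}
and control the second difference with the resolvent identity $A^{-1} - B^{-1} = A^{-1}(B - A)B^{-1}$. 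Combining $\|[\nabla^2_{11} G]^{-1}\| \le 1/\mu_G$, the fact that $\nabla^2_{11} G$ is Lipschitz in its arguments (which uses the third-order regularity $L^G_3$ of $G$ granted by Assumption~\ref{ass:regul_G}), and the Lipschitzness of $z^*$, yields a constant $L_{v^*}$ such that $v^*$ is $L_{v^*}$-Lipschitz.

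Finally, I would prove the Lipschitzness of $\nabla h$ by splitting
\begin{equation*}
\nabla h(x) - \nabla h(x') = \underbrace{\nabla_2 F(z^*(x), x) - \nabla_2 F(z^*(x'), x')}_{\mathrm{(I)}} + \underbrace{\nabla^2_{21} G(z^*(x), x) v^*(x) - \nabla^2_{21} G(z^*(x'), x') v^*(x')}_{\mathrm{(II)}}.
\end{equation*}
Term $\mathrm{(I)}$ is Lipschitz thanks to the $L^F_1$-Lipschitzness of $\nabla F_j$ combined with the Lipschitzness of $z^*$. For term $\mathrm{(II)}$, I would introduce the cross term $\nabla^2_{21} G(z^*(x), x) v^*(x')$ and use: the boundedness of $v^*$ (to absorb the Lipschitzness of $\nabla^2_{21} G$, which relies again on $L^G_2$ plus the Lipschitzness of $z^*$), and the boundedness of $\|\nabla^2_{21} G\| \le L^G_1$ (to absorb the Lipschitzness of $v^*$). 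Summing the contributions gives a closed-form constant $L^h$.

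The routine part is careful bookkeeping: each factor is bounded and Lipschitz, and the constant $L^h$ is a polynomial in $L^F_0, L^F_1, L^F_2, L^G_1, L^G_2, L^G_3, 1/\mu_G$. The main obstacle I anticipate is handling the inverse-Hessian term cleanly in the Lipschitz estimate for $v^*$: the resolvent identity produces a product of three matrix norms, one of which depends on $(z^*(x), x)$ through the Hessian, so the chain rule via $z^*$ must be unrolled carefully to avoid double counting. Once that step is done, the remainder is a direct application of the product/triangle inequality pattern used for $\mathrm{(II)}$.
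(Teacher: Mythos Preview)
Your approach is correct and is essentially the standard argument; it is in fact the proof of \citet[Lemma~2.2]{Ghadimi2018}, which the paper simply cites rather than reproving (the appendix only records the resulting constant $L^h$). So you are doing more than the paper does, but along the same lines.

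Two small corrections in your bookkeeping. First, the Lipschitz constant of $\nabla^2_{11}G$ is $L^G_2$ directly by Assumption~\ref{ass:regul_G} (``second order derivatives are $L^G_2$-Lipschitz''); it does not require the third-order constant $L^G_3$. Second, and relatedly, the smoothness of $h$ uses only $L^F_0,L^F_1,L^G_1,L^G_2,\mu_G$: neither $L^F_2$ nor $L^G_3$ enters. Those higher-order constants are needed later in the paper for the analysis of the $v$-dynamics, not for Proposition~\ref{prop:smoothness_h}. With those adjustments your sketch goes through and recovers the constant quoted in the appendix.
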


Another consequence of Assumptions \ref{ass:regul_F} and \ref{ass:regul_G} is the boundedness of the function $v^*$.
\begin{proposition}\label{prop:v_star_boundedness}
    Assume that Assumptions \ref{ass:regul_F} and \ref{ass:regul_G} hold. Then, for $R = \frac{L^F_0}{\mu_G}$ it holds that for any $x\in\bbR^d$, we have $\|v^*(x)\|\leq R$.
\end{proposition}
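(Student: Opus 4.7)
The plan is to bound $\|v^*(x)\|$ by combining two ingredients drawn directly from Assumptions~\ref{ass:regul_F} and \ref{ass:regul_G}: a uniform bound on $\nabla_1 F$ coming from the Lipschitz continuity of each $F_j$, and a uniform bound on the operator norm of $[\nabla_{11}^2 G]^{-1}$ coming from the strong convexity of each $G_i(\,\cdot\,,x)$ in its first argument. The submultiplicativity of the operator norm then closes the argument immediately.

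First I would observe that since each $F_j$ is $L^F_0$-Lipschitz by \cref{ass:regul_F}, its gradient satisfies $\|\nabla F_j(z,x)\| \leq L^F_0$ for every $(z,x)$, and in particular $\|\nabla_1 F_j(z,x)\| \leq L^F_0$. Averaging over $j \in \setcomb{m}$ and using the triangle inequality yields $\|\nabla_1 F(z,x)\| \leq L^F_0$ uniformly in $(z,x)$, hence in particular at $(z^*(x), x)$.

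Next I would note that \cref{ass:regul_G} gives $\mu_G$-strong convexity of each $G_i(\,\cdot\,,x)$ in $z$, which is preserved by averaging, so $G(\,\cdot\,,x)$ is $\mu_G$-strongly convex in $z$. Strong convexity implies $\nabla^2_{11} G(z,x) \succeq \mu_G I_p$, and since this matrix is symmetric positive definite its inverse satisfies $\|[\nabla^2_{11} G(z,x)]^{-1}\|_{\mathrm{op}} \leq 1/\mu_G$.

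Finally, plugging both bounds into the closed form \eqref{eq:lin_syst} and applying submultiplicativity,
\begin{equation*}
\|v^*(x)\| = \bigl\| [\nabla^2_{11} G(z^*(x),x)]^{-1} \nabla_1 F(z^*(x),x) \bigr\| \leq \frac{1}{\mu_G} \cdot L^F_0 = R,
\end{equation*}
which is the claim. There is no genuine obstacle: both bounds are essentially one-line consequences of the standing assumptions, and the only mild point is to remember that strong convexity of each summand transfers to the average, so that the spectral lower bound on the Hessian of the sum inherits the constant $\mu_G$ without any loss.
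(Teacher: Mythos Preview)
Your proof is correct and is precisely the natural argument. The paper does not spell out a proof of this proposition, treating it as an immediate consequence of the assumptions; your two-ingredient argument (the Lipschitz constant of $F_j$ bounding $\|\nabla_1 F\|$, strong convexity of $G_i(\cdot,x)$ bounding $\|[\nabla^2_{11}G]^{-1}\|_{\mathrm{op}}$, then submultiplicativity) is exactly what is implicit there.
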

We denote $\Gamma$ the closed ball centered in $0$ with radius $R$ and $\Pi_\Gamma$ the projection onto $\Gamma$. For $(z, v, x)\in\bbR^p\times\bbR^p\times\bbR^d$, we denote $\Pi(z,v,x) = (z, \Pi_\Gamma(v), x)$.

\subsection{Hypergradient Approximation}\label{ssec:hyper_grad_approx}
The gradient of $h$ given by \eqref{eq:grad_h} is intractable in practice because it requires the perfect knowledge of $z^*(x)$ and $v^*(x)$ which are usually costly to compute.
As classically done in the stochastic bilevel literature \citep{Ji2021stocbio, Arbel2022amigo, Li2022}, $z^*(x)$ and $v^*(x)$ are replaced by approximate surrogate variables $z$ and $v$. The variable $z$ is typically the output of one or several steps of an optimization procedure applied to $G(\,.\,, x)$. The variable $v$ can be computed by using Neumann approximations or doing some optimization steps on the quadratic $v\mapsto \frac12 v^\top \nabla^2_{11}G(z,x) v + \nabla_1 F(z,x)^\top v$. We consider the approximate hypergradient given by 
\begin{equation*}
    D_x(z, v, x) = \nabla^2_{21}G(z,x) v + \nabla_2 F(z,x)\enspace.
\end{equation*}
The motivation behind this direction is that if we take $z=z^*(x)$ and $v = v^*(x)$, we recover the true gradient, that is $D_x(z^*(x), v^*(x), x)=~\nabla h(x)$.
\cref{prop:error_gradient} from \citep[Lemma~3.4]{Dagreou2022SABA} controls the hypergradient approximation error by the distances between $z$ and $z^*(x)$ and between $v$ and $v^*(x)$.
\begin{proposition}\label{prop:error_gradient}
    Let $x~\in~\bbR^d$. Assume that $F$ is differentiable and $L^F_1$ smooth with bounded gradient, $G$ is twice differentiable with Lipschitz gradient and Hessian and $G(\,.\,, x)$ is $\mu_G$-strongly convex.
    Then there exists a constant $L_x$ such that
    $$
    \|D_x(z,v,x) - \nabla h(x)\|^2 \!\leq\! L_x^2(\|z-z^*(x)\|^2 + \|v - v^*(x)\|^2) .
    $$
\end{proposition}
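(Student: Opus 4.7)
The plan is to expand the difference $D_x(z,v,x) - \nabla h(x)$ using the definitions and to bound the resulting pieces individually using the Lipschitz and boundedness hypotheses.

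First I would write the difference as
\begin{align*}
D_x(z,v,x) - \nabla h(x)
&= \bigl[\nabla_2 F(z,x) - \nabla_2 F(z^*(x),x)\bigr] \\
&\quad + \bigl[\nabla^2_{21}G(z,x)\,v - \nabla^2_{21}G(z^*(x),x)\,v^*(x)\bigr],
\end{align*}
and then split the second bracket further by adding and subtracting $\nabla^2_{21}G(z,x)\,v^*(x)$, to separate a factor controlled by $\|v-v^*(x)\|$ from a factor controlled by $\|z-z^*(x)\|$. An application of the elementary inequality $\|a+b+c\|^2 \le 3(\|a\|^2+\|b\|^2+\|c\|^2)$ then reduces the proof to three scalar bounds.

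The three bounds come directly from the assumptions. The term $\|\nabla_2 F(z,x) - \nabla_2 F(z^*(x),x)\|$ is at most $L_1^F \|z-z^*(x)\|$ by Lipschitz smoothness of $F$. The term $\|\nabla^2_{21}G(z,x)(v-v^*(x))\|$ is at most $\|\nabla^2_{21}G(z,x)\|\cdot\|v-v^*(x)\| \le L_1^G \|v-v^*(x)\|$, since the Lipschitz-gradient bound on $G$ forces the operator norm of its Hessian block to be at most $L_1^G$. Finally, the term $\|(\nabla^2_{21}G(z,x)-\nabla^2_{21}G(z^*(x),x))v^*(x)\|$ is at most $L_2^G\|z-z^*(x)\|\cdot\|v^*(x)\|$, using the Lipschitz Hessian of $G$, and here I would invoke \cref{prop:v_star_boundedness} to replace $\|v^*(x)\|$ by $R = L_0^F/\mu_G$.

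Combining these gives
\[
\|D_x(z,v,x)-\nabla h(x)\|^2 \le \bigl[3(L_1^F)^2 + 3R^2(L_2^G)^2\bigr]\|z-z^*(x)\|^2 + 3(L_1^G)^2 \|v-v^*(x)\|^2,
\]
and setting $L_x^2 = \max\bigl(3(L_1^F)^2 + 3R^2(L_2^G)^2,\;3(L_1^G)^2\bigr)$ yields the stated inequality. No step here is really an obstacle; the only care is in the splitting of the Hessian-times-vector difference so that each residual factor is either $\|z-z^*(x)\|$ or $\|v-v^*(x)\|$, and in remembering to use the bound on $\|v^*(x)\|$ (rather than on an arbitrary $v$) in the term with the Lipschitz Hessian. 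Since this proposition is quoted from \citep{Dagreou2022}, I would simply reference that work and present the short derivation above.
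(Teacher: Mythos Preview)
Your proof is correct and is the natural direct argument: split $D_x(z,v,x)-\nabla h(x)$ into three pieces controlled respectively by the smoothness of $F$, the bounded Hessian block of $G$, and the Lipschitz Hessian of $G$ combined with the bound on $\|v^*(x)\|$. The paper does not actually supply its own proof of this proposition; it simply quotes the result from \citep[Lemma~3.4]{Dagreou2022}, so your derivation is exactly what is needed and there is nothing further to compare.
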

Thus, it is natural to make $z$ and $v$ move towards their respective equilibrium values which are given by $z^*(x)$ and $v^*(x)$.
As a consequence, we also introduce the directions $D_z$ and $D_x$ as follows
\begin{align*}
    D_z(z,v,x) &= \nabla_1 G(z,x)\enspace,\\
    D_v(z,v,x) &= \nabla_{11}^2 G(z,x)v + \nabla_1 F(z,x)\enspace.
\end{align*}
The interest of considering the directions $D_z$ and $D_v$ is expressed in \cref{prop:directions_cancels}.
\begin{proposition}
    \label{prop:directions_cancels}
    Assume that $G$ is strongly convex with respect to its first variable. Then for any $x\in\bbR^d$, it holds $D_z(z^*(x), v^*(x), x) = 0$ and $D_v(z^*(x), v^*(x), x) = 0$.
\end{proposition}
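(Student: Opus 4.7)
The plan is to prove the two identities separately, since each is a first-order optimality condition read off from the definitions of $z^*(x)$ and $v^*(x)$.

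For the first identity $D_z(z^*(x), v^*(x), x) = 0$, I would invoke the definition of $z^*(x)$ as the minimizer of $z \mapsto G(z, x)$ from \eqref{eq:bilevel_pb}. Under \cref{ass:regul_G}, this function is $\mu_G$-strongly convex and differentiable, so $z^*(x)$ is the unique stationary point, giving $\nabla_1 G(z^*(x), x) = 0$. By the definition of $D_z$, this is exactly $D_z(z^*(x), v^*(x), x) = 0$. Note that $D_z$ does not depend on $v$, so the statement is immediate once the first-order condition on $z^*(x)$ is written.

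For the second identity $D_v(z^*(x), v^*(x), x) = 0$, I would use the closed-form expression for $v^*(x)$ given in \eqref{eq:lin_syst}. Since $G(\cdot, x)$ is $\mu_G$-strongly convex, the Hessian $\nabla_{11}^2 G(z^*(x), x)$ is invertible, and \eqref{eq:lin_syst} can be rewritten as
\begin{equation*}
    \nabla_{11}^2 G(z^*(x), x)\, v^*(x) + \nabla_1 F(z^*(x), x) = 0 .
\end{equation*}
The left-hand side is by definition $D_v(z^*(x), v^*(x), x)$, which concludes the argument.

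There is no genuine obstacle: the proposition merely reinterprets the definitions of $z^*(x)$ and $v^*(x)$ as the vanishing of the directions $D_z$ and $D_v$ at the equilibrium, and both facts follow directly from strong convexity and \eqref{eq:lin_syst}. The role of the statement is purely conceptual, motivating why driving $(z, v)$ with $D_z$ and $D_v$ is a natural way to push these surrogate variables toward $(z^*(x), v^*(x))$.
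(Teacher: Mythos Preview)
Your proposal is correct and essentially identical to the paper's own proof: both parts follow immediately from the first-order optimality condition for $z^*(x)$ and from rewriting the linear system \eqref{eq:lin_syst} defining $v^*(x)$, using strong convexity only to guarantee invertibility of $\nabla_{11}^2 G(z^*(x),x)$.
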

The directions $D_z$, $D_v$, and $D_x$ can be written as sums over the samples. Hence,
following these directions enables to adapt any classical algorithm suited for single-level finite sum minimization to bilevel finite sum minimization.
In what follows, for two indices $i\in\setcomb{n}$ and $j\in\setcomb{m}$, we consider the sampled directions $D_{z,i,j}$, $D_{v,i,j}$ and $D_{x,i,j}$ defined by
\begin{align}
    \label{eq:sampled_dz}D_{z,i,j}(z,v,x) &= \nabla_1 G_i(z,x)\\
    \label{eq:sampled_dv}D_{v,i,j}(z,v,x) &= \nabla_{11}^2 G_i(z,x)v + \nabla_1 F_j(z,x) \\
    \label{eq:sampled_dx}D_{x,i,j}(z,v,x) &= \nabla_{21}^2 G_i(z,x)v + \nabla_2 F_j(z,x)\enspace .
\end{align}
When $i$ and $j$ are randomly sampled uniformly, these directions are unbiased estimators of the true directions $D_z$, $D_v$, and $D_x$. Yet, as in~\cite{Nguyen2017}, we use them to recursively build biased estimators of the directions that enable fast convergence.
\subsection{SRBA: Stochastic Recursive Bilevel Algorithm}
In \cref{alg:srba}, we present SRBA, a combination of the idea of recursive gradient coming from \citep{Fang2018, Nguyen2022} and the framework proposed in \citep{Dagreou2022SABA}. 
It relies on a recursive estimation of each direction $D_z$, $D_v$, $D_x$ which is updated following the same strategy as SARAH.
Let us denote by $\rho$ the step size of the update for the variables $z$ and $v$, and $\gamma$ the step size for the update of the variable $x$.
We use the same step size for $z$ and $v$ because the problems of minimizing the inner function $G$ and solving the linear system \eqref{eq:lin_syst} have the same conditioning driven by $\nabla^2_{11}G$.
For simplicity, we denote the joint variable $\bfu = (z, v, x)$ and the joint directions weighted by the step sizes $\bfdelta~=~(\rho D_z, \rho D_v, \gamma D_x) =(\bfdelta_z,\bfdelta_v, \bfdelta_x)$.

At iteration $t$, the estimate direction $\bfdelta$ is initialized by computing full batch directions:
\begin{align*}
    \bfdelta^{t,0} = (\rho D_z(\mathbf{\tilde u}^t), \rho D_v(\mathbf{\tilde u}^t), \gamma D_x(\mathbf{\tilde u}^t))
\end{align*}
and a first update is performed by moving from $\mathbf{\tilde u}^t$ in the direction $-\bfdelta^{t,0}$. As done in \cite{Hu2022MultiBlock}, we project the variable $v$ onto $\Gamma$ to leverage the boundedness property of $v^*$.
Then, during the $k$th iteration of an inner loop of size $q-1$, two indices $i\in\setcomb{n}$ and $j\in\setcomb{m}$ are sampled and the estimate directions are updated according to Equations~\eqref{eq:update_dz} to \eqref{eq:update_dx}
\begin{align}
    \label{eq:update_dz}\mathbf{\Delta}_z^{t,k} &= \rho (D_{z,i,j}(\mathbf{u}^{t,k}) -  D_{z,i,j}(\mathbf{u}^{t,k-1})) + \mathbf{\Delta}_z^{t,k-1}\\
    \label{eq:update_dv}\mathbf{\Delta}_v^{t,k} &= \rho (D_{v,i,j}(\mathbf{u}^{t,k}) -  D_{v,i,j}(\mathbf{u}^{t,k-1})) + \mathbf{\Delta}_v^{t ,k-1}\\
    \label{eq:update_dx}\mathbf{\Delta}_x^{t,k} &= \gamma (D_{x,i,j}(\mathbf{u}^{t,k}) - D_{x,i,j}(\mathbf{u}^{t,k-1})) + \mathbf{\Delta}_x^{t,k-1\!}
\end{align}
where the sampled directions $D_{z,i,j}$, $D_{v,i,j}$ and $D_{x,i,j}$ are defined by Equations \eqref{eq:sampled_dz} to \eqref{eq:sampled_dx}. 
        \begin{algorithm}[tb]
            \begin{algorithmic}
               \STATE {\bfseries Input:} initializations $z_0\in\bbR^p$, $x_0\in\bbR^d$, $v_0\in\bbR^p$, number of iterations $T$ and $q$, step sizes $\rho$ and $\gamma$.
        
               Set $\mathbf{\tilde{u}^0} = (z_0, v_0, x_0)$
               \FOR{$t = 0,\dots, T-1$}
                   \STATE
                    Reset $\mathbf{\Delta}$: $\mathbf{\Delta}^{t,0} = (\rho D_z(\mathbf{\tilde u}^t), \rho D_v(\mathbf{\tilde u}^t), \gamma D_x(\mathbf{\tilde u}^t))$
        
                    Update $\mathbf{u}$: $\mathbf{u}^{t,1} = \Pi(\mathbf{\tilde u}^t - \mathbf{\Delta^{t,0}})\enspace,$
        
               \FOR{$k = 1,\dots, q-1$}
                \STATE
                    Draw $i\in\{1,\dots, n\}$ and $j\in\{1,\dots,m\}$
                    
                    $\mathbf{\Delta}_z^{t,k} = \rho (D_{z,i,j}(\mathbf{u}^{t,k}) - D_{z,i,j}(\mathbf{u}^{t,k-1})) + \mathbf{\Delta}_z^{t,k-1} $
        
                    $\mathbf{\Delta}_v^{t,k} = \rho (D_{v,i,j}(\mathbf{u}^{t,k}) - D_{v,i,j}(\mathbf{u}^{t,k-1})) + \mathbf{\Delta}_v^{t ,k-1}$
            
                    $\mathbf{\Delta}_x^{t,k} = \gamma (D_{x,i,j}(\mathbf{u}^{t,k}) - D_{x,i,j}(\mathbf{u}^{t,k-1})) + \mathbf{\Delta}_x^{t,k-1}$
        
                    Update $\mathbf{u}$: $\mathbf{u}^{t, k+1} = \Pi(\mathbf{u}^{t, k} - \mathbf{\Delta}^{t,k})$
               \ENDFOR
        
               Set $\mathbf{\tilde u}^{t+1} = \mathbf{u}^{t+1,q} $
               \ENDFOR
               
               Return $(\tilde{z}^T, \tilde{v}^T, \tilde{x}^T) = \mathbf{\tilde u}^{T}$
            \end{algorithmic}
            \caption{Stochastic Recursive Bilevel Algorithm}
            \label{alg:srba}
        \end{algorithm}
Then the joint variable $\bfu$ is updated by
\begin{equation}\label{eq:u_update}
    \bfu^{t,k+1} = \Pi(\bfu^{t,k} - \bfdelta^{t,k})\enspace.
\end{equation}
Recall that the projection is only performed on the variable $v$. The other variables $z$ and $x$ remain unchanged after the projection step.
At the end of the inner procedure, we set $\mathbf{\tilde u}^{t+1} = \bfu^{t,q}$.

In \cref{alg:srba}, the variables $z$, $v$, and $x$ are updated simultaneously rather than alternatively.
From a computational perspective, this enables sharing the common computations between the different oracles and doing the update of each variable in parallel.
So there is no sub-procedure to approximate the solution of the inner problem and the solution of the linear system.

Note that in \cite{Yang2021a}, the authors propose VRBO, another adaptation of SPIDER/SARAH for bilevel problems.
VRBO has a double loop structure where the inner variable is updated by several steps in an inner loop.
In this inner loop, the estimates of the gradient of $G$ and the gradient of $h$ are also updated using SARAH's update rules.
SRBA has a different structure.
First, in SRBA, the inner variable $z$ is updated only once between two updates of the outer variable instead of several times.
Second, the solution of the linear system evolves following optimization steps whereas in VRBO a Neumann approximation is used.
Moreover, in \cite{Yang2021a}, the algorithm VRBO is analyzed in the case where the functions $F$ and $G$ are general expectations but not in the specific case of empirical risk minimization, as done in \cref{sec:theory}.
Finally, VRBO requires three more parameters than SRBA: the number of inner steps, the number of terms and the scaling parameter in the Neumann approximations.

\section{Theoretical Analysis of SRBA}\label{sec:theory}
In this section we provide the theoretical analysis of \cref{alg:srba} leading to a final sample complexity in $\calO\left((n+m)^\frac12\varepsilon^{-1}\vee (n+m)\right)$. The detailed proofs of the results are deferred to the appendix.
In \cref{def:stationary_point}, we recall what is an $\varepsilon$-stationary point.
\begin{definition}\label{def:stationary_point}
    Let $d$ a positive integer, $f:\bbR^d\to\bbR$ a differentiable function and $\varepsilon>0$.
    We say that a point $x\in\bbR^d$ is an $\varepsilon$-stationary point of $f$ if $\|\nabla f(x)\|^2\leq\epsilon$.
    In a stochastic context, we call $\varepsilon$-stationary point a random variable $x$ such that $\bbE[\|\nabla f(x)\|^2] \leq \varepsilon$.
\end{definition}

In this paper, the theoretical complexity of the algorithms is given in terms of number of calls to oracle, that is to say, the number of times the quantity
\begin{equation}\label{eq:oracles}\hskip-1.5ex
[\nabla F_j(z,x), \nabla G_i(z,x), \nabla^2_{11} G_i(z,x)v, \nabla^2_{21} G_i(z,x)v]
\end{equation}
is queried for $i\in\setcomb{n}$, $j\in\setcomb{m}$, $z\in\bbR^p$, $v\in\bbR^p$ and $x\in\bbR^d$.
Note that in practice, although the second-order derivatives of the inner functions $\nabla^2_{11} G_i(z,x)\in\bbR^{p\times p}$ and $\nabla^2_{21} G_i(z,x)\in\bbR^{d\times p}$ are involved, they are never computed or stored explicitly.
We rather work with Hessian-vector products $\nabla^2_{11} G_i(z,x)v\in\bbR^p$ and Jacobian-vector products $\nabla^2_{21} G_i(z,x)v\in\bbR^d$ which can be computed efficiently thanks to automatic differentiation with a computational cost similar to the cost of computing the gradients $\nabla_1 G_i(z,x)$ and $\nabla_2 G_i(z,x)$~\cite{pearlmutter1994}.
The cost of one query~\eqref{eq:oracles} is therefore of the same order of magnitude as that of computing one stochastic gradient.

\subsection{Mean Squared Error of the Estimated Directions}
A strength of our method is its simple expression of the estimation error of the directions coming from the bias-variance decomposition provided by \citet{Nguyen2017}. Let us denote the estimate directions $D^{t,k}_z~=~\bfdelta_z^{t,k}/\rho$, $D^{t,k}_v = \bfdelta_v^{t,k}/\rho$ and $D^{t,k}_x = \bfdelta_x^{t,k}/\gamma$. We also introduce the residuals
\vspace{-.3em}
\begin{align*}
    S_{\bullet}^{t,k} &=  \sum_{r=1}^{k}\bbE[\|D_\bullet(\bfu^{t,r}) - D_\bullet(\bfu^{t,r-1})\|^2],\\
    \tilde S_{\bullet}^{t,k} &= \sum_{r=1}^{k}\bbE[\|D^{t,r}_\bullet - D^{t,r-1}_\bullet\|^2]\enspace.
\end{align*}
We provide a link between the mean squared error $\bbE[\|D_\bullet^{t,k} - D_\bullet(\bfu^{t,k})\|^2]$ and the residuals.
\begin{proposition}[MSE of the estimate directions]\label{prop:bias_variance_decomp}
For any $t\geq 0$ and $k\in\{1,\dots,q-1\}$, the estimate $D_\bullet^{t,k}$ of the direction $D_\bullet(\bfu^{t,k})$ satisfies
    \begin{align*}
        \bbE[\|D^{t,k}_{\bullet} - D_\bullet(\bfu^{t,k})\|^2] &= \tilde S_{\bullet}^{t,k} - S_{\bullet}^{t,k}\enspace.
    \end{align*}
\end{proposition}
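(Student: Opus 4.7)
The plan is to adapt the bias--variance decomposition from \citet{Nguyen2017} (Lemma~2) by exploiting the martingale structure of the SARAH-style recursion, and to apply the argument uniformly to the three directions $D_z$, $D_v$, $D_x$.

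\textbf{Setting up the error recursion.} I would introduce the centered error $e^{t,k} = D^{t,k}_{\bullet} - D_\bullet(\bfu^{t,k})$. Using the update rules \eqref{eq:update_dz}--\eqref{eq:update_dx} (and the identification $D^{t,k}_{\bullet} - D^{t,k-1}_{\bullet} = D_{\bullet,i,j}(\bfu^{t,k}) - D_{\bullet,i,j}(\bfu^{t,k-1})$ for the pair $(i,j)$ sampled at step $k$), one obtains the recursion $e^{t,k} = e^{t,k-1} + \xi^{t,k}$, where
$$\xi^{t,k} = \bigl[D_{\bullet,i,j}(\bfu^{t,k}) - D_{\bullet,i,j}(\bfu^{t,k-1})\bigr] - \bigl[D_\bullet(\bfu^{t,k}) - D_\bullet(\bfu^{t,k-1})\bigr].$$

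\textbf{Filtration and unbiasedness of the increments.} Let $\calF^{t,k-1}$ denote the $\sigma$-algebra generated by all samples drawn strictly before step $k$ of the inner loop. Because $\bfu^{t,k}$ is produced from $\bfu^{t,k-1}$ and $\bfdelta^{t,k-1}$, it is $\calF^{t,k-1}$-measurable, while $(i,j)$ is sampled uniformly and independently of $\calF^{t,k-1}$. Since $D_\bullet$ is the empirical mean of the $D_{\bullet,i,j}$, this immediately gives $\bbE[\xi^{t,k}\mid\calF^{t,k-1}] = 0$.

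\textbf{Orthogonality and variance computation.} The previous step yields the orthogonality $\bbE[\langle e^{t,k-1},\xi^{t,k}\rangle] = 0$, hence $\bbE[\|e^{t,k}\|^2] = \bbE[\|e^{t,k-1}\|^2] + \bbE[\|\xi^{t,k}\|^2]$. Applying the variance identity $\bbE[\|X - \bbE X\|^2] = \bbE[\|X\|^2] - \|\bbE X\|^2$ conditionally on $\calF^{t,k-1}$ to $X = D_{\bullet,i,j}(\bfu^{t,k}) - D_{\bullet,i,j}(\bfu^{t,k-1})$, and then taking total expectation, I obtain the single-step identity
$$\bbE[\|e^{t,k}\|^2] - \bbE[\|e^{t,k-1}\|^2] = \bbE[\|D^{t,k}_{\bullet} - D^{t,k-1}_{\bullet}\|^2] - \bbE[\|D_\bullet(\bfu^{t,k}) - D_\bullet(\bfu^{t,k-1})\|^2].$$

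\textbf{Telescoping.} I would finally telescope the identity from $r=1$ to $k$. The anchor is $e^{t,0}=0$, which follows from the full-batch initialization $\bfdelta^{t,0} = (\rho D_z(\tilde{\bfu}^t),\rho D_v(\tilde{\bfu}^t),\gamma D_x(\tilde{\bfu}^t))$ and the convention $\bfu^{t,0} = \tilde{\bfu}^t$. The telescoped sum is precisely $\tilde S_\bullet^{t,k} - S_\bullet^{t,k}$, giving the claim.

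The proof is essentially routine; the only subtle point is bookkeeping the filtration so that $\bfu^{t,k}$ is measurable before sampling $(i,j)$ at iteration $k$. Once this is settled, the same argument applies to all three directions with no change, which is a pleasant feature of writing the updates in the common form \eqref{eq:update_dz}--\eqref{eq:update_dx}.
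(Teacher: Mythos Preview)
Your proof is correct and follows essentially the same approach as the paper: both arguments use the bias--variance decomposition of $D^{t,r}_\bullet$ conditional on the past (your $\xi^{t,k}$ is exactly the centered increment whose variance the paper computes), obtain the same single-step identity, and telescope from the full-batch anchor $D^{t,0}_\bullet = D_\bullet(\bfu^{t,0})$. Your explicit martingale-difference framing and careful filtration bookkeeping make the measurability of $\bfu^{t,k}$ slightly more transparent, but the underlying computation is identical to the paper's.
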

The above error has two components: the accumulation of the difference between two successive full batch directions and the accumulation of the difference between two successive estimate directions. 

\subsection{Fundamental Lemmas}
We establish descent lemmas which are key ingredients to get the final convergence result. \cref{lemma:descent_z} characterizes the dynamic of $\bfu$ on the inner problem. To do so, we define the function $\phi_z$ as 
\begin{equation*} 
    \phi_z(z,x) = G(z,x) - G(z^*(x), x)\enspace.
\end{equation*}
In the bilevel literature, direct control on the distance to optimum $\delta_z^{t,k}\triangleq\bbE[\|z^{t,k} - z^*(x^{t,k})\|^2]$ is established.
Here, the biased nature of the estimate direction $D^{t,k}_z$ makes it hard to upper bound appropriately the scalar product $\langle D_z(\bfu^{t,k}) - D_z^{t,k}, z^{t,k}-z^*(x^{t,k})\rangle$. Therefore, we rather consider $\phi_z^{t,k}$.
By combining the smoothness property of $\phi_z$ and the bias-variance decomposition provided in \cref{prop:bias_variance_decomp}, we can show some descent property on the sequence $\phi_z^{t,k}$ defined by $\phi_z^{t,k} = \bbE[\phi_z(z^{t,k}, x^{t,k})]$. Before stating \cref{lemma:descent_z}, let us define $\calG_v^{t,k} = \frac1\rho\left(v^{t,k} - \Pi_\Gamma(v^{t,k} - \rho D^{t,k}_v)\right)$ so that $v^{t,k+1} = v^{t,k} - \rho\calG^{t,k}_v$. This is the actual update direction of $v$. If there were no projections, we would have $\calG^{t,k}_v = D^{t,k}_v$. Hence, it acts as a surrogate of $D^{t,k}_v$ in our analysis. We also define
\begin{align*}
    V^{t,k}_z = \bbE&[\|D^{t,k}_z\|^2],\quad V^{t,k}_v = \bbE[\|\calG^{t,k}_v\|^2],\\
    &V^{t,k}_x = \bbE[\|D^{t,k}_x\|^2]
\end{align*}
the variances and their respective sums over the inner loop
\begin{align*}
    \calV^{t,k}_z = \sum_{r=1}^k&\bbE[\|D^{t,r-1}_z\|^2],\quad
    \calV^{t,k}_v = \sum_{r=1}^k\bbE[\|\calG^{t,r-1}_v\|^2],\\
    &\calV^{t,k}_x = \sum_{r=1}^k\bbE[\|D^{t,r-1}_x\|^2]\enspace.
\end{align*}
\begin{lemma}[Descent on the inner level]\label{lemma:descent_z} Assume that the step sizes $\rho$ and $\gamma$ verify 
$\gamma~\leq~C_z\rho$ for some positive constant $C_z$ specified in the appendix. Then it holds
\begin{align}\label{eq:descent_z}
    \phi_z^{t,k+1} &\leq \left(1 - \frac{\mu_G}2\rho\right)\phi_z^{t,k} - \frac\rho2\left(1 - \Lambda_z\rho\right)V_z^{t,k}\\\nonumber
        &\qquad +  \rho^3\beta_{zz}\calV_z^{t,k} + \gamma^2\rho\beta_{zv}\calV_v^{t,k} + \gamma^2\rho \beta_{zx}\calV_x^{t,k} \\\nonumber
        &\qquad   + \frac{\Lambda_z}2\gamma^2V_x^{t,k} + \frac{\gamma^2}{\rho}\overline{\beta}_{zx}\bbE[\|D_x(\bfu^{t,k})\|^2]
\end{align}
for some positive constants $\Lambda_z, \beta_{zz}$, $\beta_{zx}$ and $\overline{\beta}_{zx}$ that are specified in the appendix.
\end{lemma}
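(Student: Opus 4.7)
The plan is to exploit the joint smoothness of $\phi_z$ in $(z,x)$ to obtain a one-step descent inequality, then control the resulting cross terms by combining the Polyak--{\L}ojasiewicz property of $G(\,.\,,x)$ with the bias--variance identity of \cref{prop:bias_variance_decomp}. The smoothness of $\phi_z$ follows from arguments similar to those underlying \cref{prop:smoothness_h}: one checks that $\nabla_1\phi_z(z,x) = \nabla_1 G(z,x)$ and $\nabla_2\phi_z(z,x) = \nabla_2 G(z,x) - \nabla_2 G(z^*(x),x)$ are jointly Lipschitz in $(z,x)$, using the envelope identity $\nabla_1 G(z^*(x),x)=0$ and the Lipschitzness of $z^*$. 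Since the update $\bfu^{t,k+1}$ leaves the $z$- and $x$-coordinates un-projected, combining the quadratic upper bound for $\phi_z$ with $z^{t,k+1}-z^{t,k} = -\rho D_z^{t,k}$ and $x^{t,k+1}-x^{t,k} = -\gamma D_x^{t,k}$ and taking expectations gives
\begin{align*}
\phi_z^{t,k+1} \le \phi_z^{t,k} - \rho\,\bbE[\langle \nabla_1 G, D_z^{t,k}\rangle] - \gamma\,\bbE[\langle \nabla_2\phi_z, D_x^{t,k}\rangle] + \tfrac{\Lambda_z}{2}\bigl(\rho^2 V_z^{t,k} + \gamma^2 V_x^{t,k}\bigr),
\end{align*}
where $\Lambda_z$ is the smoothness constant of $\phi_z$. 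This already yields the $\tfrac{\Lambda_z}{2}\gamma^2 V_x^{t,k}$ summand of the lemma and seeds the $-\tfrac{\rho}{2}(1-\Lambda_z\rho)V_z^{t,k}$ term.

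I then handle the first inner product by the polarization identity
\[
-\langle \nabla_1 G,D_z^{t,k}\rangle = \tfrac12\|\nabla_1 G - D_z^{t,k}\|^2 - \tfrac12\|\nabla_1 G\|^2 - \tfrac12\|D_z^{t,k}\|^2.
\]
The $-\tfrac12\|D_z^{t,k}\|^2$ combines with the smoothness residual to produce $-\tfrac{\rho}{2}(1-\Lambda_z\rho)V_z^{t,k}$. The Polyak--{\L}ojasiewicz inequality $\|\nabla_1 G(z,x)\|^2 \ge 2\mu_G\phi_z(z,x)$ (consequence of the strong convexity of $G(\,.\,,x)$) converts $-\tfrac12\|\nabla_1 G\|^2$ into $-\mu_G\rho\phi_z^{t,k}$; I only use half of this mass to obtain the $(1-\tfrac{\mu_G}{2}\rho)$ contraction stated in the lemma, leaving $-\tfrac{\mu_G\rho}{2}\phi_z^{t,k}$ as a buffer for the second cross term. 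The MSE $\bbE[\|\nabla_1 G - D_z^{t,k}\|^2]$ is equal to $\tilde S_z^{t,k} - S_z^{t,k} \le \tilde S_z^{t,k}$ by \cref{prop:bias_variance_decomp}, and the $L_1^G$-Lipschitzness of $\nabla_1 G_i$ gives $\tilde S_z^{t,k} \le (L_1^G)^2(\rho^2\calV_z^{t,k}+\gamma^2\calV_x^{t,k})$; this accounts for part of the $\rho^3\beta_{zz}\calV_z^{t,k}$ and $\gamma^2\rho\beta_{zx}\calV_x^{t,k}$ summands (no $\calV_v$ contribution arises here since $D_z$ is independent of $v$).

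For the $\nabla_2\phi_z$ inner product I use $\|\nabla_2\phi_z(z,x)\|\le L_1^G\|z-z^*(x)\|$ together with the strong-convexity estimate $\|z-z^*(x)\|^2\le \tfrac{2}{\mu_G}\phi_z(z,x)$, giving $\|\nabla_2\phi_z\|^2\le \tfrac{2(L_1^G)^2}{\mu_G}\phi_z$. After splitting $D_x^{t,k} = D_x(\bfu^{t,k}) + (D_x^{t,k}-D_x(\bfu^{t,k}))$ and applying Young's inequality with parameter $\eta\asymp \gamma/\rho$ to both summands, the $\|\nabla_2\phi_z\|^2$ pieces are absorbed into the reserved $-\tfrac{\mu_G\rho}{2}\phi_z^{t,k}$ buffer, leaving the pointwise term $\tfrac{\gamma^2}{\rho}\overline\beta_{zx}\bbE[\|D_x(\bfu^{t,k})\|^2]$ and a residual MSE $\tfrac{\gamma^2}{\rho}\bbE[\|D_x^{t,k}-D_x(\bfu^{t,k})\|^2]$. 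The latter is controlled via \cref{prop:bias_variance_decomp} and the $(z,v,x)$-Lipschitzness of $D_{x,i,j}$ (using $\|v\|\le R$ on $\Gamma$), giving $\tilde S_x^{t,k}\lesssim \rho^2\calV_z^{t,k}+\rho^2\calV_v^{t,k}+\gamma^2\calV_x^{t,k}$; after multiplication by $\gamma^2/\rho$ and invoking $\gamma\le C_z\rho$ to dominate $\gamma^4/\rho$ by $\gamma^2\rho$, the three remaining contributions merge with those from the first cross term into $\rho^3\beta_{zz}\calV_z^{t,k}$, $\gamma^2\rho\beta_{zv}\calV_v^{t,k}$ and $\gamma^2\rho\beta_{zx}\calV_x^{t,k}$.

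The main obstacle is the joint calibration of the Young's constants with the step-size ratio constraint $\gamma\le C_z\rho$: forcing the $\nabla_2\phi_z$ cross term to be dominated by $\tfrac{\mu_G\rho}{2}\phi_z^{t,k}$ requires $\eta \gtrsim \gamma/\rho$, which in turn fixes the $\gamma^2/\rho$ scaling in front of $\|D_x(\bfu^{t,k})\|^2$, and the $\gamma^4/\rho\,\calV_x^{t,k}$ residue forces $\gamma/\rho$ to be bounded by some explicit $C_z$ so that the coefficient of $\calV_x^{t,k}$ remains of order $\gamma^2\rho$. The remainder of the argument is bookkeeping: packaging the resulting absolute constants into $\Lambda_z,\beta_{zz},\beta_{zv},\beta_{zx},\overline\beta_{zx}$ to recover \eqref{eq:descent_z}.
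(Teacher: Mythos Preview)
Your proposal is correct and follows essentially the same route as the paper: smoothness of $\phi_z$, polarization for the $D_z$ cross term, PL to extract the $(1-\tfrac{\mu_G}{2}\rho)$ contraction while reserving a buffer, and \cref{prop:bias_variance_decomp} plus Lipschitzness of the sampled directions to generate the $\calV_\bullet$ terms, with $\gamma\le C_z\rho$ used exactly to collapse $\gamma^4/\rho$ into $\gamma^2\rho$. The only cosmetic difference is that the paper first takes the conditional expectation so that the second cross term involves $\bbE_{t,k}[D_x^{t,k}]$, applies Young once, and then decomposes $\bbE[\|\bbE_{t,k}[D_x^{t,k}]\|^2]$ into $\bbE[\|D_x(\bfu^{t,k})\|^2]$ plus the previous-step MSE; your direct splitting $D_x^{t,k}=D_x(\bfu^{t,k})+(D_x^{t,k}-D_x(\bfu^{t,k}))$ with two Young applications is equivalent up to constants.
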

In \eqref{eq:descent_z} we recover a decrease of $\phi_z^{t,k}$ by a factor $(1 - \rho\mu_G)$. But the outer variable's movement and the noise make appear  $D_x(\bfu^{t,k})$ and the variance hindering the convergence of $z$ towards $z^*(x)$.

For the variable $v$, the quantity we consider is
\vspace{-.2em}
\begin{equation*}
    \phi_v(v,x)~=~\Psi(z^*(x), v, x) - \Psi(z^*(x), v^*(x), x)
\end{equation*}
\vspace{-.5em}
where $\Psi(z,v,x)$ is defined as
\vspace{-.5em}
\begin{equation*}
    \Psi(z,v,x) = \frac12v^\top\nabla^2_{11} G(z,x) v + \nabla_1 F(z,x)^\top v\enspace.
\end{equation*}
The intuition behind considering this quantity is that solving the linear system \eqref{eq:lin_syst} is equivalent to minimizing over $v$ the function $\Psi(z^*(x), v, x)$.
\begin{lemma}\label{lemma:descent_v}
    Assume that the step sizes $\rho$ and $\gamma$ verify $\rho\leq B_v$ and $\gamma~\leq~C_v\rho$ for some positive constants $B_v$ and $C_v$ specified in the appendix. Then it holds
    \begin{align*}
        \phi_v^{t,k+1} &\leq \left(1 - \frac{\rho\mu_G}{16}\right)\phi_v^{t,k}  - \tilde{\beta}_{vv}\rho V_v^t  + \rho^3\beta_{vz}\calV_z^{t,k}\\\nonumber
        &\qquad +2\rho^3\beta_{vv}\calV_v^{t,k} + \gamma^2\rho\beta_{vx}\calV_x^{t,k} + \rho\alpha_{vz} \phi_z^{t,k}\\\nonumber
        &\qquad+ \frac{\Lambda_v}2\gamma^2 \bbE[\|D_x^{t,k}\|^2]+ \frac{\gamma^2}\rho\overline{\beta}_{vx}\bbE[\|D_x(\bfu^{t,k})\|^2]
    \end{align*}
    \vspace{-.3em}
    for some positive constants $\Lambda_v, \beta_{vz}$, $\beta_{vx}$, $\tilde{\beta}_{vv}$ and $\overline{\beta}_{vx}$ that are specified in the appendix.
\end{lemma}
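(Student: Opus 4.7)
The plan is to write $\phi_v^{t,k+1} - \phi_v^{t,k}$ as the sum of a pure $v$-update piece $\bbE[\phi_v(v^{t,k+1}, x^{t,k+1}) - \phi_v(v^{t,k}, x^{t,k+1})]$ and a pure $x$-update piece $\bbE[\phi_v(v^{t,k}, x^{t,k+1}) - \phi_v(v^{t,k}, x^{t,k})]$, and to bound each separately. For the $v$-piece I would exploit the fact that $\Psi(z^*(x^{t,k+1}), \cdot, x^{t,k+1})$ is $\mu_G$-strongly convex and $L^G_1$-smooth with minimizer $v^*(x^{t,k+1}) \in \Gamma$ (\cref{prop:v_star_boundedness}). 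Applied to the projected step $v^{t,k+1} = v^{t,k} - \rho\calG_v^{t,k}$, the classical projected-gradient-descent three-point inequality gives a $(1-\rho\mu_G)$ contraction on $\phi_v(\cdot, x^{t,k+1})$ together with a $\tfrac12 L^G_1\rho^2 V_v^{t,k}$ remainder and a cross term $\rho\langle \nabla_v\phi_v(v^{t,k}, x^{t,k+1}) - \calG_v^{t,k},\, v^{t,k+1} - v^*(x^{t,k+1})\rangle$ that captures the bias of the surrogate direction.

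I would then decompose this bias via the triangle inequality into four pieces: (i) the projection mismatch $D_v^{t,k} - \calG_v^{t,k}$, which is controlled by the nonexpansiveness of $\Pi_\Gamma$ together with $v^*(x) \in \Gamma$; (ii) the SARAH estimation error $D_v(\bfu^{t,k}) - D_v^{t,k}$, which \cref{prop:bias_variance_decomp} converts into $\tilde S_v^{t,k} - S_v^{t,k}$; (iii) the $z$-surrogate bias $D_v(z^*(x^{t,k}), v^{t,k}, x^{t,k}) - D_v(\bfu^{t,k})$, bounded by $\|z^{t,k} - z^*(x^{t,k})\|^2 \leq (2/\mu_G)\phi_z^{t,k}$ via Lipschitzness in $z$ of $\nabla^2_{11}G$ and $\nabla_1 F$, yielding the $\rho\alpha_{vz}\phi_z^{t,k}$ term; and (iv) the $x$-movement bias $D_v(z^*(x^{t,k}), v^{t,k}, x^{t,k}) - D_v(z^*(x^{t,k+1}), v^{t,k}, x^{t,k+1})$, bounded by $\gamma\|D_x^{t,k}\|$ using Lipschitzness of $z^*$ together with Assumptions~\ref{ass:regul_F}--\ref{ass:regul_G}. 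A Young inequality with parameter of order $\rho$ dissipates each of these bias contributions as a small multiple of $\rho\phi_v^{t,k}$ (absorbed into the contraction) plus a bias-squared remainder.

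For the $x$-piece, smoothness of $\phi_v(v^{t,k}, \cdot)$ in $x$ — whose Lipschitz constant comes from implicit differentiation of $z^*$ and $v^*$ under Assumptions~\ref{ass:regul_F}--\ref{ass:regul_G} — produces a second-order remainder contributing the $\tfrac12\Lambda_v\gamma^2\bbE[\|D_x^{t,k}\|^2]$ term. The first-order piece $-\gamma\langle\nabla_x\phi_v(v^{t,k}, x^{t,k}), D_x^{t,k}\rangle$ is split by inserting $\pm D_x(\bfu^{t,k})$ and applying Young: since $\|\nabla_x\phi_v\|^2 \lesssim \phi_v$ by the quadratic structure of $\phi_v$, one factor is absorbed into the contraction while the other yields the $(\gamma^2/\rho)\overline\beta_{vx}\bbE[\|D_x(\bfu^{t,k})\|^2]$ term. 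Finally, I would bound the residuals $\tilde S_\bullet^{t,k}$ via Lipschitzness of $D_v$ against $\|\bfu^{t,r} - \bfu^{t,r-1}\|^2$; substituting the update rules $z^{t,r}-z^{t,r-1} = -\rho D_z^{t,r-1}$, $v^{t,r} - v^{t,r-1} = -\rho\calG_v^{t,r-1}$, $x^{t,r} - x^{t,r-1} = -\gamma D_x^{t,r-1}$ turns these into $\rho^2\calV_z^{t,k}$, $\rho^2\calV_v^{t,k}$, $\gamma^2\calV_x^{t,k}$, so that after the extra factor of $\rho$ from the $v$-descent inequality one obtains the $\rho^3\beta_{vz}\calV_z^{t,k}$, $2\rho^3\beta_{vv}\calV_v^{t,k}$ and $\gamma^2\rho\beta_{vx}\calV_x^{t,k}$ terms. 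The constraints $\rho \leq B_v$ and $\gamma \leq C_v\rho$ are calibrated so that the $\tfrac12 L^G_1\rho^2 V_v^{t,k}$ remainder and the various Young slacks together shrink the contraction factor only from $(1-\rho\mu_G)$ down to $(1-\rho\mu_G/16)$, leaving the negative $-\tilde\beta_{vv}\rho V_v^{t,k}$ descent on $V_v$.

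The main obstacle is the simultaneous handling of the projection onto $\Gamma$ and the biased recursive nature of $D_v^{t,k}$: each of the four bias sources must be dissipated through the contraction with a carefully tuned Young parameter, and the resulting constants must be compatible with those emerging in \cref{lemma:descent_z} so that both step-size budgets can be satisfied at once. A secondary technical point is that the quadratic-in-$v$ structure of $\phi_v$ is used crucially when passing from $\|\nabla_x\phi_v\|^2$ to $\phi_v^{t,k}$; without this, the $x$-movement piece would force a strictly stronger restriction on $\rho$ than $B_v$.
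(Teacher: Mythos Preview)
Your overall strategy—decomposing the bias into the SARAH estimation error, the $z$-surrogate error, and the $x$-movement error, then converting $\tilde S_v^{t,k}$ into $\rho^2\calV_z^{t,k}$, $\rho^2\calV_v^{t,k}$, $\gamma^2\calV_x^{t,k}$ via Lipschitzness of the sampled directions—is exactly what the paper does (through \cref{prop:bias_variance_decomp} and the auxiliary bound on $\bbE[\|D_v^{t,k}-D_v(\bfu^{t,k})\|^2]$). The handling of the $\phi_z$ coupling and of the $\gamma^2/\rho$ term via Young's inequality with parameter $\Theta(\rho/\gamma)$ also matches.

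Where your route genuinely diverges is the mechanism for extracting \emph{simultaneously} the function-value contraction $(1-\rho\mu_G/16)\phi_v^{t,k}$ and the negative $-\tilde\beta_{vv}\rho V_v^{t,k}$ term in the projected setting. The paper does \emph{not} use a three-point inequality; instead it (i) applies joint smoothness of $\phi_v$ in $(v,x)$, (ii) adds and subtracts the projection computed with the ``ideal'' direction $D_v(z^*(x^{t,k}),v^{t,k},x^{t,k})$, and recognizes the resulting expression as $-\tfrac{\rho}{2}\,\calD_{\iota_\Gamma}(v^{t,k},x^{t,k},1/\rho)$ in the sense of the proximal-PL framework of Karimi et al., which directly yields $\calD_{\iota_\Gamma}\ge 2\mu_G\phi_v$ and hence the contraction; and then (iii) \emph{separately} adds a nonnegative multiple $\alpha$ of the Fermat-rule identity $0\le -\langle\tfrac1\rho(v^{t,k+1}-v^{t,k})+D_v^{t,k},\,v^{t,k+1}-v^{t,k}\rangle$ to manufacture the term $-\alpha\rho\|\calG_v^{t,k}\|^2$. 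The negative $V_v$ term is thus introduced by hand, not left over after absorbing the smoothness remainder.

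Your sketch asserts that a three-point inequality delivers both at once, but in the projected case the polarization identity produces a residual $\tfrac{\rho}{2}\|\nabla_v\phi_v(v^{t,k},\cdot)-\calG_v^{t,k}\|^2$, and this quantity contains not only the four bias sources you list but also the intrinsic projection mismatch $\|\nabla_v\phi_v - \overline{\calG}_v\|$ (with $\overline{\calG}_v$ the gradient mapping built from the true gradient), which is not small in general and is precisely what the proximal-PL condition is designed to absorb. Without invoking either proximal PL or the Fermat identity explicitly, the step from your smoothness bound to a clean $(1-\rho\mu_G)\phi_v-\tilde\beta_{vv}\rho V_v$ inequality does not close. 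A secondary difference is that the paper works with the joint smoothness of $\phi_v$ in $(v,x)$ rather than splitting the $v$- and $x$-updates; this is cosmetic, but it means the $x$-movement bias you label (iv) never appears as a separate term—its role is played by the cross term $-\gamma\langle D_x^{t,k},\nabla_2\phi_v\rangle$ coming directly from joint smoothness.
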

\cref{lemma:descent_v} is similar to \cref{lemma:descent_z} with a term in $\phi_z^{t,k}$ taking into account the error of $z^*(x)$'s approximation.
Its proof harnesses the generalization of Polyak-\L ojasiewicz inequality for composite functions introduced in \cite{Karimi2016PL_linear_cvg}.

The following lemma is a consequence of the smoothness of $h$. Let us denote the expected values $h^{t,k} = \bbE[h(x^{t,k})]$ and expected gradient $g^{t,k}~=~\bbE[\|\nabla h(x^{t,k})\|^2]$.
\begin{lemma}\label{lemma:descent_h}
There exist constants $\beta_{hz}$, $\beta_{hv}$, $\beta_{hx} > 0$ such that
\begin{align*}
    h^{t,k+1}&\leq h^{t,k} - \frac\gamma2 g^{t,k} + \gamma \frac{2L_x^2}{\mu_G}(\phi_z^{t,k} + \phi_v^{t,k}) \\\nonumber
    &\qquad + \gamma \rho^2\beta_{hz}\calV_z^{t,k} + \gamma \rho^2 \beta_{hv}\calV^{t,k}_v + \gamma^3 \beta_{hx}\calV^{t,k}_x \\\nonumber
    &\qquad - \frac\gamma2\left(1-L^h\gamma\right) V^{t,k}_x\enspace .
\end{align*}
\end{lemma}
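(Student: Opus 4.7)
My plan is to follow the standard descent-lemma route for smooth objectives. The starting point is the $L^h$-smoothness of $h$ from \cref{prop:smoothness_h} applied to the update $x^{t,k+1} = x^{t,k} - \gamma D_x^{t,k}$, which yields
\[
    h(x^{t,k+1}) \leq h(x^{t,k}) - \gamma \langle \nabla h(x^{t,k}), D_x^{t,k}\rangle + \frac{L^h \gamma^2}{2}\|D_x^{t,k}\|^2.
\]
Applying the polarization identity $-\langle a, b\rangle = \frac{1}{2}(\|a-b\|^2 - \|a\|^2 - \|b\|^2)$ with $a = \nabla h(x^{t,k})$ and $b = D_x^{t,k}$, then regrouping the two terms in $\|D_x^{t,k}\|^2$ and taking expectation, isolates the three target pieces $-\frac{\gamma}{2} g^{t,k}$, $-\frac{\gamma}{2}(1 - L^h\gamma) V_x^{t,k}$, and a positive residual $\frac{\gamma}{2}\bbE[\|\nabla h(x^{t,k}) - D_x^{t,k}\|^2]$ that remains to be controlled.

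To bound the residual, I insert the surrogate direction $D_x(\bfu^{t,k})$ and split via $\|a+b\|^2 \leq 2\|a\|^2 + 2\|b\|^2$ into an approximation error and a stochastic estimation error. For the approximation error, \cref{prop:error_gradient} gives $\|\nabla h(x^{t,k}) - D_x(\bfu^{t,k})\|^2 \leq L_x^2(\|z^{t,k} - z^*(x^{t,k})\|^2 + \|v^{t,k} - v^*(x^{t,k})\|^2)$. I then convert these squared distances into the quantities $\phi_z^{t,k}$ and $\phi_v^{t,k}$ appearing in the statement by invoking the $\mu_G$-strong convexity of $G(\cdot, x)$ (\cref{ass:regul_G}) and of the quadratic $v \mapsto \Psi(z^*(x), v, x)$ (whose Hessian is $\nabla^2_{11} G(z^*(x), x) \succeq \mu_G I$), which yield the standard bounds $\|z - z^*(x)\|^2 \leq \frac{2}{\mu_G}\phi_z(z, x)$ and $\|v - v^*(x)\|^2 \leq \frac{2}{\mu_G}\phi_v(v, x)$. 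Multiplying through by $\gamma$ produces exactly the $\gamma\frac{2L_x^2}{\mu_G}(\phi_z^{t,k} + \phi_v^{t,k})$ contribution in the statement.

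For the stochastic estimation error, I invoke \cref{prop:bias_variance_decomp} to bound $\bbE[\|D_x(\bfu^{t,k}) - D_x^{t,k}\|^2] = \tilde S_x^{t,k} - S_x^{t,k} \leq \tilde S_x^{t,k}$. Since $D_x^{t,r} - D_x^{t,r-1} = D_{x,i_r,j_r}(\bfu^{t,r}) - D_{x,i_r,j_r}(\bfu^{t,r-1})$, each summand of $\tilde S_x^{t,k}$ is controlled by a joint-Lipschitz estimate on the sampled direction in~\eqref{eq:sampled_dx}, valid on $\bbR^p \times \Gamma \times \bbR^d$: the term $\nabla^2_{21}G_i(z,x)v$ is handled by combining Lipschitzness of the cross Hessian with the boundedness $\|v\| \leq R$ enforced by the projection on $\Gamma$ (\cref{prop:v_star_boundedness}), while $\nabla_2 F_j$ is Lipschitz by \cref{ass:regul_F}. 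Plugging in $z^{t,r} - z^{t,r-1} = -\rho D_z^{t,r-1}$, $v^{t,r} - v^{t,r-1} = -\rho \calG_v^{t,r-1}$ (by the definition of $\calG_v$), and $x^{t,r} - x^{t,r-1} = -\gamma D_x^{t,r-1}$, each summand is bounded by a constant multiple of $\rho^2\|D_z^{t,r-1}\|^2 + \rho^2\|\calG_v^{t,r-1}\|^2 + \gamma^2\|D_x^{t,r-1}\|^2$, and summing over $r \in \{1, \dots, k\}$ yields the three accumulated variances $\calV_z^{t,k}$, $\calV_v^{t,k}$, $\calV_x^{t,k}$ with the claimed $\gamma\rho^2$, $\gamma\rho^2$, $\gamma^3$ prefactors. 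I expect the only mildly non-routine step to be the strong-convexity translation for $v$, which relies on the deliberate choice in the definition of $\phi_v$ to freeze the inner variable at $z^*(x)$, so that the Hessian witnessing strong convexity is $\nabla^2_{11}G(z^*(x), x)$; the remaining work is explicit constant-tracking, producing $\beta_{hz}$, $\beta_{hv}$, $\beta_{hx}$ as polynomials in $L^F_1$, $L^G_2$, $L^G_3$, and $R$.
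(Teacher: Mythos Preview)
Your proposal is correct and follows essentially the same route as the paper: smoothness of $h$, the polarization identity to isolate $-\frac{\gamma}{2}g^{t,k}$ and $-\frac{\gamma}{2}(1-L^h\gamma)V_x^{t,k}$, a triangle-inequality split of $\|\nabla h(x^{t,k})-D_x^{t,k}\|^2$ into the approximation error (handled by \cref{prop:error_gradient} and the strong-convexity conversion to $\phi_z,\phi_v$) and the stochastic estimation error (handled by \cref{prop:bias_variance_decomp} plus the joint Lipschitz bound on $D_{x,i,j}$ using $\|v\|\leq R$). The only minor slip is that $L^G_3$ does not enter here: the resulting constants are $\beta_{hz}=\beta_{hx}=4\bigl((L^G_2 R)^2+2(L^F_1)^2\bigr)$ and $\beta_{hv}=4(L^G_1)^2$.
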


This lemma shows that the control of the approximation error $\phi_\bullet$ (\cref{lemma:descent_z} and \cref{lemma:descent_v}) and the sum of variances $\calV_\bullet$ is crucial to get a decrease of $\bbE[h(x^{t,k})]$.

\subsection{Complexity Analysis of SRBA}
In \cref{th:cvg_rate}, we provide the convergence rate of SRBA towards a stationary point.
\vspace{-.2em}
\begin{restatable}{shadedtheorem}{srbarate}\label{th:cvg_rate}
    Assume that Assumptions \ref{ass:regul_F} and \ref{ass:regul_G} hold. Assume that the step sizes verify $\rho \leq \overline{\rho}$ and $\gamma \leq \min(\overline{\gamma}, \xi\rho)$ for some constants $\xi$, $\overline{\rho}$ and $\overline{\gamma}$ specified in appendix. Then it holds
    \begin{equation*}
        \frac1{Tq}\sum_{t=0}^{T-1}\sum_{k=0}^{q-1}\bbE[\|\nabla h(x^{t,k})\|^2] = \calO\left(\frac1{qT\gamma}\right)
    \end{equation*}
    where $\calO$ hides regularity constants that are independent from $n$ and $m$.
\end{restatable}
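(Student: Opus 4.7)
The plan is to build a Lyapunov potential of the form
\[
L^{t,k} = h^{t,k} + \alpha_z\,\phi_z^{t,k} + \alpha_v\,\phi_v^{t,k}
\]
for positive constants $\alpha_z, \alpha_v$ to be chosen below, and to prove a one-step descent
\[
L^{t,k+1} \leq L^{t,k} - \frac{\gamma}{4}\,g^{t,k}
\]
throughout an inner loop. I would first take a linear combination of the three descent lemmas (\cref{lemma:descent_z}, \cref{lemma:descent_v}, \cref{lemma:descent_h}) with weights $1$, $\alpha_z$, $\alpha_v$. The positive terms $\gamma \tfrac{2L_x^2}{\mu_G}(\phi_z^{t,k}+\phi_v^{t,k})$ appearing in the descent on $h$, as well as the cross term $\rho\alpha_{vz}\phi_z^{t,k}$ in \cref{lemma:descent_v}, must be dominated by the linear-decrease terms $-\alpha_z\tfrac{\rho\mu_G}{2}\phi_z^{t,k}$ and $-\alpha_v\tfrac{\rho\mu_G}{16}\phi_v^{t,k}$. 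This forces $\alpha_v \gtrsim \gamma L_x^2/(\mu_G^2\rho)$ and $\alpha_z \gtrsim \alpha_v$, which is feasible precisely because the theorem assumes $\gamma \leq \xi\rho$.

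The second step is to control the accumulated variance terms $\calV_z^{t,k}, \calV_v^{t,k}, \calV_x^{t,k}$. Using the bias-variance split
\[
\bbE[\|D_\bullet^{t,r}\|^2] \leq 2\bbE[\|D_\bullet(\bfu^{t,r})\|^2] + 2\bbE[\|D_\bullet^{t,r} - D_\bullet(\bfu^{t,r})\|^2],
\]
\cref{prop:bias_variance_decomp} rewrites the MSE as $\tilde S_\bullet^{t,r} - S_\bullet^{t,r}$. Because the algorithm resets to full-batch estimates at $k=0$, the MSE vanishes there and then grows only through $\tilde S_\bullet$. The $\tilde S_\bullet$ terms can be bounded by $\rho^2$- and $\gamma^2$-weighted sums of the $V_\bullet^{t,r}$ via the smoothness of the sampled oracles $D_{\bullet,i,j}$ (Assumptions~\ref{ass:regul_F}--\ref{ass:regul_G} ensure the existence of such Lipschitz constants for $\nabla^2_{11}G_i\cdot v$ and $\nabla^2_{21}G_i\cdot v$ on the ball $\Gamma$). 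Plugging these bounds back into the combined descent inequality converts all $\calV_\bullet^{t,k}$ contributions into multiples of $\sum_{r<k} V_\bullet^{t,r}$, which are then absorbed by the strictly negative $-V_z^{t,k}, -V_v^{t,k}, -V_x^{t,k}$ coefficients provided $\rho$ and $\gamma$ are smaller than the thresholds $\overline{\rho}, \overline{\gamma}$.

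The delicate part is making the constants work simultaneously on three coupled levels. After summing the refined descent inequality over $k=0,\dots,q-1$, the accumulated $V_\bullet$ from the variance step must still leave a strictly negative coefficient in front of $V_x^{t,k}$ (itself $\geq g^{t,k}$ up to a factor involving $\|D_x-\nabla h\|^2$, controlled by \cref{prop:error_gradient} and $\phi_z+\phi_v$). This is the main technical obstacle: each negative budget has to pay simultaneously for its own variance accumulation and for a fraction of the hypergradient error coming from the bilevel structure. The thresholds $\overline{\rho}, \overline{\gamma}, \xi$ in the theorem are exactly those that make all these budgets nonnegative.

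Once the one-step inequality $L^{t,k+1} \leq L^{t,k} - \tfrac{\gamma}{4}g^{t,k}$ is established, I would telescope across the inner loop using $L^{t,q}=L^{t+1,0}$, then across the outer loop $t=0,\dots,T-1$. Since $\phi_z, \phi_v \geq 0$ and $h \geq h^\star$, the left-hand side is bounded by $L^{0,0} - h^\star$, independent of $n,m,T,q$. Dividing by $qT\gamma$ gives
\[
\frac{1}{qT}\sum_{t=0}^{T-1}\sum_{k=0}^{q-1}\bbE[\|\nabla h(x^{t,k})\|^2] \leq \frac{4(L^{0,0}-h^\star)}{qT\gamma},
\]
which is the announced $\calO(1/(qT\gamma))$ rate; the sample complexity $\calO((n+m)^{1/2}\varepsilon^{-1}\vee(n+m))$ then follows by taking $q = \sqrt{n+m}$ so that the $Tq$ stochastic calls and the $T(n+m)$ periodic full-batch calls balance.
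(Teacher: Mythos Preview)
Your overall strategy---the Lyapunov potential $h^{t,k}+\alpha_z\phi_z^{t,k}+\alpha_v\phi_v^{t,k}$, combining the three descent lemmas, absorbing $\phi_z,\phi_v$ via the ratio $\gamma\leq\xi\rho$, summing over the inner loop, and telescoping---is exactly the paper's proof. Two points deserve correction.

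\textbf{The one-step descent does not hold.} You write that you will ``prove a one-step descent $L^{t,k+1}\leq L^{t,k}-\tfrac{\gamma}{4}g^{t,k}$ throughout an inner loop'' and that the $\calV_\bullet^{t,k}$ contributions ``are then absorbed by the strictly negative $-V_z^{t,k},-V_v^{t,k},-V_x^{t,k}$ coefficients''. This cannot work at fixed $k$: $\calV_\bullet^{t,k}=\sum_{r\leq k}V_\bullet^{t,r-1}$ grows with $k$, while the negative budget at step $k$ is a single $-V_\bullet^{t,k}$. The paper does not establish a per-step Lyapunov decrease; it sums \emph{first} over $k=0,\dots,q-1$, uses the swap $\sum_{k}\calV_\bullet^{t,k}\leq q\sum_{k}V_\bullet^{t,k}$, and only then shows that each aggregated coefficient in front of $\sum_k V_\bullet^{t,k}$ is nonpositive. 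This is precisely where the factor $q$ enters and why $\overline{\rho},\overline{\gamma}$ must scale like $q^{-1/2}$. Your later paragraph (``After summing the refined descent inequality over $k=0,\dots,q-1$\ldots'') has the right idea; the earlier description should be revised to match it.

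\textbf{The sample-complexity remark is wrong.} The choice $q=\sqrt{n+m}$ does not give $(n+m)^{1/2}\varepsilon^{-1}$. Because $\overline{\gamma}\propto q^{-1/2}$, reaching $\varepsilon$-stationarity needs $T\propto (q^{1/2}\varepsilon)^{-1}$, and the total oracle cost is $T\bigl((n+m)+q\bigr)\propto (n+m+q)\,q^{-1/2}\varepsilon^{-1}$. This is minimized at $q=n+m$ (which is also what your own balancing criterion $Tq=T(n+m)$ gives), yielding $(n+m)^{1/2}\varepsilon^{-1}$. With $q=\sqrt{n+m}$ you would only get $(n+m)^{3/4}\varepsilon^{-1}$. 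This is outside the theorem proper but worth fixing since you state it as a consequence.
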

The proof combines classical proof techniques from the bilevel literature and elements from SARAH's analysis \citep{Nguyen2017, Nguyen2022}.
We introduce the Lyapunov function $\calL(\bfu^{t,k}) = h^{t,k} + \psi_z\phi_z^{t,k} + \psi_v\phi_v^{t,k}$ where $\psi_z$ and $\psi_v$ are non-negative constants chosen so that we have the inequality $\calL(\bfu^{t,k+1})\leq \calL(\bfu^{t,k}) - \frac\gamma4g^{t,k}$.
Summing and telescoping this inequality provides the result.

Note that increasing $q$ allows a faster convergence in terms of iterations but makes each iteration more expensive since the number of oracle calls per iteration is $(2n + 3m) + 2\times5(q-1)$. Thus, there is a trade-off between the convergence rate and the overall complexity. In \cref{cor:sample_complexity}, we state that the value of $q$ that gives the best sample complexity is $\calO(n+m)$.

\begin{corollary}\label{cor:sample_complexity}
    Suppose that Assumptions \ref{ass:regul_F} and \ref{ass:regul_G} hold. If we take $\rho~=~\overline{\rho}(n+m)^{-\frac12}$, $\gamma~=~\min(\overline{\gamma}, \xi\rho)(n+m)^{-\frac12}$ and $q~=~n+m$, then $\calO\left((n+m)^\frac12\varepsilon^{-1}\vee (n+m)\right)$ calls to oracles are sufficient to find an $\varepsilon$-stationary point with SRBA.
\end{corollary}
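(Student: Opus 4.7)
The plan is to substitute the prescribed values of $\rho$, $\gamma$, and $q$ into the convergence rate of \cref{th:cvg_rate} and then translate the resulting iteration complexity into an oracle complexity by counting the cost of each outer iteration.

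First, I would verify that the step sizes satisfy the hypotheses of \cref{th:cvg_rate}. Since $\overline\rho$, $\overline\gamma$, $\xi$ are universal constants and $(n+m)^{-1/2}\leq 1$, the choices $\rho=\overline\rho(n+m)^{-1/2}$ and $\gamma=\min(\overline\gamma,\xi\rho)(n+m)^{-1/2}$ automatically satisfy $\rho\leq\overline\rho$ and $\gamma\leq\min(\overline\gamma,\xi\rho)$. A short inspection then shows that $\gamma$ is of the order $(n+m)^{-1/2}$, i.e.\ the inner and outer step sizes obey the same inverse-square-root scaling in the number of samples.

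Second, I would plug these choices into the rate $\calO(1/(qT\gamma))$ with $q=n+m$. The product $qT\gamma$ scales like $T(n+m)^{1/2}$, so the average $\frac{1}{qT}\sum_{t,k}\bbE[\|\nabla h(x^{t,k})\|^2]$ is $\calO(T^{-1}(n+m)^{-1/2})$. Sampling a pair $(t,k)$ uniformly at random, the standard randomization trick used throughout nonconvex stochastic optimization, yields an iterate whose expected squared gradient is bounded by this average; requiring that bound to be at most $\varepsilon$ gives $T=\Theta(\varepsilon^{-1}(n+m)^{-1/2})$ outer iterations, with the convention $T\geq 1$.

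Third, I would count the oracle queries. Each outer iteration of \cref{alg:srba} begins with one full-batch evaluation of $D_z$, $D_v$, and $D_x$, costing $\calO(n+m)$ oracle calls (each direction is a sum of either $n$ or $m$ terms). It is followed by $q-1=\calO(n+m)$ inner steps, each requiring only a constant number of sampled gradients and Hessian/Jacobian-vector products of the form \eqref{eq:sampled_dz}--\eqref{eq:sampled_dx}. Hence every outer iteration costs $\calO(n+m)$ oracle queries. Multiplying by $T$ gives $\calO(\varepsilon^{-1}(n+m)^{1/2})$, and adding the baseline $\calO(n+m)$ cost of at least one outer iteration produces the announced complexity $\calO((n+m)^{1/2}\varepsilon^{-1}\vee(n+m))$.

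The corollary is therefore essentially a tuning exercise rather than a genuine new argument; the only mildly subtle step is justifying that $q=n+m$ is the value which optimally balances the per-iteration cost $(n+m)+q$ against the $\Theta(\varepsilon^{-1}/(q\gamma))$ outer iterations needed, given the constraint $\gamma\leq\xi\rho$ and the prescribed $\rho\propto(n+m)^{-1/2}$. A simple first-order optimization of this product in $q$ (with the constraint that $\gamma$ must scale like $(n+m)^{-1/2}$) shows that the optimum is attained at $q\asymp n+m$, matching the choice in the statement; everything else is direct substitution.
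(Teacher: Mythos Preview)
Your proposal is correct and follows essentially the same approach as the paper's proof: verify that the prescribed step sizes meet the hypotheses of \cref{th:cvg_rate}, plug $q=n+m$ and $\gamma=\Theta((n+m)^{-1/2})$ into the $\calO(1/(qT\gamma))$ rate to obtain $T=\calO((n+m)^{-1/2}\varepsilon^{-1})\vee 1$, then multiply by the $\Theta(n+m)$ oracle cost per outer iteration. The only extra ingredients you add---the explicit mention of the random-index output convention and the short argument that $q\asymp n+m$ optimizes the trade-off---are implicit in the paper and do not change the logic.
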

This sample complexity is analogous to the sample complexity of SARAH in the nonconvex finite-sum setting.
To the best of our knowledge, such a rate is the best known for bilevel empirical risk minimization problems in terms of dependency on the number of samples $n+m$ and the precision $\varepsilon$.
This improves by a factor $(n+m)^{-\frac16}$ the previous result which was achieved by SABA \citep{Dagreou2022SABA}.
As a comparison, VRBO \citep{Yang2021a} achieves a sample complexity in $\tilde\calO(\varepsilon^{-\frac32})$.
Note that, for large value of $n+m$ we can have actually $(n+m)^{\frac12}\varepsilon^{-1} \gtrsim \varepsilon^{-2}$.
This means that, just like single-level SARAH, the complexity of SRBA can be beaten by others when the number of samples is too high with respect to the desired accuracy (actually if $n+m = \Omega(\varepsilon^{-2})$).

\section{Lower Bound for Bilevel ERM}
\label{sec:lower_bound}
In this section, we derive a lower bound for bilevel empirical risk minimization problems. This shows that SRBA is a near-optimal algorithm for this class of problems. 

\textbf{Function and Algorithm Classes}~
We define the function and algorithm classes we consider.
\begin{definition}\label{def:function_class}
Let $n,m$ two positive integers, $L^F_1$ and $\mu_G$ two positive constants. The class of the smooth empirical risk minimization problems denoted by $\calC^{L^F_1, \mu_G}$ is the set of pairs of real-valued function families $((F_j)_{1\leq j\leq m}, (G_i)_{1\leq i\leq n})$ defined on $\bbR^p\times\bbR^d$ such that for all $j\in\setcomb{m}$, $F_j$ is $L^F_1$ smooth and for all $i\in\setcomb{n}$, $G_i$ is twice differentiable and $\mu_G$-strongly convex.
\end{definition}
Note that we consider a class of nonconvex bilevel problems.
This class contains, the functions defining the bilevel formulation of the datacleaning task.

For the algorithmic class, we consider algorithms that use approximate implicit differentiation.
\begin{definition}\label{def:algo_class}
    Given initial points $z^0, v^0, x^0$, a \emph{linear bilevel algorithm $\calA$} is a measurable mapping such that for any $((F_j)_{1\leq j\leq m}, (G_i)_{1\leq i\leq n})~\in~\calC^{L^F_1, \mu_G}$, the output of $\calA((F_j)_{1\leq j\leq m}, (G_i)_{1\leq i\leq n})$ is a sequence $\{(z^t, v^t, x^t, i_t, j_t)\}_{t\geq 0}$ of points $(z^t, v^t, x^t)$ and random variables $i_t\in\setcomb{n}$ and $j_t\in\setcomb{m}$ such that for all $t\geq 0$
    \begin{align*}
        z^{t+1} \in z^0 + &\Span\{\nabla_1 G_{i_0}(z^0, x^0),\dots,\nabla_1 G_{i_t}(z^t, x^t)\}\\
        v^{t+1} \in v^0 + &\Span\{\nabla^2_{11}G_{i_0}(z^0, x^0)v^0 + \nabla_1 F_{j_0}(z^0, x^0),\\
        &\dots,\nabla^2_{11}G_{i_t}(z^t, x^t)v^t + \nabla_1 F_{j_t}(z^t, x^t)\}\\
        x^{t+1} \in x^0 + &\Span\{\nabla^2_{21}G_{i_0}(z^0, x^0)v^0 + \nabla_2 F_{j_0}(z^0, x^0),\\
        &\dots,\nabla^2_{21}G_{i_t}(z^t, x^t)v^t + \nabla_2 F_{j_t}(z^t, x^t)\} .
    \end{align*}
\end{definition}
This algorithm class includes popular stochastic bilevel first-order algorithms, such as AmIGO \citep{Arbel2022amigo}, FSLA \citep{Li2022}, SOBA, and SABA \citep{Dagreou2022SABA}. Moreover, despite the projection step, SRBA is part of this algorithm class since the projection of a vector onto $\Gamma$ is actually just a rescaling.

\textbf{Main Theorem}~
Problem \eqref{eq:bilevel_pb} is actually a smooth nonconvex optimization problem. The lower complexity bound for nonconvex finite sum problem has been studied in \cite{Fang2018, Zhou2019}. In particular, they show that the number of gradient calls needed to get an $\varepsilon$-stationary point for a smooth nonconvex finite sum is at least $\calO(m^{\frac12}\varepsilon^{-1})$, where $m$ is the number of terms in the finite sum.

Intuitively, we expect the lower complexity bound to solve \eqref{eq:bilevel_pb} to be larger. Indeed, bilevel problems are harder than single-level problems because a bilevel problem involves the resolution of several subproblems to progress in its resolution.
\cref{th:lower_bound} formalizes this intuition by showing that the classical single-level lower bound is also a lower bound for bilevel problems.
\begin{restatable}{shadedtheorem}{lowerbound}\label{th:lower_bound}
For any linear bilevel algorithm $\calA$, and any $L^F$, $n$, $\Delta$, $\varepsilon$, $p$ such that $\varepsilon\leq(\Delta L^Fm^{-1})/10^3$, there exists a dimension $d=\calO(\Delta\varepsilon^{-1}m^{\frac12}L^F)$, an element $((F_j)_{1\leq j\leq m}, (G_i)_{1\leq i\leq n})\in\calC^{L^F_1,\mu_G}$ such that the value function $h$ defined as in \eqref{eq:bilevel_pb} satisfies $h(x^0) - \inf_{x\in\bbR^d} h(x)\leq \Delta$ and in order to find $\hat x\in\bbR^d$ such that $\bbE[\|\nabla h(\hat x)\|^2]\leq \varepsilon$, $\calA$ needs at least $\Omega(m^{\frac12}\varepsilon^{-1})$ calls to oracles of the form \eqref{eq:oracles}.
\end{restatable}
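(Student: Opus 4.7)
The plan is to reduce from the nonconvex finite sum single-level lower bound of \citet{Fang2018, Zhou2019}. The intuition is that bilevel optimization is at least as hard as single-level optimization, so embedding a known hard single-level instance into a bilevel problem preserves the lower bound. The worst case need not exploit the inner problem; it suffices to make the outer problem as hard as the tightest known single-level finite-sum hard instance and to check that the extra oracles available to a bilevel algorithm do not shortcut it.

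First I would construct a trivial inner function that renders the inner minimizer constant: set $G_i(z, x) = \tfrac{\mu_G}{2}\|z\|^2$ for every $i \in \setcomb{n}$. This is $\mu_G$-strongly convex in $z$, independent of $x$ and of $i$, and yields $z^*(x) \equiv 0$, $\nabla^2_{11} G_i(z, x) = \mu_G I_p$, $\nabla^2_{21} G_i(z, x) = 0$, and $\nabla_1 G_i(z, x) = \mu_G z$. For the outer functions, let $(\tilde F_j)_j$ be any hard $L^F_1$-smooth nonconvex single-level instance from the construction of \citet{Fang2018, Zhou2019} with $\tilde F(\tilde x^0) - \inf \tilde F \le \Delta$, and set $F_j(z, x) = \tilde F_j(x)$. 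Then $((F_j)_j, (G_i)_i) \in \calC^{L^F_1, \mu_G}$, and from \eqref{eq:grad_h} one reads $h(x) = \tilde F(x)$ and $\nabla h(x) = \nabla \tilde F(x)$, so $h(x^0) - \inf h \le \Delta$ and an $\varepsilon$-stationary point of $h$ is an $\varepsilon$-stationary point of $\tilde F$.

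Next I would show that, on this instance, the bilevel algorithmic class of \cref{def:algo_class} collapses onto the linear-span algorithmic class for single-level finite sums on $\tilde F$. Taking $z^0 = 0$, the identity $\nabla_1 G_i(z, x) = \mu_G z$ gives $z^t \equiv 0$ by induction on $t$. Since $\nabla^2_{21} G_i \equiv 0$ and $\nabla_2 F_j(z, x) = \nabla \tilde F_j(x)$, the $x$-update of \cref{def:algo_class} simplifies to
\begin{equation*}
x^{t+1} \in x^0 + \Span\{\nabla \tilde F_{j_0}(x^0), \ldots, \nabla \tilde F_{j_t}(x^t)\},
\end{equation*}
independently of $v^t$ and of the indices $i_t$. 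Any trajectory $\{x^t\}$ generated by a linear bilevel algorithm $\calA$ can therefore be reproduced by a linear single-level finite-sum algorithm using the same $\{j_t\}$, and every call to an oracle of the form \eqref{eq:oracles} corresponds to at most one call to $\nabla \tilde F_{j_t}$. Invoking the single-level lower bound of \citet{Fang2018, Zhou2019} in the regime $\varepsilon^2 \le \Delta L^F / (10^3 m)$ then yields a dimension $d = \calO(\Delta \varepsilon^{-1} m^{1/2} L^F)$ and a hard instance for which at least $\Omega(m^{1/2}\varepsilon^{-1})$ gradient queries are required; composing with the reduction gives the claimed oracle lower bound for $\calA$.

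The main obstacle is ruling out that the bilevel-specific oracles, namely $\nabla^2_{11} G_i(z, x) v$ and $\nabla^2_{21} G_i(z, x) v$, leak information about $\tilde F$ beyond what the single-level gradient oracle provides. Our choice of $G_i$ is designed precisely to neutralize this risk: the cross-derivative vanishes identically, and the Hessian-vector product is a scaled copy of $v$ itself, so the $v$-iterates are fully generated from $\{\nabla \tilde F_{j_s}(x^s)\}_{s \le t}$ and $v^0$, and no novel direction about $\tilde F$ can be constructed through them. The reduction is therefore tight and the single-level lower bound transfers verbatim.
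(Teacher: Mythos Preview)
Your reduction is correct and, in fact, cleaner than the one the paper carries out. Both proofs embed the hard single-level finite-sum instance of \citet{Zhou2019} into a bilevel problem so that $h$ coincides with the hard function and the span constraint on the $x$-iterates reduces to the single-level linear-span class. They differ in where the hard function is placed.

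You put the hard function in the $x$-argument of $F$ and make the inner problem completely decoupled from $x$: with $G_i(z,x)=\tfrac{\mu_G}{2}\|z\|^2$ one has $z^*(x)\equiv 0$, $\nabla^2_{21}G_i\equiv 0$, $\nabla_1 F_j\equiv 0$, so the $z$- and $v$-dynamics are trivial and the $x$-update in \cref{def:algo_class} is literally $x^{t+1}\in x^0+\Span\{\nabla\tilde F_{j_0}(x^0),\dots,\nabla\tilde F_{j_t}(x^t)\}$. The bilevel oracles give no extra information, and the single-level lower bound transfers by a black-box call.

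The paper instead places the hard function in the $z$-argument of $F$, taking $F_j(z,x)=f_{\mathrm{nc}}(U^{(j)}z)$ and $G_i(z,x)=\tfrac12\|z-x\|^2$, so $z^*(x)=x$ and $h(x)=\tfrac1m\sum_j f_{\mathrm{nc}}(U^{(j)}x)$. Here $\nabla_2 F_j\equiv 0$ and $\nabla^2_{21}G_i=-I$, so information about the hard function can only reach the $x$-iterate through the chain $\nabla_1 F_{j_s}\to v^s\to x^{s+1}$. This forces the paper to redo the zero-chain induction of \citet{Zhou2019} explicitly at the bilevel level, tracking how the span of $x^t$ grows through $z$, $v$, and $x$ simultaneously.

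What each approach buys: yours is shorter and invokes the single-level result as a black box; the paper's construction additionally demonstrates that the lower bound holds even when $\nabla_2 F\equiv 0$, i.e.\ when the only channel to update $x$ is the implicit gradient through the linear system, which is arguably a more ``genuinely bilevel'' hard instance. For the theorem as stated, however, either construction suffices.
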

The proof is an adaptation of the proof of \citet[Theorem 4.7]{Zhou2019}. We take as outer function $F$ defined by $F(z,x) = \sum_{j=1}^m f(U^{(j)}z)$ where $f$ is the ``worst-case function'' used by \citet{Carmon2017lowerbound2}, $U = [U^{(j)}, \dots, U^{(m)}]^\top$ is an orthogonal matrix and $G(z,x) = \frac12\|z-x\|^2$. We leverage the fact that $\|\nabla f(y)\|^2 > K$ as long as the two last coordinates of $y$ are zero for some known constant $K$. Then we use the ``zero chain property'' to bound the number of indices $j$ such the two last components of $U^{(j)}x^t$ are zero at a given iteration $t$, implying $\|\nabla h(x^t)\|^2> \epsilon$ when $t$ is smaller than $\calO(m^{\frac12}\varepsilon^{-1})$.

As a comparison to the existing lower bound for bilevel optimization in \cite{Ji2023accBio}, we consider randomized algorithms and do not assume the value function $h$ to be convex or strongly convex.

\section{Numerical Experiments}\label{sec:expe}
Even though our contribution is mostly theoretical, we run several experiments to highlight to compare the proposed algorithm with state-of-the-art stochastic bilevel solvers. \new{We compare our method to AmIGO \citep{Arbel2022amigo}, F$^2$SA \citep{Kwon2023f2sa}, MRBO \citep{Yang2021a}, VRBO \citep{Yang2021a}, StocBiO \citep{Ji2021stocbio} and SABA \citep{Dagreou2022SABA}.}
They are run \new{on a synthetic problem with quadratic functions and} on a hyperparameter selection problem for $\ell^2$-regularized logistic regression with the dataset IJCNN1\footnote{\url{https://www.csie.ntu.edu.tw/~cjlin/libsvmtools/datasets/binary.html}}. 
A more detailed description of the experiments is available in \cref{app:sec:exp} and an additional experiment the datacleaning task is available in \cref{app:sec:supp_exp}.

\new{\textbf{Experiments on quadratics}}~
\new{To evaluate the performance of stochastic bilevel optimizers in a controlled setting, we perform a benchmark on quadratic loss functions described in \cref{app:sec:exp}.
Here $F$ and $G$ are quadratic jointly in $(z, x)$, allowing us to choose freely the conditioning of $F$, $G$, and $h$. We take for the Hessian and cross derivative matrices of each sample, the empirical correlation of random vectors drawn with a prescribed covariance matrix. The generation process is detailed in \cref{app:sec:exp}.
In \cref{fig:exps}, we report the norm of the gradient of the value function function with respect to time.
Our first observation is that among all the methods, SRBA and SABA converge the fastest.
These two solvers share two key ingredients: variance reduction and warm-starting.
Variance reduction makes the variance of the gradient estimate go to zero without using decreasing step sizes.
The warm-starting strategy in both the approximation of $z^*(x^t)$ and the approximation of $v^*(x^t)$ enables getting an estimator of $\nabla h(x^t)$ which is asymptotically unbiased, without requiring an increasing number of inner iterations or batch-size.
Note that solvers using Neumann iterations (VRBO, MRBO, stocBiO) fail to converge because Neumann iterations provide a biased estimate of $v^*(x)$.
Moreover, AmiIGO and stocBiO evolve slowly after some iterations because they require vanishing step sizes to converge.
Finally, SRBA is faster than SABA, which is consistent with the theory.}

\textbf{\new{Hyperparameter selection}}~
\new{We also run an experiment on hyperparameter selection problem for $\ell^2$-regularized logistic regression with the IJCNN1 dataset.}
SRBA shows good performances in the experiment, both in speed and accuracy.
It is competitive with other state-of-the-art methods AmIGO and SABA, while going faster than Amigo and requiring less memory than SABA.
VRBO --another extension of SARAH for bilevel problems-- is slower in all problems.
This is due to the burden of computing the approximate hypergradient at each inner iteration without updating the outer parameter.
We can also notice that in the experiment on IJCNN1, the slowest method are method implementing Neumann approximations to approximate $v^*(x)$.
\new{Note that this last experiment does not include F$^2$SA because we find that on this problem, the norm of the iterates of F$^2$SA goes towards infinity.}

\begin{figure}[ht]
    \centering
    \includegraphics[width=.4\textwidth]{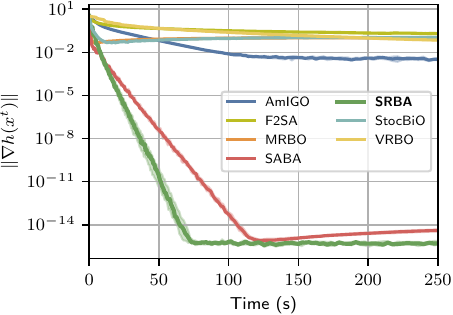}

    \includegraphics[width=.4\textwidth]{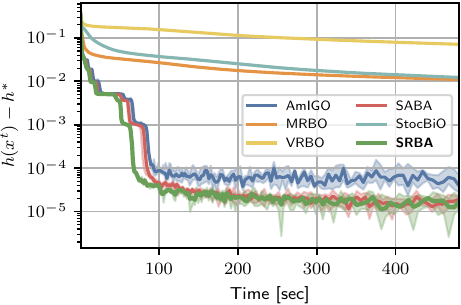}
     \caption{\new{Comparison of the behavior of SRBA with other stochastic bilevel solvers. For each experiment, the solvers are run with 10 different seeds and the median performance over these seeds is reported. The shaded area corresponds to the performances between the 20\% and the 80\% percentiles. The performances are reported with respect to wall-clock time. \textbf{Top:} Experiments on quadratic functions. We report the gradient norm of the value function. \textbf{Bottom: } Hyperparameter selection with the IJCNN1 dataset.}}
    \label{fig:exps}
\end{figure}

\section{Conclusion}
In this paper, we have introduced SRBA, an algorithm for bilevel empirical risk minimization. We have demonstrated that the sample complexity of SRBA is $\mathcal{O}((n+m)^{\frac12}\varepsilon^{-1})$ for any bilevel problem where the inner problem is strongly convex. Then, we have demonstrated that any bilevel empirical risk minimization algorithm has a sample complexity of at least $\mathcal{O}(m^{\frac12}\varepsilon^{-1})$ on some problems where the inner problem is strongly convex. This demonstrates that SRBA is optimal, up to constant factors, and that bilevel ERM is as hard as single-level nonconvex ERM.

\bibliography{biblio}

\begin{thebibliography}{50}
\providecommand{\natexlab}[1]{#1}
\providecommand{\url}[1]{\texttt{#1}}
\expandafter\ifx\csname urlstyle\endcsname\relax
  \providecommand{\doi}[1]{doi: #1}\else
  \providecommand{\doi}{doi: \begingroup \urlstyle{rm}\Url}\fi

\bibitem[Ablin et~al.(2020)Ablin, Peyr{\'e}, and Moreau]{Ablin2020}
Pierre Ablin, Gabriel Peyr{\'e}, and Thomas Moreau.
\newblock Super-efficiency of automatic differentiation for functions defined
  as a minimum.
\newblock In \emph{International {{Conference}} on {{Machine Learning}}
  ({{ICML}})}, 2020.

\bibitem[Agarwal and Bottou(2015)]{Agarwal2015}
Alekh Agarwal and L{\'e}on Bottou.
\newblock A {{Lower Bound}} for the {{Optimization}} of {{Finite Sums}}.
\newblock In \emph{International {{Conference}} on {{Machine Learning}}
  ({{ICML}})}, 2015.

\bibitem[Arbel and Mairal(2022)]{Arbel2022amigo}
Michael Arbel and Julien Mairal.
\newblock Amortized {{Implicit Differentiation}} for {{Stochastic Bilevel
  Optimization}}.
\newblock In \emph{International {{Conference}} on {{Learning Representations}}
  ({{ICLR}})}, 2022.

\bibitem[Bai et~al.(2019)Bai, Kolter, and Koltun]{Bai2019}
Shaojie Bai, J.~Zico Kolter, and Vladlen Koltun.
\newblock Deep {{Equilibrium Models}}.
\newblock In \emph{Advances in {{Neural Information Processing Systems}}
  ({{NeurIPS}})}, 2019.

\bibitem[Bengio(2000)]{Bengio2000}
Yoshua Bengio.
\newblock Gradient-{{Based Optimization}} of {{Hyperparameters}}.
\newblock \emph{Neural Computation}, 12\penalty0 (8):\penalty0 1889--1900,
  2000.
\newblock ISSN 0899-7667, 1530-888X.
\newblock \doi{10.1162/089976600300015187}.

\bibitem[Bottou(2010)]{Bottou2010}
L{\'e}on Bottou.
\newblock Large-{{Scale Machine Learning}} with {{Stochastic Gradient
  Descent}}.
\newblock In \emph{International Conference on Computational Statistics
  (COMPSTAT)}, pages 177--186, 2010.

\bibitem[Bottou et~al.(2018)Bottou, Curtis, and Nocedal]{Bottou2018}
L{\'e}on Bottou, Frank~E. Curtis, and Jorge Nocedal.
\newblock Optimization methods for large-scale machine learning.
\newblock \emph{Siam Reviews}, 60\penalty0 (2):\penalty0 223--311, 2018.

\bibitem[Carmon et~al.(2020)Carmon, Duchi, Hinder, and
  Sidford]{Carmon2017lowerbound1}
Yair Carmon, John~C. Duchi, Oliver Hinder, and Aaron Sidford.
\newblock Lower bounds for finding stationary points {{I}}.
\newblock \emph{Mathematical Programming}, 184\penalty0 (1-2):\penalty0
  71--120, 2020.

\bibitem[Carmon et~al.(2021)Carmon, Duchi, Hinder, and
  Sidford]{Carmon2017lowerbound2}
Yair Carmon, John~C. Duchi, Oliver Hinder, and Aaron Sidford.
\newblock Lower bounds for finding stationary points {{II}}: First-order
  methods.
\newblock \emph{Mathematical Programming}, 185\penalty0 (1-2):\penalty0
  315--355, 2021.

\bibitem[Chen et~al.(2021)Chen, Sun, and Yin]{Chen2021alset}
Tianyi Chen, Yuejiao Sun, and Wotao Yin.
\newblock Closing the {{Gap}}: {{Tighter Analysis}} of {{Alternating Stochastic
  Gradient Methods}} for {{Bilevel Problems}}.
\newblock In \emph{Advances in {{Neural Information Processing Systems}}
  ({{NeurIPS}})}, 2021.

\bibitem[Cutkosky and Orabona(2019)]{Cutkosky2019}
Ashok Cutkosky and Francesco Orabona.
\newblock Momentum-based variance reduction in non-convex {{SGD}}.
\newblock In \emph{Advances in Neural Information Processing Systems
  ({{NeurIPS}})}, 2019.

\bibitem[Dagr{\'e}ou et~al.(2022)Dagr{\'e}ou, Ablin, Vaiter, and
  Moreau]{Dagreou2022SABA}
Mathieu Dagr{\'e}ou, Pierre Ablin, Samuel Vaiter, and Thomas Moreau.
\newblock A framework for bilevel optimization that enables stochastic and
  global variance reduction algorithms.
\newblock In \emph{Advances in {{Neural Information Processing Systems}}
  ({{NeurIPS}})}, 2022.

\bibitem[Defazio et~al.(2014)Defazio, Bach, and {Lacoste-Julien}]{Defazio2014}
Aaron Defazio, Francis Bach, and Simon {Lacoste-Julien}.
\newblock {{SAGA}}: {{A Fast Incremental Gradient Method With Support}} for
  {{Non-Strongly Convex Composite Objectives}}.
\newblock In \emph{Advances in {{Neural Information Processing Systems}}
  ({{NeurIPS}})}, 2014.

\bibitem[Fang et~al.(2018)Fang, Li, Lin, and Zhang]{Fang2018}
Cong Fang, Chris~Junchi Li, Zhouchen Lin, and Tong Zhang.
\newblock {{SPIDER}}: {{Near-Optimal Non-Convex Optimization}} via {{Stochastic
  Path Integrated Differential Estimator}}.
\newblock In \emph{Advances in {{Neural Information Processing Systems}}
  ({{NeurIPS}})}, 2018.

\bibitem[Franceschi et~al.(2017)Franceschi, Donini, Frasconi, and
  Pontil]{Franceschi2017ForwardReverse}
Luca Franceschi, Michele Donini, Paolo Frasconi, and Massimiliano Pontil.
\newblock Forward and {{Reverse Gradient-Based Hyperparameter Optimization}}.
\newblock In \emph{International {{Conference}} on {{Machine Learning}}
  ({{ICML}})}, 2017.

\bibitem[Franceschi et~al.(2018)Franceschi, Frasconi, Salzo, Grazzi, and
  Pontil]{Franceschi2018hyperparameter}
Luca Franceschi, Paolo Frasconi, Saverio Salzo, Riccardo Grazzi, and
  Massimilano Pontil.
\newblock Bilevel {{Programming}} for {{Hyperparameter Optimization}} and
  {{Meta-Learning}}.
\newblock In \emph{International {{Conference}} on {{Machine Learning}}
  ({{ICML}})}, 2018.

\bibitem[Ghadimi and Lan(2013)]{Ghadimi2013}
Saeed Ghadimi and Guanghui Lan.
\newblock Stochastic first- and zeroth-order methods for nonconvex stochastic
  programming.
\newblock \emph{SIAM Journal on Optimization}, 23\penalty0 (4):\penalty0
  2341--2368, 2013.

\bibitem[Ghadimi and Wang(2018)]{Ghadimi2018}
Saeed Ghadimi and Mengdi Wang.
\newblock Approximation {{Methods}} for {{Bilevel Programming}}.
\newblock \emph{arXiv preprint arXiv:1802.02246}, 2018.

\bibitem[Grazzi et~al.(2020)Grazzi, Franceschi, Pontil, and Salzo]{Grazzi2020}
Riccardo Grazzi, Luca Franceschi, Massimiliano Pontil, and Saverio Salzo.
\newblock On the iteration complexity of hypergradient computation.
\newblock In Hal~Daum{\'e} III and Aarti Singh, editors, \emph{International
  {{Conference}} on {{Machine Learning}} ({{ICML}})}, 2020.

\bibitem[Hong et~al.(2023)Hong, Wai, Wang, and Yang]{Hong2021}
Mingyi Hong, Hoi-To Wai, Zhaoran Wang, and Zhuoran Yang.
\newblock A {{Two-Timescale Stochastic Algorithm Framework}} for {{Bilevel
  Optimization}}: {{Complexity Analysis}} and {{Application}} to
  {{Actor-Critic}}.
\newblock \emph{SIAM Journal on Optimization}, 33\penalty0 (1):\penalty0
  147--180, 2023.

\bibitem[Hu et~al.(2022)Hu, Zhong, and Yang]{Hu2022MultiBlock}
Quanqi Hu, Yongjian Zhong, and Tianbao Yang.
\newblock Multi-block {{Min-max Bilevel Optimization}} with {{Applications}} in
  {{Multi-task Deep AUC Maximization}}.
\newblock In \emph{Advances in {{Neural Information Processing Systems}}
  ({{NeurIPS}})}, 2022.

\bibitem[Huang et~al.(2022)Huang, Li, Gao, and Huang]{Huang2022BilevelBregmana}
Feihu Huang, Junyi Li, Shangqian Gao, and Heng Huang.
\newblock Enhanced {{Bilevel Optimization}} via {{Bregman Distance}}.
\newblock In \emph{Advances in {{Neural Information Processing Systems}}
  ({{NeurIPS}})}, 2022.

\bibitem[Ji and Liang(2023)]{Ji2023accBio}
Kaiyi Ji and Yingbin Liang.
\newblock Lower {{Bounds}} and {{Accelerated Algorithms}} for {{Bilevel
  Optimization}}.
\newblock \emph{Journal of Machine Learning Research}, 24\penalty0
  (22):\penalty0 1--56, 2023.

\bibitem[Ji et~al.(2021)Ji, Yang, and Liang]{Ji2021stocbio}
Kaiyi Ji, Junjie Yang, and Yingbin Liang.
\newblock Bilevel {{Optimization}}: {{Convergence Analysis}} and {{Enhanced
  Design}}.
\newblock In \emph{International {{Conference}} on {{Machine Learning}}
  ({{ICML}})}, 2021.

\bibitem[Karimi et~al.(2016)Karimi, Nutini, and
  Schmidt]{Karimi2016PL_linear_cvg}
Hamed Karimi, Julie Nutini, and Mark Schmidt.
\newblock Linear {{Convergence}} of {{Gradient}} and {{Proximal-Gradient
  Methods Under}} the {{Polyak-}}\textbackslash{{Lojasiewicz Condition}}.
\newblock In \emph{European {{Conference}} on {{Machine Learning}} ({{ECML}})},
  2016.

\bibitem[Khanduri et~al.(2021)Khanduri, Zeng, Hong, Wai, Wang, and
  Yang]{Khanduri2021sustain}
Prashant Khanduri, Siliang Zeng, Mingyi Hong, Hoi-To Wai, Zhaoran Wang, and
  Zhuoran Yang.
\newblock A {{Near-Optimal Algorithm}} for {{Stochastic Bilevel Optimization}}
  via {{Double-Momentum}}.
\newblock In \emph{Advances in Neural Information Processing Systems
  ({{NeurIPS}})}, 2021.

\bibitem[Kwon et~al.(2023)Kwon, Kwon, Wright, and Nowak]{Kwon2023f2sa}
Jeongyeol Kwon, Dohyun Kwon, Stephen Wright, and Robert Nowak.
\newblock A {{Fully First-Order Method}} for {{Stochastic Bilevel
  Optimization}}.
\newblock In \emph{International {{Conference}} on {{Machine Leaning}}
  ({{ICML}})}, 2023.

\bibitem[Li et~al.(2022)Li, Gu, and Huang]{Li2022}
Junyi Li, Bin Gu, and Heng Huang.
\newblock A {{Fully Single Loop Algorithm}} for {{Bilevel Optimization}}
  without {{Hessian Inverse}}.
\newblock In \emph{Proceedings of the {{Thirty-sixth AAAI Conference}} on
  {{Artificial Intelligence}}, {{AAAI}}'22}, 2022.

\bibitem[Li et~al.(2020)Li, Hu, Wang, Hospedales, Robertson, and
  Yang]{Li2020dada}
Yonggang Li, Guosheng Hu, Yongtao Wang, Timothy Hospedales, Neil~M. Robertson,
  and Yongxin Yang.
\newblock {{DADA}}: {{Differentiable Automatic Data Augmentation}}.
\newblock \emph{arXiv preprint arXiv:2003.03780}, 2020.

\bibitem[Liu et~al.(2019)Liu, Simonyan, and Yang]{Liu2019darts}
Hanxiao Liu, Karen Simonyan, and Yiming Yang.
\newblock {{DARTS}}: {{Differentiable Architecture Search}}.
\newblock In \emph{International {{Conference}} on {{Learning Representations}}
  ({{ICLR}})}, 2019.

\bibitem[Lorraine et~al.(2020)Lorraine, Vicol, and
  Duvenaud]{Lorraine2020MillionsofHyperparameters}
Jonathan Lorraine, Paul Vicol, and David Duvenaud.
\newblock Optimizing {{Millions}} of {{Hyperparameters}} by {{Implicit
  Differentiation}}.
\newblock In \emph{International {{Conference}} on {{Artificial Intelligence}}
  and {{Statistics}} ({{AISTAT}})}, pages 1540--1552, 2020.

\bibitem[Maclaurin et~al.(2015)Maclaurin, Duvenaud, and Adams]{Maclaurin2015}
Dougal Maclaurin, David Duvenaud, and Ryan~P. Adams.
\newblock Gradient-based {{Hyperparameter Optimization}} through {{Reversible
  Learning}}.
\newblock In \emph{International {{Conference}} on {{Machine Learning}}
  ({{ICML}})}, 2015.

\bibitem[Moreau et~al.(2022)Moreau, Massias, Gramfort, Ablin, Charlier,
  Dagr{\'e}ou, {la Tour}, Durif, Dantas, Klopfenstein, Larsson, Lai, Lefort,
  Mal{\'e}zieux, Moufad, Nguyen, Rakotomamonjy, Ramzi, Salmon, and
  Vaiter]{Moreau2022}
Thomas Moreau, Mathurin Massias, Alexandre Gramfort, Pierre Ablin,
  Pierre-Antoine Bannier~Benjamin Charlier, Mathieu Dagr{\'e}ou, Tom~Dupr{\'e}
  {la Tour}, Ghislain Durif, Cassio~F. Dantas, Quentin Klopfenstein, Johan
  Larsson, En~Lai, Tanguy Lefort, Benoit Mal{\'e}zieux, Badr Moufad, Binh~T.
  Nguyen, Alain Rakotomamonjy, Zaccharie Ramzi, Joseph Salmon, and Samuel
  Vaiter.
\newblock Benchopt: {{Reproducible}}, efficient and collaborative optimization
  benchmarks.
\newblock In \emph{Advances in {{Neural Information Processing Systems}}
  ({{NeurIPS}})}, 2022.

\bibitem[Nemirovsky and Yudin(1983)]{Nemirovsky1983}
Arkadi{\u \i}~Semenovich Nemirovsky and David~Berkovich Yudin.
\newblock \emph{{Problem complexity and method efficiency in optimization}}.
\newblock {Wiley-Interscience series in discrete mathematics}. {Wiley},
  {Chichester ; New York}, 1983.
\newblock ISBN 978-0-471-10345-5.

\bibitem[Nesterov(2018)]{Nesterov2018}
Yurii Nesterov.
\newblock \emph{Lectures on Convex Optimization}.
\newblock {Springer Berlin Heidelberg}, {New York, NY}, 2018.
\newblock ISBN 978-3-319-91577-7.

\bibitem[Nguyen et~al.(2017)Nguyen, Liu, Scheinberg, and Tak{\'a}{\v
  c}]{Nguyen2017}
Lam~M. Nguyen, Jie Liu, Katya Scheinberg, and Martin Tak{\'a}{\v c}.
\newblock {{SARAH}}: {{A Novel Method}} for {{Machine Learning Problems Using
  Stochastic Recursive Gradient}}.
\newblock In \emph{International {{Conference}} on {{Machine Learning}}
  ({{ICML}})}, 2017.

\bibitem[Nguyen et~al.(2022)Nguyen, {van Dijk}, Phan, Nguyen, Weng, and
  Kalagnanam]{Nguyen2022}
Lam~M. Nguyen, Marten {van Dijk}, Dzung~T. Phan, Phuong~Ha Nguyen, Tsui-Wei
  Weng, and Jayant~R. Kalagnanam.
\newblock Finite-sum smooth optimization with {{SARAH}}.
\newblock \emph{Computational Optimization and Applications}, 82\penalty0
  (3):\penalty0 561--593, 2022.
\newblock ISSN 0926-6003, 1573-2894.
\newblock \doi{10.1007/s10589-022-00375-x}.

\bibitem[Pearlmutter(1994)]{pearlmutter1994}
Barak~A. Pearlmutter.
\newblock Fast {{Exact Multiplication}} by the {{Hessian}}.
\newblock \emph{Neural Computation}, 6\penalty0 (1):\penalty0 147--160, 1994.
\newblock ISSN 0899-7667, 1530-888X.
\newblock \doi{10.1162/neco.1994.6.1.147}.

\bibitem[Pedregosa(2016)]{Pedregosa2016hoag}
Fabian Pedregosa.
\newblock Hyperparameter optimization with approximate gradient.
\newblock In \emph{International {{Conference}} on {{Machine Learning}}
  ({{ICML}})}, 2016.

\bibitem[Rajeswaran et~al.(2019)Rajeswaran, Finn, Kakade, and
  Levine]{Rajeswaran2019imaml}
Aravind Rajeswaran, Chelsea Finn, Sham Kakade, and Sergey Levine.
\newblock Meta-{{Learning}} with {{Implicit Gradients}}.
\newblock In \emph{Advances in {{Neural Information Processing Systems}}
  ({{NeurIPS}})}, 2019.

\bibitem[Ramzi et~al.(2022)Ramzi, Mannel, Bai, Starck, Ciuciu, and
  Moreau]{Ramzi2022}
Zaccharie Ramzi, Florian Mannel, Shaojie Bai, Jean-Luc Starck, Philippe Ciuciu,
  and Thomas Moreau.
\newblock {{SHINE}}: {{SHaring}} the {{INverse Estimate}} from the forward pass
  for bi-level optimization and implicit models.
\newblock In \emph{International {{Conference}} on {{Learning Representations}}
  ({{ICLR}})}, 2022.

\bibitem[Reddi et~al.(2016)Reddi, Sra, Poczos, and Smola]{Reddi2016saga}
Sashank~J. Reddi, Suvrit Sra, Barnabas Poczos, and Alex Smola.
\newblock Fast {{Incremental Method}} for {{Nonconvex Optimization}}.
\newblock In \emph{2016 {{IEEE}} 55th {{Conference}} on {{Decision}} and
  {{Control}} ({{CDC}})}, {{IEEE}}, pages 1971--1977, 2016.

\bibitem[Robbins and Monro(1951)]{Robbins1951}
Herbert Robbins and Sutton Monro.
\newblock A stochastic approximation method.
\newblock \emph{The Annals of Mathematical Statistics}, 22\penalty0
  (3):\penalty0 400--407, 1951.
\newblock \doi{10.1214/aoms/1177729586}.

\bibitem[Rommel et~al.(2022)Rommel, Moreau, Paillard, and Gramfort]{Rommel2022}
C{\'e}dric Rommel, Thomas Moreau, Joseph Paillard, and Alexandre Gramfort.
\newblock {{CADDA}}: {{Class-wise Automatic Differentiable Data Augmentation}}
  for {{EEG Signals}}.
\newblock In \emph{International {{Conference}} on {{Learning Representations}}
  ({{ICLR}})}, 2022.

\bibitem[Sow et~al.(2022)Sow, Ji, and Liang]{Sow2022a}
Daouda Sow, Kaiyi Ji, and Yingbin Liang.
\newblock On the {{Convergence Theory}} for {{Hessian-Free Bilevel
  Algorithms}}.
\newblock In \emph{Advances in {{Neural Information Processing Systems}}
  ({{NeurIPS}})}, 2022.

\bibitem[Woodworth and Srebro(2016)]{Woodworth2016}
Blake~E Woodworth and Nati Srebro.
\newblock Tight {{Complexity Bounds}} for {{Optimizing Composite Objectives}}.
\newblock In \emph{Advances in {{Neural Information Systems Processing}}
  ({{NeurIPS}})}, 2016.

\bibitem[Yang et~al.(2021)Yang, Ji, and Liang]{Yang2021a}
Junjie Yang, Kaiyi Ji, and Yingbin Liang.
\newblock Provably {{Faster Algorithms}} for {{Bilevel Optimization}}.
\newblock In \emph{Advances in {{Neural Information Processing Systems}}
  ({{NeurIPS}})}, 2021.

\bibitem[Ye et~al.(2022)Ye, Liu, Wright, Stone, and Liu]{Ye2022}
Mao Ye, Bo~Liu, Stephen Wright, Peter Stone, and Qiang Liu.
\newblock {{BOME}}! {{Bilevel Optimization Made Easy}}: {{A Simple First-Order
  Approach}}.
\newblock In \emph{Advances in {{Neural Information Processing Systems}}
  ({{NeurIPS}})}, 2022.

\bibitem[Zhang et~al.(2021)Zhang, Su, Pan, Chang, Abbasnejad, and
  Haffari]{Zhang2021idarts}
Miao Zhang, Steven~W. Su, Shirui Pan, Xiaojun Chang, Ehsan~M Abbasnejad, and
  Reza Haffari.
\newblock {{iDARTS}}: {{Differentiable Architecture Search}} with {{Stochastic
  Implicit Gradients}}.
\newblock In \emph{International {{Conference}} on {{Machine Learning}}
  ({{ICML}})}, 2021.

\bibitem[Zhou and Gu(2019)]{Zhou2019}
Dongruo Zhou and Quanquan Gu.
\newblock Lower {{Bounds}} for {{Smooth Nonconvex Finite-Sum Optimization}}.
\newblock In \emph{International {{Conference}} on {{Machine Learning}}
  ({{ICML}})}, 2019.

\end{thebibliography}
\bibliographystyle{plainnat}

\subsubsection*{Acknowledgements}
SV acknowledges the support of the ANR GraVa ANR-18-CE40-0005.
This work is supported by a public grant overseen by the French National Research Agency (ANR) through the program UDOPIA, project funded by the ANR-20-THIA-0013-01 and DATAIA convergence institute (ANR-17-CONV-0003).
\section*{Checklist}

 \begin{enumerate}

 \item For all models and algorithms presented, check if you include:
 \begin{enumerate}
   \item A clear description of the mathematical setting, assumptions, algorithm, and/or model. \checklistans{Yes} [\cref{sec:algo}]
   \item An analysis of the properties and complexity (time, space, sample size) of any algorithm. \checklistans{Yes} [\cref{sec:theory}]
   \item (Optional) Anonymized source code, with specification of all dependencies, including external libraries. \checklistans{Yes} [\cref{app:sec:exp}]
 \end{enumerate}

 \item For any theoretical claim, check if you include:
 \begin{enumerate}
   \item Statements of the full set of assumptions of all theoretical results. \checklistans{Yes} [\cref{subsec:ass}]
   \item Complete proofs of all theoretical results. \checklistans{Yes} [\cref{app:sec:cvgt_srba} and \cref{app:sec:lower}]
   \item Clear explanations of any assumptions. \checklistans{Yes} [\cref{subsec:ass}]
 \end{enumerate}

 \item For all figures and tables that present empirical results, check if you include:
 \begin{enumerate}
   \item The code, data, and instructions needed to reproduce the main experimental results (either in the supplemental material or as a URL). \checklistans{Yes} [\cref{app:sec:exp}]
   \item All the training details (e.g., data splits, hyperparameters, how they were chosen). \checklistans{Yes} [\cref{app:sec:exp} and \cref{app:sec:supp_exp}]
         \item A clear definition of the specific measure or statistics and error bars (e.g., with respect to the random seed after running experiments multiple times). \checklistans{Yes} [\cref{fig:exps}]
         \item A description of the computing infrastructure used. (e.g., type of GPUs, internal cluster, or cloud provider). \checklistans{Yes} [\cref{app:sec:exp}]
 \end{enumerate}

 \item If you are using existing assets (e.g., code, data, models) or curating/releasing new assets, check if you include:
 \begin{enumerate}
   \item Citations of the creator If your work uses existing assets. \checklistans{Yes}
   \item The license information of the assets, if applicable. \checklistans{Not applicable}
   \item New assets either in the supplemental material or as a URL, if applicable. \checklistans{Yes}
   \item Information about consent from data providers/curators. \checklistans{Not applicable}
   \item Discussion of sensible content if applicable, e.g., personally identifiable information or offensive content. \checklistans{Not Applicable}
 \end{enumerate}

 \item If you used crowdsourcing or conducted research with human subjects, check if you include:
 \begin{enumerate}
   \item The full text of instructions given to participants and screenshots. \checklistans{Not Applicable}
   \item Descriptions of potential participant risks, with links to Institutional Review Board (IRB) approvals if applicable. \checklistans{Not Applicable}
   \item The estimated hourly wage paid to participants and the total amount spent on participant compensation. \checklistans{Not Applicable}
 \end{enumerate}

 \end{enumerate}

 \onecolumn
\appendix
\counterwithin{figure}{section}
\def\thefigure{\thesection.\arabic{figure}}

\cref{app:sec:cvgt_srba} contains the necessary lemmas and proofs of Section~\ref{sec:theory}.
\cref{app:sec:lower} contains the proof of the lower bound for stochastic bilevel optimization.
\cref{app:sec:exp} details the setting of the numerical experiments.
Finally, \cref{app:sec:supp_exp} contains two more experiments on hyperparameter selection and datacleaning tasks.

\section{Convergence analysis of SRBA}
\label{app:sec:cvgt_srba}

\subsection{Proof of \cref{prop:directions_cancels}}
\begin{proof}
    Let $x\in\bbR^d$. Since $G(\,.\,, x)$ is differentiable and $z^*(x)$ minimizes $G(\,.\,, x)$, the first order optimality condition ensures $\nabla_1 G(z^*(x),x) = 0 = D_z(z^*(x),v^*(x),x)$. Since $G$ is strongly convex with respect to $z$, the Hessian $\nabla^2_{11} G(z^*(x), x)$ is invertible. As a consequence, the equation in $v$
    \begin{align}
        D_v(z^*(x), v, x) = \nabla^2_{11}G(z^*(x),x)v + \nabla_1 F(z^*(x),x) = 0
    \end{align}
    admits a unique solution given by $v^*(x)$.
\end{proof}
\subsection{Smoothness constant of $h$}\label{app:sec:smoothness_h}
We can find in \citet[Lemma 2.2]{Ghadimi2018} the following value for the smoothness constant of $h$
\begin{equation*}
  L^h = L^F_1+ \frac{2L^F_1L^G_2+(L^F_0)^2 L^G_2}{\mu_G}
+\frac{L_{11}^GL^G_1L_0^F+L^G_1L^G_2L^F_0 + (L^G_1)^2L^F_1}{\mu_G^2}
+\frac{(L^G_1)^2L^G_2L^F_0}{\mu_G^3}\enspace.  
\end{equation*}
\subsection{Proof of \cref{prop:bias_variance_decomp}}
\begin{proof}
    Let $t>0$ and $k\in\setcomb{q-1}$. For $k=0$, we directly have $\bbE[\|D_{\bullet}^{t,k} - D_{\bullet}(\bfu^{t,k})\|^2]=0$. For $k\geq 1$ and $r\in\{1,\dots,k\}$, the bias/variance decomposition of $D_{\bullet}^{t,r}$ reads
    \begin{align*}
        \bbE_{t,r}[\|D_{\bullet}^{t,r} - D_{\bullet}(\bfu^{t,r})\|^2] &= \bbE_{t,r}[\|D_{\bullet}^{t,r} - D_{\bullet}^{t,r-1} + D_{\bullet}(\bfu^{t,r-1}) - D_{\bullet}(\bfu^{t,r})\|^2] \\\nonumber
            &\qquad + \|D_{\bullet}(\bfu^{t,r}) + D_{\bullet}(\bfu^{t,r-1}) - D_{\bullet}^{t,r-1} - D_{\bullet}(\bfu^{t,r})\|^2\\
        &= \bbE_{t,r}[\|D_{\bullet}^{t,r} - D_{\bullet}^{t,r-1} - (D_{\bullet}(\bfu^{t,r-1}) - D_{\bullet}(\bfu^{t,r}))\|^2] \\\nonumber
            &\qquad + \|D_{\bullet}^{t,r-1} - D_{\bullet}(\bfu^{t,r-1})\|^2\\
    \end{align*}
    The term $\bbE_{t,r}[\|D_{\bullet}^{t,r} - D_{\bullet}^{t,r-1} - (D_{\bullet}(\bfu^{t,r-1}) - D_{\bullet}(\bfu^{t,r}))\|^2]$ is the variance of $D_{\bullet}^{t,r} - D_{\bullet}^{t,r-1}$, and then can written as 
    \begin{align*}
        \bbE_{t,r}[\|D_{\bullet}^{t,r} - D_{\bullet}^{t,r-1} - (D_{\bullet}(\bfu^{t,r-1}) - D_{\bullet}(\bfu^{t,r}))\|^2] &= \bbE_{t,r}[\|D_{\bullet}^{t,r} - D_{\bullet}^{t,r-1}\|^2] \\\nonumber
            &\qquad - \|D_{\bullet}(\bfu^{t,r}) - D_{\bullet}(\bfu^{t,r-1})\|^2
    \end{align*}
    Plugging this in the previous inequality and taking the total expectation leads to
    \begin{align*}
         \bbE[\|D_{\bullet}^{t,r} - D_{\bullet}(\bfu^{t,r})\|^2] &= \bbE[\|D_{\bullet}^{t,r} - D_{\bullet}^{t,r-1}\|^2] - \bbE[\|D_{\bullet}(\bfu^{t,r}) - D_{\bullet}(\bfu^{t,r-1})\|^2] \\
            &\qquad + \bbE[\|D^{t,r-1}_\bullet - D^{t,r-1}_{\bullet}(\bfu^{t,r-1})\|^2]
    \end{align*}
    Summing for $r\in\{1,\dots,k\}$ and telescoping gives the final result (taking into account that $D^{t,0}_\bullet~=~D_\bullet(\bfu^{t,0})$):
    \begin{equation*}
        \bbE[\|D_{\bullet}^{t,k} - D_{\bullet}(\bfu^{t,k})\|^2] = \sum_{r=1}^k\bbE[\|D_{\bullet}^{t,r} - D_{\bullet}^{t,r-1}\|^2] - \sum_{r=1}^k\bbE[\|D_{\bullet}(\bfu^{t,r}) - D_{\bullet}(\bfu^{t,r-1})\|^2]\enspace .
    \end{equation*}
\end{proof}

\subsection{Technical lemmas}

\begin{lemma}\label{app:lemma:z_star_lipschitz}
    There exists constant $L_{z^*}$ and $L_{v^*}$ such that for any $x_1$, $x_2\in\bbR^d$, we have
    \begin{align*}
        \|z^*(x_1)-z^*(x_2)\|\leq L_{z^*}\|x_1 - x_2\|\quad\text{and}\quad \|v^*(x_1)-v^*(x_2)\|\leq L_{v^*}\|x_1 - x_2\|
    \end{align*}
\end{lemma}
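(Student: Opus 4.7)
The plan is to prove the two Lipschitz bounds separately, starting with $z^*$ and then leveraging that result in the analysis of $v^*$.

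For $z^*$, I would invoke the implicit function theorem applied to the optimality condition $\nabla_1 G(z^*(x), x) = 0$, which is valid because \cref{ass:regul_G} ensures $G$ is twice continuously differentiable and $\mu_G$-strongly convex in $z$, hence $\nabla_{11}^2 G(z,x)$ is invertible with $\|[\nabla_{11}^2 G(z,x)]^{-1}\| \le 1/\mu_G$. Differentiating the optimality condition gives
\begin{equation*}
\nabla z^*(x) = -[\nabla_{11}^2 G(z^*(x), x)]^{-1} \nabla_{12}^2 G(z^*(x), x).
\end{equation*}
Since \cref{ass:regul_G} also gives $\|\nabla_{12}^2 G\| \le L^G_1$ (the first-order derivative is $L^G_1$-Lipschitz, so its Jacobian is bounded by $L^G_1$), this yields $\|\nabla z^*(x)\| \le L^G_1/\mu_G$ uniformly in $x$, and the mean value inequality provides the Lipschitz bound with constant $L^G_1/\mu_G$.

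For $v^*$, I would work directly from the closed-form expression \eqref{eq:lin_syst}. Setting $A(x) = \nabla_{11}^2 G(z^*(x), x)$ and $b(x) = \nabla_1 F(z^*(x), x)$, so that $v^*(x) = -A(x)^{-1} b(x)$, I would decompose
\begin{equation*}
v^*(x_1) - v^*(x_2) = -A(x_1)^{-1}\bigl(b(x_1) - b(x_2)\bigr) + A(x_1)^{-1}\bigl(A(x_1) - A(x_2)\bigr) A(x_2)^{-1} b(x_2)
\end{equation*}
using the resolvent identity $A(x_2)^{-1} - A(x_1)^{-1} = A(x_1)^{-1}(A(x_1) - A(x_2)) A(x_2)^{-1}$. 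Then I would bound each factor: $\|A(x_i)^{-1}\| \le 1/\mu_G$ by strong convexity; $\|b(x_2)\| \le L^F_0$ by the Lipschitz bound on $F_j$ in \cref{ass:regul_F}; $\|b(x_1) - b(x_2)\| \le L^F_1(\|z^*(x_1) - z^*(x_2)\| + \|x_1 - x_2\|) \le L^F_1(L^G_1/\mu_G + 1)\|x_1 - x_2\|$ using Lipschitzness of $\nabla F_j$ composed with the Lipschitzness of $z^*$ established in the first step; and analogously $\|A(x_1) - A(x_2)\| \le L^G_2(L^G_1/\mu_G + 1)\|x_1 - x_2\|$ from Lipschitzness of $\nabla^2_{11} G$ in \cref{ass:regul_G}. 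Combining these bounds gives the Lipschitz estimate for $v^*$, and taking $L_*$ to be the maximum of the two constants closes the proof.

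The only nontrivial step is the composed Lipschitz estimate for $b$ and $A$, where one must be careful that the Lipschitz constants of $\nabla_1 F_j$ and $\nabla_{11}^2 G_i$ are in their joint arguments $(z,x)$, so that the substitution $z = z^*(x)$ introduces exactly the additional factor $(L^G_1/\mu_G + 1)$ and does not create a circular dependency. Everything else is a routine application of strong convexity together with the uniform boundedness of $\nabla_1 F$ coming from \cref{ass:regul_F}.
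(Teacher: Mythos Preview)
Your proposal is correct and follows essentially the same approach as the paper: bound the Jacobian of $z^*$ via the implicit function theorem to get the constant $L^G_1/\mu_G$, then for $v^*$ use the add-and-subtract decomposition of $A(x_1)^{-1}b(x_1) - A(x_2)^{-1}b(x_2)$ combined with the Lipschitzness of $z^*$ from the first step. Your explicit use of the resolvent identity is in fact slightly cleaner than the paper's write-up, but the argument is the same.
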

\begin{proof}
    The Jacobian of $z^*$ reads $\diff z^*(x) = [\nabla^2_{11} G(z^*(x), x)]^{-1}\nabla^2_{12}G(z^*(x), x)$. By $\mu_G$-strong convexity and $L^G_1$-smoothness of $G$, we have $\|\diff z^*(x)\|\leq \frac{L^G_1}{\mu_G}$ which implies that $z^*$ is $L_{z^*}$-Lipschtiz with 
    $L_{z^*} = \frac{L^G_1}{\mu_G}$.

    For $v^*$ we do the computation directly:
    \begingroup
    \allowdisplaybreaks %
    \begin{align*}
        \|v^*(x_1) - v^*(x_2)\| &= \|[\nabla^2_{11}G(z^*(x_1),x_1)]^{-1}\nabla_1F(z^*(x_1), x_1) \\\nonumber
            &\qquad - [\nabla^2_{11}G(z^*(x_2),x_2)]^{-1}\nabla_1F(z^*(x_2), x_2)\|\\
        &\leq\|[\nabla^2_{11}G(z^*(x_1),x_1)]^{-1}(\nabla_1F(z^*(x_1), x_1) - \nabla_1F(z^*(x_2), x_2))\| \\\nonumber
            &\qquad + \|([\nabla^2_{11}G(z^*(x_1),x_1) - [\nabla^2_{11}G(z^*(x_2),x_2)]^{-1}]^{-1}\nabla_1F(z^*(x_2), x_2) \|\\
        &\leq \left(\frac{L^F_1}{\mu_G} + \frac{L^G_2L^F_0}{\mu_G^2}\right)\|(z^*(x_1),x_1) - (z^*(x_2),x_2) \|\\
        &\leq \left(\frac{L^F_1}{\mu_G} + \frac{L^G_2L^F_0}{\mu_G^2}\right)(\|z^*(x_1) - z^*(x_2)\| + \|x_1 - x_2\|)\\
        &\leq  \left(1+\frac{L^G_1}{\mu_G}\right)\left(\frac{L^F_1}{\mu_G} +  \frac{L^G_2L^F_0}{\mu_G^2}\right)\|x_1 - x_2\|
    \end{align*}
    \endgroup
    Then taking $L_{v^*} = \left(1+\frac{L^G_1}{\mu_G}\right)\left(\frac{L^F_1}{\mu_G} + \frac{L^G_2L^F_0}{\mu_G^2}\right)$ concludes the proof.
\end{proof}

\begin{lemma}\label{app:lemma:bound_error_directions}
    Let us consider the update directions $D_z^{t,k} = \bfdelta_z^{t,k}/\rho$, $D_v^{t,k} = \bfdelta_v^{t,k}/\rho$ and $D_x^{t,k} = \bfdelta_x^{t,k}/\gamma$ where $\bfdelta_z^{t,k}$, $\bfdelta_v^{t,k}$ and $\bfdelta_x^{t,k}$ verify Equations \eqref{eq:update_dz} to \eqref{eq:update_dx}. Then it holds
    \begingroup
    \allowdisplaybreaks %
    \begin{align*}
        \bbE[\|D_z^{t,k} - D_z(\bfu^{t,k})\|^2]&\leq \sum_{r=1}^k L^G_1(\rho^2\bbE[\|D^{t,r-1}_z\|^2] + \gamma^2\bbE[\|D^{t,r-1}_z\|^2])\\
        \bbE[\|D_v^{t,k} - D_v(\bfu^{t,k})\|^2]&\leq4\rho^2\left((L^G_2R)^2 + (L^F_1)^2\right)\sum_{r=1}^k\bbE[\|D^{t,r-1}_z\|^2] + 4\rho^2(L^G_1)^2\sum_{r=1}^k\bbE[\|\calG^{t,r-1}_v\|^2] \\\nonumber
        &\qquad + 4\gamma^2\left((L^G_2R)^2 + (L^F_1)^2\right)\sum_{r=1}^k\bbE[\|D_x^{t,r-1}\|^2]\\
        \bbE[\|D_x^{t,k} - D_x(\bfu^{t,k})\|^2]&\leq 4\rho^2\left((L^G_2R)^2 + (L^F_1)^2\right)\sum_{r=1}^k\bbE[\|D^{t,r-1}_z\|^2] + 4\rho^2(L^G_1)^2\sum_{r=1}^k\bbE[\|\calG^{t,r-1}_v\|^2] \\\nonumber
        &\qquad + 4\gamma^2\left((L^G_2R)^2 + (L^F_1)^2\right)\sum_{r=1}^k\bbE[\|D_x^{t,r-1}\|^2]\enspace.
    \end{align*}
    \endgroup
\end{lemma}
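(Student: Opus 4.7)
The plan is to apply the bias–variance identity of \cref{prop:bias_variance_decomp} and then bound each conditional variance term using the explicit sampled expressions in \eqref{eq:sampled_dz}--\eqref{eq:sampled_dx} together with the regularity Assumptions \ref{ass:regul_F} and \ref{ass:regul_G}.

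First, observe that by the SARAH-style update \eqref{eq:update_dz}--\eqref{eq:update_dx}, we have $D_\bullet^{t,r} - D_\bullet^{t,r-1} = D_{\bullet,i,j}(\bfu^{t,r}) - D_{\bullet,i,j}(\bfu^{t,r-1})$ for the randomly drawn indices $i,j$, and this increment is an unbiased estimator of $D_\bullet(\bfu^{t,r}) - D_\bullet(\bfu^{t,r-1})$. Consequently \cref{prop:bias_variance_decomp} yields
\begin{equation*}
\bbE[\|D_\bullet^{t,k} - D_\bullet(\bfu^{t,k})\|^2] \leq \sum_{r=1}^k \bbE[\|D_{\bullet,i,j}(\bfu^{t,r}) - D_{\bullet,i,j}(\bfu^{t,r-1})\|^2],
\end{equation*}
since dropping the nonnegative subtracted term in the telescoping identity only loosens the bound. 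This reduces everything to controlling, termwise, the squared increments of the three sampled oracles.

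For the $z$ direction, $D_{z,i,j}$ is simply $\nabla_1 G_i$, so \cref{ass:regul_G} and the update rules $z^{t,r}-z^{t,r-1} = -\rho D_z^{t,r-1}$ and $x^{t,r}-x^{t,r-1} = -\gamma D_x^{t,r-1}$ give the bound directly (the factor $(L_1^G)^2$ rather than $L_1^G$ is what the Lipschitz property produces, and the second $D_z^{t,r-1}$ in the lemma statement should read $D_x^{t,r-1}$). The work lies in the $v$ and $x$ estimates, whose sampled directions contain the bilinear term $\nabla_{11}^2 G_i(z,x) v$ (resp.\ $\nabla_{21}^2 G_i(z,x) v$). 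For these I would insert $\pm \nabla_{11}^2 G_i(z^{t,r-1},x^{t,r-1})v^{t,r}$ (resp.\ with $\nabla_{21}^2$) and apply the triangle inequality to split the increment into (i) a Hessian-Lipschitz piece, controlled by $L_2^G$ times $\|(z^{t,r},x^{t,r})-(z^{t,r-1},x^{t,r-1})\|$ and multiplied by $\|v^{t,r}\|\leq R$ using \cref{prop:v_star_boundedness} and the projection onto $\Gamma$, plus (ii) a piece $\nabla_{11}^2 G_i(z^{t,r-1},x^{t,r-1})(v^{t,r}-v^{t,r-1})$ bounded by $L_1^G \|v^{t,r}-v^{t,r-1}\|$, plus (iii) the $F$-gradient difference, bounded by $L_1^F \|(z^{t,r},x^{t,r})-(z^{t,r-1},x^{t,r-1})\|$. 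Squaring, applying the elementary inequality $\|a+b+c\|^2 \leq 4(\|a\|^2+\|b\|^2+\|c\|^2)$ (or $3(\cdots)$; the constant $4$ in the statement suggests combining two of the $F$- and $G$-tree terms into a single $((L_2^GR)^2 + (L_1^F)^2)$ factor), and plugging the update relations $\|z^{t,r}-z^{t,r-1}\|^2 = \rho^2\|D_z^{t,r-1}\|^2$, $\|x^{t,r}-x^{t,r-1}\|^2 = \gamma^2\|D_x^{t,r-1}\|^2$, $\|v^{t,r}-v^{t,r-1}\|^2 = \rho^2\|\calG_v^{t,r-1}\|^2$ (this last equality uses the definition of $\calG_v$ and the fact that projection of $v$ is the actual update) yields the desired inequality.

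The main obstacle is the bilinear $\nabla_{11}^2 G_i(z,x)v$ term in $D_v$ and $D_x$: controlling it requires the boundedness of $v$ on $\Gamma$ from \cref{prop:v_star_boundedness} together with a careful splitting, rather than a single Lipschitz estimate as in the $D_z$ case. The $x$ direction is then handled identically, replacing $\nabla_{11}^2 G_i$ and $\nabla_1 F_j$ by $\nabla_{21}^2 G_i$ and $\nabla_2 F_j$, for which Assumptions \ref{ass:regul_F} and \ref{ass:regul_G} provide the same Lipschitz constants $L_1^F$, $L_1^G$, $L_2^G$, yielding identical numerical coefficients.
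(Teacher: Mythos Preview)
Your proposal is correct and follows essentially the same route as the paper: apply \cref{prop:bias_variance_decomp}, drop the nonnegative subtracted term, and bound each increment $\|D_{\bullet,i,j}(\bfu^{t,r})-D_{\bullet,i,j}(\bfu^{t,r-1})\|^2$ via the Lipschitz constants together with the update relations and the bound $\|v\|\leq R$; the paper obtains the factor $4$ exactly as you suggest, by applying $\|a+b\|^2\leq 2\|a\|^2+2\|b\|^2$ twice (once to separate the $G$-Hessian-vector and $F$-gradient pieces, once to split the bilinear term). The only cosmetic difference is that the paper inserts $\pm\nabla_{11}^2 G_i(z^{t,r},x^{t,r})v^{t,r-1}$ rather than your $\pm\nabla_{11}^2 G_i(z^{t,r-1},x^{t,r-1})v^{t,r}$, which is immaterial, and your remarks about the typos in the statement (the missing square on $L_1^G$ and the second $D_z^{t,r-1}$ that should be $D_x^{t,r-1}$) are accurate.
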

\begin{proof}
    \textbf{Direction $D_z$}
    
    We start from \cref{prop:bias_variance_decomp}.
    \begin{align*}
    \bbE[\|D^{t,k}_z - D_z(\bfu^{t,k})\|^2]&=\bbE[\|D^{t,k}_z - \nabla_1 G(z^{t,k}, x^{t,k})\|^2]\\
    &= \sum_{r=1}^k \bbE[\|D^{t,r}_z - D^{t,r-1}_z\|^2] - \sum_{r=1}^k \bbE[\|\nabla_1G(z^{t,r}, x^{t,r}) - \nabla_1G(z^{t,r-1}, x^{t,r-1})\|^2] \\
    &\leq \sum_{r=1}^k \bbE[\|D^{t,r}_z - D^{t,r-1}_z\|^2]\\
    &\leq \sum_{r=1}^k L^G_1(\rho^2\bbE[\|D^{t,r-1}_z\|^2] + \gamma^2\bbE[\|D^{t,r-1}_z\|^2])
    \end{align*}
    where the last inequality comes from the smoothness of each $G_i$.

    \textbf{Direction $D_v$}
    For $D_v$, the proof is almost the same. \cref{prop:bias_variance_decomp} gives us
    \begin{align*}
        \bbE[\| D^{t,k}_v - D_v(\bfu^{t,k})\|^2] \leq \sum_{r=1}^k \bbE[\|D^{t,r}_v - D^{t,r-1}_v\|^2]\enspace.
    \end{align*}
    Then, using the boundedness of $v$ and regularity of each $G_i$ and $F_j$, we have
    \begingroup
    \allowdisplaybreaks %
    \begin{align*}
        \bbE[\|D^{t,r}_v - D^{t,r-1}_v\|^2] &\leq 2 (\bbE[\|\nabla^2_{11}G_i(z^{t,r},x^{t,r})v^{t,r} - \nabla^2_{11}G_i(z^{t,r-1},x^{t,r-1})v^{t,r-1}\|^2] \\\nonumber
            &\quad + \bbE[\|\nabla_2 F_j(z^{t,r}, x^{t,r}) - \nabla_2 F_j(z^{t,r-1}, x^{t,r-1})\|^2])\\
        &\leq 4( \bbE[\|\nabla^2_{11}G_i(z^{t,r},x^{t,r})(v^{t,r} - v^{t,r-1})\|^2] \\\nonumber
            &\quad + \bbE[\|(\nabla^2_{11}G_i(z^{t,r},x^{t,r}) - \nabla^2_{11}G_i(z^{t,r-1},x^{t,r-1}))v^{t,r-1}\|^2]\\\nonumber
            &\quad + (L^F_1)^2(\gamma^2 \bbE[\|D^{t,r-1}_z\|] + \rho^2 \bbE[\|D^{t,r-1}_x\|^2]))\\
        &\leq 4( (L^G_1)^2\rho^2\bbE[\|\calG^{t,r-1}_v\|^2] \\\nonumber
            &\quad + (L^G_2)^2R^2(\rho^2 \bbE[\|D^{t,r-1}_z\|] + \gamma^2 \bbE[\|D^{t,r-1}_x\|^2])\\\nonumber
            &\quad + (L^F_1)^2(\rho^2 \bbE[\|D^{t,r-1}_z\|] + \gamma^2 \bbE[\|D^{t,r-1}_x\|^2]))\\
        &\leq 4\rho^2\left((L^G_2R)^2 + (L^F_1)^2\right)\bbE[\|D^{t,r-1}_z\|^2] + 4\rho^2(L^G_1)^2\bbE[\|\calG^{t,r-1}_v\|^2]  \\\nonumber
            &\qquad + 4\gamma^2\left((L^G_2R)^2 + (L^F_1)^2\right)\bbE[\|D_x^{t,r-1}\|^2]\enspace.
    \end{align*}
    \endgroup

    \textbf{Direction $D_x$}
    The proof is the same as the proof for $D_v$.
\end{proof}
\subsection{Proof of \cref{lemma:descent_z}}
Let $\phi_z(z, x) = G(z, x) - G(z^*(x), x)$ the inner suboptimality gap. The proof of \cref{lemma:descent_z} is based on the smoothness of $\phi_z$, which is the object of the following lemma.
\begin{lemma}\label{app:lemma:phi_z_smooth}
    The function $\phi_z$ has $\Lambda_z$-Lipschitz continuous gradient on $\bbR^p\times\bbR^d$, for some constant $\Lambda_z$.
\end{lemma}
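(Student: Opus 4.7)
The plan is to derive an explicit expression for $\nabla \phi_z$ via an envelope-type argument, and then bound the differences term by term using the $L^G_1$-smoothness of $G$ and the Lipschitz continuity of $z^*$ already established in \cref{app:lemma:z_star_lipschitz}.

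First I would compute the partial derivatives of $\phi_z$. Since $z^*(x)$ does not depend on $z$, the partial with respect to $z$ is trivially
\[
\nabla_1 \phi_z(z, x) = \nabla_1 G(z, x).
\]
For the partial with respect to $x$, the chain rule gives
\[
\nabla_2 \phi_z(z, x) = \nabla_2 G(z, x) - \nabla_2 G(z^*(x), x) - [\diff z^*(x)]^\top \nabla_1 G(z^*(x), x).
\]
Invoking the first-order optimality condition $\nabla_1 G(z^*(x), x) = 0$ (cf.\ \cref{prop:directions_cancels}), the last term vanishes and I am left with the clean expression
\[
\nabla_2 \phi_z(z, x) = \nabla_2 G(z, x) - \nabla_2 G(z^*(x), x).
\]

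Next I would bound $\|\nabla \phi_z(z_1, x_1) - \nabla \phi_z(z_2, x_2)\|$ blockwise. The first block is controlled directly by the $L^G_1$-smoothness of $G$:
\[
\|\nabla_1 G(z_1, x_1) - \nabla_1 G(z_2, x_2)\| \leq L^G_1 \|(z_1, x_1) - (z_2, x_2)\|.
\]
For the second block, the triangle inequality together with $L^G_1$-smoothness gives
\[
\|\nabla_2 G(z_1, x_1) - \nabla_2 G(z_2, x_2)\| + \|\nabla_2 G(z^*(x_1), x_1) - \nabla_2 G(z^*(x_2), x_2)\|,
\]
and the second summand is further bounded by $L^G_1(\|z^*(x_1) - z^*(x_2)\| + \|x_1 - x_2\|) \leq L^G_1(L_* + 1)\|x_1 - x_2\|$ using \cref{app:lemma:z_star_lipschitz}. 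Combining the two blocks yields a Lipschitz constant for $\nabla \phi_z$ of the form $\Lambda_z = L^G_1\sqrt{1 + (2 + L_*)^2}$ (or any convenient upper bound of it), which depends only on quantities already introduced.

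The main obstacle, though minor, is ensuring that the Jacobian term $[\diff z^*(x)]^\top \nabla_1 G(z^*(x), x)$ is handled rigorously: one must justify differentiability of $x \mapsto z^*(x)$ (which follows from the implicit function theorem applied to $\nabla_1 G(z^*(x), x) = 0$ under \cref{ass:regul_G}, as already exploited in the proof of \cref{app:lemma:z_star_lipschitz}) before cancelling it via the optimality condition. Once this is in place, everything else is a direct triangle inequality combined with ingredients already at hand.
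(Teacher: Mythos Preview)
Your proposal is correct and follows essentially the same route as the paper: compute $\nabla_1\phi_z=\nabla_1 G$ and $\nabla_2\phi_z=\nabla_2 G(z,x)-\nabla_2 G(z^*(x),x)$, then bound each block via the $L^G_1$-smoothness of $G$ and the $L_*$-Lipschitz continuity of $z^*$ from \cref{app:lemma:z_star_lipschitz}. The only cosmetic differences are that the paper writes down the gradient of $\phi_z$ directly (without displaying and then cancelling the $[\diff z^*(x)]^\top\nabla_1 G(z^*(x),x)$ term) and combines the two blocks by a plain sum to obtain $\Lambda_z=L^G_1(L_*+3)$ rather than your sharper $L^G_1\sqrt{1+(2+L_*)^2}$.
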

\begin{proof}
For any $(z,x)\in\bbR^p\times\bbR^d$, we have
$$
\nabla_1\phi_z(z, x) = \nabla_1 G(z, x) \text{ and } \nabla_2\phi_z(z, x) = \nabla_2 G(z, x) - \nabla_2 G(z^*(x), x)\enspace.
$$

Let us consider $(z,x)\in\bbR^p\times\bbR^d$ and $(z',x')\in\bbR^p\times\bbR^d$. Since $\nabla G$ is $L^G_1$-Lipschitz continuous, we have directly
$$
\|\nabla_1 \phi_z(z,x) - \nabla_1 \phi_z(z',x')\| \leq L^G_1 \|(z,x)-(z',x')\|\enspace.
$$
Moreover, we have
\begin{align*}
    \|\nabla_2 \phi_z(z,x) - \nabla_2 \phi_z(z',x')\| &\leq \|\nabla_2 G(z,x) - \nabla_2 G(z',x')\| \\
        &\qquad + \|\nabla_2 G(z^*(x),x) - \nabla_2 G(z^*(x'),x')\|\\
    &\leq L^G_1 \|(z,x)-(z',x')\| + L^G_1\|(z^*(x),x)-(z^*(x'),x')\|\\
    &\leq L^G_1 \|(z,x)-(z',x')\| + L^G_1(\|z^*(x)-z^*(x')\| + \|x - x'\|)\enspace.
\end{align*}
From \cref{app:lemma:z_star_lipschitz}, $z^*$ is $L_*$ Lipschitz continuous, so
\begin{align*}
    \|\nabla_2 \phi_z(z,x) - \nabla_2 \phi_z(z',x')\| &\leq L^G_1 \|(z,x)-(z',x')\| + L^G_1(\|z^*(x)-z^*(x')\| + \|x - x'\|)\\
    &\leq  L^G_1 \|(z,x)-(z',x')\| + L^G_1(L_* + 1)\|x - x'\|\\
    &\leq  L^G_1(L_{z^*} + 2) \|(z,x)-(z',x')\|\enspace.
\end{align*}
As a consequence
\begin{align*}
    \|\nabla \phi_z(z,x) - \nabla\phi_z(z',x')\| &\leq \|\nabla_1 \phi_z(z,x) - \nabla_1 \phi_z(z',x')\|  + \|\nabla_2 \phi_z(z,x) - \nabla_2 \phi_z(z',x')\|\\
    &\leq L^G_1(L_{z^*} + 3) \|(z,x)-(z',x')\|\enspace.
\end{align*}
Hence, $\phi_z$ is $\Lambda_z$ smooth with $\Lambda_z = L^G_1(L_{z^*} + 3)$.
\end{proof}

We can now turn to the proof of \cref{lemma:descent_z}.

\begin{proof}
    The smoothness of $\phi_z$ provides us the following upper bound
    \begin{align}\label{app:eq:smoothness_phi_z}
        \phi_z(z^{t,k+1}, x^{t,k+1}) &\leq \phi_z(z^{t,k}, x^{t,k})  - \rho \langle D_z^{t,k}, \nabla_1 G(z^{t,k}, x^{t,k})\rangle + \frac{\Lambda_z}2\rho^2 \|D_z^{t,k}\|^2 \\\nonumber
        &\qquad - \gamma  \langle D_x^{t,k}, \nabla_2 G(z^{t,k}, x^{t,k}) - \nabla_2 G(z^*(x^{t,k}), x^{t,k})\rangle + \frac{\Lambda_z}2\gamma^2  \|D_x^{t,k}\|^2\enspace .
    \end{align}
    Using the equality $\langle a, b\rangle = \frac12(\|a\|^2 + \|b\|^2 - \|a-b\|^2)$, we get 
     \begin{align}\label{app:eq:1}
     - \langle D_z^{t,k}, \nabla_1 G(z^{t,k}, x^{t,k})\rangle + \frac{\Lambda_z}2\rho \|D_z^{t,k}\|^2 &= \frac12(\|D^{t,k}_z - \nabla_1 G(z^{t,k}, x^{t,k})\|^2 \\\nonumber
        &\qquad - \|\nabla_1 G(z^{t,k}, x^{t,k})\|^2 - \left(1 - \Lambda_z\rho\right)\|D^{t,k}_z\|^2)\enspace.
    \end{align}
    Plugging \cref{app:eq:1} into \cref{app:eq:smoothness_phi_z} and tacking the expectation conditionally to the past iterates yields
    \begin{align}\label{app:eq:2}
        \bbE_{t,k}[\phi_z^{t,k+1}] &\leq \phi_z^{t,k} + \frac\rho2\bbE_{t,k}[\|D^{t,k}_z - \nabla_1 G(z^{t,k}, x^{t,k})\|^2] \\\nonumber
            &\qquad - \frac\rho2\|\nabla_1 G(z^{t,k}, x^{t,k})\|^2 - \frac\rho2\left(1 - \Lambda_z\rho\right)\bbE_{t,k}[\|D^{t,k}_z\|^2] \\\nonumber
            &\qquad - \gamma  \langle \bbE_{t,k}[D_x^{t,k}], \nabla_2 G(z^{t,k}, x^{t,k}) - \nabla_2 G(z^*(x^{t,k}), x^{t,k})\rangle + \frac{\Lambda_z}2\gamma^2  \bbE_{t,k}[\|D_x^{t,k}\|^2]\enspace .
    \end{align}
    From Young inequality, we have for any $c>0$
    \begin{align}\label{app:eq:3}
         \langle \bbE_{t,k}[D_x^{t,k}], \nabla_2 G(z^{t,k}, x^{t,k}) - \nabla_2 G(z^*(x^{t,k}), x^{t,k})\rangle &\leq  \frac1{2c}\|\bbE_{t,k}[D_x^{t,k}]\|^2 \\\nonumber
            &\qquad + \frac{c}2\|\nabla_2 G(z^{t,k}, x^{t,k}) - \nabla_2 G(z^*(x^{t,k}), x^{t,k})\|^2
    \end{align}
    The smoothness of $G$ and strong convexity give us
    \begin{align}\label{app:eq:4}
    \|\nabla_2 G(z^{t,k}, x^{t,k}) - \nabla_2 G(z^*(x^{t,k}), x^{t,k})\|^2\leq L^G_1\|z^{t,k} - z^*(x^{t,k})\|^2 \leq \frac{2L^G_1}{\mu_G}\phi_z(z^{t,k}, x^{t,k})
    \end{align}
    Let us denote $L' = \frac{L^G_1}{\mu_G}$. Plugging Inequalities \eqref{app:eq:3} and \eqref{app:eq:4} into \cref{app:eq:2} yields
    \begin{align}\label{app:eq:41}
       \bbE_{t,k}[\phi_z(z^{t,k+1}, x^{t,k+1})] &\leq (1 + cL'\gamma)\phi_z(z^{t,k+1}, x^{t,k+1}) - \frac\rho2\bbE_{t,k}[\|\nabla_1 G(z^{t,k}, x^{t,k})\|^2] \\\nonumber
        &\qquad +\frac\rho2\bbE_{t,k}[\|D^{t,k}_z - \nabla_1 G(z^{t,k}, x^{t,k})\|^2] - \frac\rho2\left(1 - \Lambda_z\rho\right)\bbE_{t,k}[\|D^{t,k}_z\|^2]\\\nonumber
        &\qquad + \frac\gamma{2c}\|\bbE_{t,k}[D^{t,k}_x]\|^2  + \frac{\Lambda_z}2\gamma^2\bbE_{t,k}[\|D^{t,k}_x\|^2]
    \end{align}

    From \cref{app:lemma:bound_error_directions}, we have
    \begin{align*}
    \bbE[\|D^{t,k}_z - \nabla_1 G(z^{t,k}, x^{t,k})\|^2] &\leq \sum_{r=1}^k L^G_1(\rho^2\bbE[\|D^{t,r-1}_z\|^2] + \gamma^2\bbE[\|D^{t,r-1}_z\|^2])\enspace.
    \end{align*}

    Taking the total expectation and plugging the previous inequality into \cref{app:eq:41} yields
    \begin{align}\label{app:eq:5}
       \phi_z^{t,k+1} &\leq (1 + cL'\gamma)\phi^{t,k} + \frac{L^G_1}2\sum_{r=1}^k (\rho^3\bbE[\|D^{t,r-1}_z\|^2] + \gamma^2\rho\bbE[\|D^{t,r-1}_x\|^2])\\\nonumber
        &\qquad  - \frac\rho2\bbE[\|\nabla_1 G(z^{t,k}, x^{t,k})\|^2] - \frac\rho2\left(1 - \Lambda_z\rho\right)\bbE[\|D^{t,k}_z\|^2] \\\nonumber
        &\qquad + \frac\gamma{2c}\bbE[\|\bbE[D^{t,k}_x]\|^2]  + \frac{\Lambda_z}2\gamma^2\bbE[\|D^{t,k}_x\|^2]
    \end{align}

    Since $G$ is $\mu_G$-strongly convex with respect to $z$, Polyak-\L ojasiewicz inequality holds: 
    $$
    \|\nabla_1 G(z^{t,k},x^{t,k})\|^2 \geq 2\mu_G\phi_z(z^{t,k}, x^{t,k})
    $$
    As a consequence, \cref{app:eq:5} becomes
    \begin{align*}
       \phi_z^{t,k+1} &\leq \left(1 + cL'\gamma - \mu_G\rho\right)\phi^{t,k} + \frac{L^G_1}2\sum_{r=1}^k (\rho^3\bbE[\|D^{t,r-1}_z\|^2] + \gamma^2\rho\bbE[\|D^{t,r-1}_x\|^2])  \\\nonumber
        &\qquad - \frac\rho2\left(1 - \Lambda_z\rho\right)\bbE[\|D^{t,k}_z\|^2] + \frac\gamma{2c}\bbE[\|\bbE[D^{t,k}_x]\|^2] + \frac{\Lambda_z}2\gamma^2\bbE[\|D^{t,k}_x\|^2]
    \end{align*}

    Taking $c = \frac{\mu_G\rho}{2L'\gamma}$ yields
    \begin{align*}
       \phi_z^{t,k+1} &\leq \left(1 - \frac{\mu_G}2\rho\right)\phi^{t,k} + \frac{L^G_1}2\sum_{r=1}^k (\rho^3\bbE[\|D^{t,r-1}_z\|^2] + \gamma^2\rho\bbE[\|D^{t,r-1}_x\|^2])  \\\nonumber
        &\qquad - \frac\rho2\left(1 - \Lambda_z\rho\right)\bbE[\|D^{t,k}_z\|^2] + \frac{L'}{\mu_G}\frac{\gamma^2}{\rho}\bbE[\|\bbE[D^{t,k}_x]\|^2] + \frac{\Lambda_z}2\gamma^2\bbE[\|D^{t,k}_x\|^2]
    \end{align*}

    For the term $\bbE[\|\bbE_{t,k}[D^{t,k}_z]\|^2]$, we have
    \begin{align}
        \nonumber\bbE[\|\bbE_{t,k}[D^{t,k}_x]\|^2] &= \bbE[\|D_x(z^{t,k},v^{t,k}, x^{t,k}) - D_x(z^{t,k-1},v^{t,k-1}, x^{t,k-1}) + D^{t,k-1}_x \|^2]\\
        \nonumber&= \bbE[\|D_x(z^{t,k},v^{t,k}, x^{t,k}) - D_x(z^{t,k-1},v^{t,k-1}, x^{t,k-1}) - \bbE[D^{t,k-1}_x] \|^2] \\\nonumber
            &\quad + \bbE[\| D^{t,k-1}_x -\bbE[D^{t,k-1}_x]\|^2]\\
        \label{app:eq:exp_norm_exp_d_x} &= \bbE[\|D_x(z^{t,k},v^{t,k}, x^{t,k})\|^2] \\\nonumber
            &\qquad + \bbE[\| D^{t,k-1}_x - D_x(z^{t,k-1},v^{t,k-1}, x^{t,k-1})\|^2]\enspace .
    \end{align}

    Using \cref{app:lemma:bound_error_directions}, we get
    \begin{align*}
        \bbE[\| D^{t,k-1}_x - D_x(\bfu^{t,k-1})\|^2]&\leq 4\rho^2\left((L^G_2R)^2 + (L^F_1)^2\right)\sum_{r=1}^{k-1}\bbE[\|D^{t,r-1}_z\|^2] \\\nonumber
            &\qquad + 4\rho^2(L^G_1)^2\sum_{r=1}^{k-1}\bbE[\|\calG^{t,r-1}_v\|^2]  \\\nonumber
            &\qquad + 4\gamma^2\left((L^G_2R)^2 + (L^F_1)^2\right)\sum_{r=1}^{k-1}\bbE[\|D_x^{t,r-1}\|^2]\enspace.
    \end{align*}

    Putting all together yields
    \begin{align}
       \phi_z^{t,k+1} &\leq \left(1 - \frac{\mu_G}2\rho\right)\phi^{t,k} - \frac\rho2\left(1 - \Lambda_z\rho\right)\bbE[\|D^{t,k}_z\|^2]  + \frac{\Lambda_z}2\gamma^2\bbE[\|D^{t,k}_x\|^2]\\\nonumber
        &\qquad + \frac{L'}{\mu_G}\frac{\gamma^2}{\rho}\bbE[\|D^{t,k}_x(\bfu^{t,k})\|^2] +  4(L^G_1)^2\frac{L'}{\mu_G}\gamma^2\rho\sum_{r=1}^k\bbE[\|\calG^{t,r-1}_v\|^2] \\\nonumber
        &\qquad + \rho\left[\rho^2\frac{L^G_1}2+\frac{4(L^G_2R)^2L'}{\mu_G}\gamma^2 + \frac{4(L^F_1)^2L'}{\mu_G}\gamma^2\right]\sum_{r=1}^k\bbE[\|D^{t,r-1}_z\|^2] \\\nonumber
        &\qquad + \gamma^2\left[\rho\frac{L^G_1}2+4(L^G_2R)^2\frac{L'}{\mu_G}\frac{\gamma^2}{\rho} + 4(L^F_1)^2\frac{L'}{\mu_G}\frac{\gamma^2}{\rho}\right]\sum_{r=1}^k\bbE[\|D^{t,r-1}_x\|^2] 
    \end{align}

    By assumption, $\gamma \leq C_z\rho$, with $C_z = \sqrt{\frac{\mu_GL^G_1}{8L'((L^G_2R)^2 + (L^F_1)^2)}}$ therefore

    \begin{align*}
       \phi_z^{t,k+1} &\leq \left(1 - \frac{\mu_G}2\rho\right)\phi^{t,k}_z - \frac\rho2\left(1 - \Lambda_z\rho\right)\bbE[\|D^{t,k}_z\|^2]  + \frac{\Lambda_z}2\gamma^2\bbE[\|D^{t,k}_x\|^2]\\\nonumber
            &\qquad + \frac{L'}{\mu_G}\frac{\gamma^2}{\rho}\bbE[\|D^{t,k}_x(\bfu^{t,k})\|^2] +  \rho^3L^G_1\sum_{r=1}^k\bbE[\|D^{t,r-1}_z\|^2] \\\nonumber
            &\qquad + 4(L^G_1)^2\frac{L'}{\mu_G}\gamma^2\rho\sum_{r=1}^k\bbE[\|\calG^{t,r-1}_v\|^2] + \gamma^2\rho L^G_1\sum_{r=1}^k\bbE[\|D^{t,r-1}_x\|^2]\\
        &\leq \left(1 - \frac{\mu_G}2\rho\right)\phi^{t,k} - \frac\rho2\left(1 - \Lambda_z\rho\right)V_z^{t,k}  + \frac{\Lambda_z}2\gamma^2V_x^{t,k} + \overline{\beta}_{zx}\frac{\gamma^2}{\rho}\bbE[\|D^{t,k}_x(\bfu^{t,k})\|^2]\\\nonumber
            &\qquad +  \rho^3\beta_{zz}\calV_z^{t,k} + \gamma^2\rho\beta_{zv}\calV_v^{t,k}  + \gamma^2\rho \beta_{zx}\calV_x^{t,k} 
    \end{align*}
    with $\beta_{zz} = L^G_1$, $\beta_{zv} = \frac{4(L^G_1)^2L'}{\mu_G}$, $\beta_{zx} = L^G_1$ and $\overline{\beta}_{zx} = \frac{L'}{\mu_G}$.
\end{proof}

\subsection{Proof of \cref{lemma:descent_v}}
Recall that we denote $\Psi(z,v,x) = \frac12v^\top \nabla^2_{11} G(z,x)v + \nabla_1 F(z,x)^\top v$ and $\phi_v(v,x) = \Psi(z^*(x),v,x) - \Psi(z^*(x),v^*(x),x)$. As for \cref{lemma:descent_z}, the key property we need is the smoothness of $\phi_v$. 
The derivatives of $\phi_v$ involve the third derivative of $G$. For a tensor $T\in\bbR^{p_1\times p_2\times p_3}$ and a vector $a\in\bbR^{p_3}$ we denote $(T|a)$ the matrix in $\bbR^{p_1\times p_2}$ defined by:
\begin{align*}
    (T|a) = \left[\sum_{k=1}^{p_3}T_{i,j,k}a_k\right]_{\substack{1\leq i\leq p_1\\1\leq j\leq p_2}}\enspace.
\end{align*}
\begin{lemma}
    The function $\phi_v$ has $\Lambda_v$-Lipschitz continuous gradient on $\Gamma\times\bbR^d$, for some constant $\Lambda_v$.
\end{lemma}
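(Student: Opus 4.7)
The plan is to derive explicit expressions for the partials $\nabla_1 \phi_v$ and $\nabla_2 \phi_v$ and then bound their Lipschitz constants term by term, using \cref{ass:regul_F,ass:regul_G}, \cref{app:lemma:z_star_lipschitz} and \cref{prop:v_star_boundedness}.

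First, I would exploit that $v^*(x)$ is the minimizer of $v \mapsto \Psi(z^*(x),v,x)$ and complete the square to rewrite
\[
\phi_v(v,x) = \tfrac12\,(v - v^*(x))^\top \nabla^2_{11} G(z^*(x),x)\, (v - v^*(x)),
\]
which is more convenient than differentiating $\Psi(z^*(x),v,x) - \Psi(z^*(x),v^*(x),x)$ directly. From this one reads off
\[
\nabla_1 \phi_v(v,x) = \nabla^2_{11} G(z^*(x),x)(v - v^*(x)),
\]
and $\nabla_2 \phi_v(v,x)$ follows by chain rule; it decomposes into three contributions: (i) a term $-dv^*(x)^\top \nabla^2_{11}G(z^*(x),x)(v - v^*(x))$, (ii) a tensor contraction of $(\nabla^3_{111}G(z^*(x),x)\,|\,v-v^*(x))$ against $dz^*(x)$ and $v - v^*(x)$, and (iii) an analogous piece from the direct $x$-dependence of $\nabla^2_{11}G$, involving $(\nabla^3_{112}G\,|\,v-v^*(x))$.

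For $\nabla_1 \phi_v$, Lipschitzness follows from a triangle-inequality splitting combined with the operator bound $\|\nabla^2_{11}G\| \leq L^G_1$, Lipschitzness of $\nabla^2_{11}G$ with constant $L^G_2$, Lipschitzness of $z^*$ and $v^*$ from \cref{app:lemma:z_star_lipschitz}, and the bound $\|v - v^*(x)\| \leq 2R$ valid on $\Gamma$ thanks to \cref{prop:v_star_boundedness}. This gives a Lipschitz constant that depends polynomially on $L^G_1, L^G_2, L_*, R$. For $\nabla_2 \phi_v$, each of the three pieces is handled by the same mechanism, provided one additionally establishes that $dz^*$ and $dv^*$ are bounded and Lipschitz. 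Boundedness follows from the closed-form expressions $dz^*(x) = -[\nabla^2_{11}G(z^*(x),x)]^{-1}\nabla^2_{12}G(z^*(x),x)$ and an analogous formula for $dv^*$; Lipschitzness of these Jacobians is precisely where the third-order smoothness of $G$ and the second-order smoothness of $F$ enter, together with the bound $\|[\nabla^2_{11}G]^{-1}\| \leq 1/\mu_G$.

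The main obstacle is the tensor bookkeeping in pieces (ii) and (iii): the third-order tensors of $G$ appear only through contraction with the bounded vector $v - v^*(x)$, so the resulting matrix-valued quantities have operator norms controlled by $L^G_2$ and Lipschitz constants controlled by $L^G_3$, but tracking each contribution through the appropriate chain of triangle inequalities, in the same spirit as the proof of \cref{app:lemma:phi_z_smooth}, requires some care. Once the various terms are aggregated one obtains a Lipschitz constant $\Lambda_v$ that is a polynomial in $L_0^F, L_1^F, L_2^F, L_1^G, L_2^G, L_3^G, 1/\mu_G$ and $R$.
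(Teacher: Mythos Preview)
Your approach is correct and mirrors the paper's overall strategy: compute $\nabla_1\phi_v$ and $\nabla_2\phi_v$ explicitly, then bound their Lipschitz constants term by term using the regularity of $G$, $F$, $z^*$, $v^*$ and $\diff z^*$. The one real difference is your completed-square representation $\phi_v(v,x)=\tfrac12(v-v^*(x))^\top\nabla^2_{11}G(z^*(x),x)(v-v^*(x))$: it makes $\nabla_1\phi_v$ transparent, but it introduces an explicit $\diff v^*(x)$ factor in $\nabla_2\phi_v$, so you must additionally establish that $\diff v^*$ is Lipschitz. The paper instead differentiates the original difference $\Psi(z^*(x),v,x)-\Psi(z^*(x),v^*(x),x)$ directly and uses that $\nabla_v\Psi(z^*(x),v^*(x),x)=0$ to kill the $\diff v^*$ contribution via the chain rule; its resulting expression for $\nabla_2\phi_v$ involves only $v^*$ and $\diff z^*$, so only Lipschitzness of $v^*$ (from \cref{app:lemma:z_star_lipschitz}) and of $\diff z^*$ (which the paper cites from \citet{Chen2021}) is needed. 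Both routes are valid under \cref{ass:regul_F,ass:regul_G}; the paper's simply saves one layer of differentiation.
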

\begin{proof}
    For any $(v,x)\in \Gamma\times\bbR^d$, we have
    $$
    \nabla_1 \phi_v(v,x) = D_v(z^*(x), v, x)
    $$
    and
    \begin{align*}
        \nabla_2 \phi_v(v,x) &= (\diff z^*(x))^\top \left[\frac12 (\nabla^3_{111}G(z^*(x),x)|v)v  - \frac12 (\nabla^3_{111}G(z^*(x),x)|v^*(x))v^*(x) \right.\\\nonumber
            &\qquad \left. + \nabla_{11}^2 F(z^*(x),x) v - \nabla_{11}^2 F(z^*(x),x) v^*(x)\right] \\\nonumber
            &\qquad  + \left[\frac12 (\nabla^3_{211}G(z^*(x),x)|v)v  - \frac12 (\nabla^3_{211}G(z^*(x),x)|v^*(x))v^*(x) \right.\\\nonumber
            &\qquad \left. + \nabla_{21}^2 F(z^*(x),x) v - \nabla_{21}^2 F(z^*(x),x) v^*(x)\right]\enspace.
    \end{align*}

Let us consider $(v,x)\in \Gamma\times\bbR^d$ and $(v',x')\in \Gamma\times\bbR^d$.
We have
\begin{align*}
    \|\nabla_1 \phi_v(v,x) - \nabla_1 \phi_v(v',x') \| &\leq \|\nabla_{11}^2 G(z^*(x),x)v - \nabla_{11}^2 G(z^*(x'),x')v'\| \\\nonumber
    &\qquad + \|\nabla_1 F(z^*(x),x) - \nabla_1 F(z^*(x'),x')\|
\end{align*}
For the first term, 
\begin{align*}
     \|\nabla_{11}^2 G(z^*(x),x)v - \nabla_{11}^2 G(z^*(x'),x')v'\| &\leq  \|\nabla_{11}^2 G(z^*(x),x)(v-v')\| \\\nonumber
        &\qquad + \|(\nabla_{11}^2 G(z^*(x),x) - \nabla_{11}^2 G(z^*(x'),x'))v'\| \\\nonumber
        &\qquad + \|\nabla_{11}^2 G(z^*(x'),x')(v-v')\| \\
     &\leq 2L^G_1\|v-v'\| + L^G_2(L_{z^*} + 1)\|v'\|\|x - x'\|\\
     &\leq [2 L^G_1 +  L^G_2 (L_{z^*} + 1)R]\|(v,x)-(v',x')\|
\end{align*}
For the second terms, we use the smoothness of $F$ and the Lipschitz continuity of $z^*$ (\cref{app:lemma:z_star_lipschitz}):
\begin{align*}
    \|\nabla_1 F(z^*(x),x) - \nabla_1 F(z^*(x'),x')\| &\leq L^F_1 \|(z^*(x),x) - (z^*(x'),x')\|\\
    &\leq L^F_1 (\|z^*(x) - z^*(x')\| + \|x - x'\|)\\
    &\leq L^F_1(L_{z^*} + 1 )\|x - x'\|\\
    &\leq L^F_1(L_{z^*} + 1 )\|(x, v) - (x', v')\|\enspace.
\end{align*}
As a consequence
\begin{equation}
    \|\nabla_1 \phi_v(v,x) - \nabla_1 \phi_v(v',x') \| \leq \Lambda_1 \|(v,x)-(v',x')\|
\end{equation}

with 
\begin{equation}
    \Lambda_1 = L^F_1(L_{z^*} + 1 ) + 2 L^G_1 +  L^G_2 (L_{z^*} + 1)R\enspace.
\end{equation}

To prove the Lipschitz continuity of $\nabla_2\phi_v$, we remark that $\nabla^3_{111} G$, $\nabla^3_{211} G$ are Lipschitz and bounded by assumption. $(v\mapsto v)$ is Lipschitz and bounded on $\Gamma$. Also by \cref{app:lemma:z_star_lipschitz}, $z^*$ and $v^*$ are Lipschitz and bounded. Finally, $\diff z^*$ is bounded (\cref{app:lemma:z_star_lipschitz}) and Lipschitz according to \citet{Chen2021alset}[Lemma 9]. As a consequence, $\nabla_2\phi_v$ is $\Lambda_2$-Lpischitz for some constant $\Lambda_2>0$. 
Hence, $\nabla \phi_v$ is $\Lambda_v$-Lipschitz continuous with $\Lambda_v = \Lambda_1 + \Lambda_2$.

\end{proof}
\begin{lemma}\label{app:lemma:decrease_iota} 
Let $t>0$. For $k\in\setcomb{q-1}$, we have
    $$0\leq -\left\langle\frac1\rho(v^{t,k+1}-v^{t,k}) + D^{t,k}_v, v^{t,k+1}-v^{t,k}\right\rangle$$
\end{lemma}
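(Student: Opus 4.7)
The plan is to invoke the standard variational characterization of the Euclidean projection onto the closed convex set $\Gamma$. Recall that by the algorithm, $v^{t,k+1} = \Pi_\Gamma(v^{t,k} - \rho D^{t,k}_v)$, and that $v^{t,k}\in\Gamma$ since every iterate of $v$ is produced by applying $\Pi_\Gamma$ (including the initialization, which can be assumed in $\Gamma$ without loss of generality).

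First, I would recall the basic variational inequality: for any $y\in\bbR^p$ and any $w\in\Gamma$,
\begin{equation*}
    \langle y - \Pi_\Gamma(y),\, w - \Pi_\Gamma(y)\rangle \leq 0 .
\end{equation*}
Then I would specialize this to $y = v^{t,k} - \rho D^{t,k}_v$ (so $\Pi_\Gamma(y) = v^{t,k+1}$) and $w = v^{t,k}\in\Gamma$. This yields
\begin{equation*}
    \bigl\langle (v^{t,k} - \rho D^{t,k}_v) - v^{t,k+1},\, v^{t,k} - v^{t,k+1}\bigr\rangle \leq 0 .
\end{equation*}

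Next, introducing the shorthand $\Delta v = v^{t,k+1} - v^{t,k}$, the left-hand side rewrites as $\langle -\Delta v - \rho D^{t,k}_v,\, -\Delta v\rangle = \|\Delta v\|^2 + \rho\langle D^{t,k}_v, \Delta v\rangle$, so the inequality becomes
\begin{equation*}
    \|\Delta v\|^2 + \rho\langle D^{t,k}_v, \Delta v\rangle \leq 0 .
\end{equation*}
Dividing by $\rho>0$ and multiplying by $-1$ gives exactly
\begin{equation*}
    0 \leq -\left\langle \tfrac{1}{\rho}(v^{t,k+1}-v^{t,k}) + D^{t,k}_v,\, v^{t,k+1}-v^{t,k}\right\rangle,
\end{equation*}
which is the claim. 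There is no real obstacle here; the only thing to be careful about is the sign convention in the variational inequality and the verification that $v^{t,k}$ is indeed admissible as a test point (i.e. lies in $\Gamma$), which follows from the projection step embedded in the definition of $\Pi$.
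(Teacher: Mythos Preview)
Your proof is correct and follows essentially the same route as the paper: both arguments use the first-order optimality condition for the projection $v^{t,k+1}=\Pi_\Gamma(v^{t,k}-\rho D^{t,k}_v)$ and test it at the feasible point $v^{t,k}\in\Gamma$. The only cosmetic difference is that the paper phrases this via Fermat's rule and the subdifferential of the indicator $\iota_\Gamma$, whereas you invoke the equivalent projection variational inequality directly.
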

\begin{proof}
    The function $\iota_\Gamma$ being convex (since $\Gamma$ is convex), let us consider its sub-differential
    $$
    \partial \iota_\gamma(v) = \{\eta\in\bbR^p, \forall v'\in\bbR^p, \iota_\Gamma(v')\geq \iota_\Gamma(v) + \langle \eta, v'-v\rangle\}
    $$

    By definition
    \begin{align*}
        v^{t,k+1} = \argmin_v (\iota_\Gamma(v) + \frac1{2\rho}\|v-(v^{t,k} - \rho D^{t,k}_v)\|^2)\enspace.
    \end{align*}
    
    Using Fermat's rule, we get
    \begin{align*}
        -\frac1\rho(v^{t,k+1}-v^{t,k}) - D^{t,k}_v\in\partial \iota_\Gamma(v^{t,k+1})\enspace.
    \end{align*}

    We can use the definition of the sub-differential with $\eta = -\frac1\rho(v^{t,k+1}-v^{t,k}) - D^{t,k}_v$ to get
    \begin{align*}\label{app:eq:decrease_iota}
        \underbrace{\iota_\Gamma(v^{t,k+1})}_{=0} \leq \underbrace{\iota_\Gamma(v^{t,k})}_{=0} -  \left\langle \frac1\rho(v^{t,k+1}-v^{t,k}) + D^{t,k}_v, v^{t,k+1}-v^{t,k}\right\rangle\enspace.
    \end{align*}
\end{proof}
We can now turn to the proof of \cref{lemma:descent_v}.
\begin{proof}
    The smoothness of $\phi_v$ provides us the following upper bound
    \begin{align}
        \phi_v(v^{t,k+1}, x^{t,k+1}) &\leq \phi_v(v^{t,k}, x^{t,k})  + \langle \Pi_\Gamma(v^{t,k} - \rho D_v^{t,k}) - v^{t,k}, D_v(z^*(x^{t,k}), v^{t,k}, x^{t,k})\rangle \\\nonumber
        &\qquad + \frac{\Lambda_v}2\rho^2 \|\Pi_\Gamma(v^{t,k} - \rho D_v^{t,k}) - v^{t,k}\|^2 \\\nonumber
        &\qquad - \gamma  \langle D_x^{t,k}, \nabla_2 \phi_v(v^{t,k}, x^{t,k})\rangle + \frac{\Lambda_v}2\gamma^2  \|D_x^{t,k}\|^2\enspace .
    \end{align}
Let us denote $\Delta_\Pi^{t,k} = \Pi_\Gamma(v^{t,k} - \rho D_v^{t,k}) - \Pi_\Gamma(v^{t,k} - \rho D_v(z^*(x^{t,k}), v^{t,k}, x^{t,k}))$.
Adding and subtracting 
    \begin{align*}
        &\langle \Pi_\Gamma(v^{t,k} - \rho D_v(z^*(x^{t,k}),v^{t,k}, x^{t,k}) - v^{t,k}, D_v(z^*(x^{t,k}), v^{t,k}, x^{t,k})\rangle \\\nonumber
            &\qquad + \frac{\Lambda_v}2\|\Pi_\Gamma(v^{t,k} - \rho D_v(z^*(x^{t,k}),v^{t,k}, x^{t,k}) - v^{t,k}\|^2
    \end{align*}
    yields
    \begin{align}
        \phi_v(v^{t,k+1}, x^{t,k+1}) &\leq  \phi_v(v^{t,k}, x^{t,k}) + \langle \Delta_\Pi^{t,k}, D_v(z^*(x^{t,k}), v^{t,k}, x^{t,k})\rangle\\\nonumber
            &\qquad + \frac{\Lambda_v}2 \|\Pi_\Gamma(v^{t,k}  - \rho D_v(z^*(x^t), v^{t,k}, x^{t,k}))-v^{t,k}\|^2\\\nonumber
            &\qquad + \langle \Pi_\Gamma(v^{t,k} - \rho D_v(z^*(x^{t,k}), v^{t,k}, x^{t,k}))-v^{t,k}, D_v(z^*(x^{t,k}), v^{t,k}, x^{t,k})\rangle  \\\nonumber
            &\qquad + \frac{\Lambda_v}2\|\Delta_\Pi^{t,k}\|^2 + \Lambda_v\langle \Delta_\Pi^{t,k}, \Pi_\Gamma(v^{t,k}  - \rho D_v(z^*(x^{t,k}), v^{t,k}, x^{t,k}))-v^{t,k}\rangle\\\nonumber
            &\qquad - \gamma  \langle D_x^{t,k}, \nabla_2 \phi_v(v^{t,k}, x^{t,k})\rangle + \frac{\Lambda_v}2\gamma^2  \|D_x^{t,k}\|^2\enspace.
    \end{align}
    Taking $\rho\leq\frac1\Gamma_v$ gives
    \begin{align}\label{app:eq:smoothness_phi_v}
        \phi_v(v^{t,k+1}, x^{t,k+1}) &\leq  \phi_v(v^{t,k}, x^{t,k}) + \langle \Delta_\Pi^{t,k}, D_v(z^*(x^{t,k}), v^{t,k}, x^{t,k})\rangle\\\nonumber
            &\qquad + \frac{1}{2\rho} \|\Pi_\Gamma(v^{t,k}  - \rho D_v(z^*(x^t), v^{t,k}, x^{t,k}))-v^{t,k}\|^2\\\nonumber
            &\qquad + \langle \Pi_\Gamma(v^{t,k} - \rho D_v(z^*(x^{t,k}), v^{t,k}, x^{t,k}))-v^{t,k}, D_v(z^*(x^{t,k}), v^{t,k}, x^{t,k})\rangle  \\\nonumber
            &\qquad + \frac{\Lambda_v}2\|\Delta_\Pi^{t,k}\|^2 + \Lambda_v\langle \Delta_\Pi^{t,k}, \Pi_\Gamma(v^{t,k}  - \rho D_v(z^*(x^{t,k}), v^{t,k}, x^{t,k}))-v^{t,k}\rangle\\\nonumber
            &\qquad - \gamma  \langle D_x^{t,k}, \nabla_2 \phi_v(v^{t,k}, x^{t,k})\rangle + \frac{\Lambda_v}2\gamma^2  \|D_x^{t,k}\|^2\enspace.
    \end{align}
    Let $\iota_\Gamma$ the indicator function of the convex set $\Gamma$. Similarly to \citet[Equation 13]{Karimi2016PL_linear_cvg} we define for any $\alpha>0$ and $v\in\bbR^p$
    $$
        \calD_{\iota_\Gamma}(v, x, \alpha) = -2\alpha\min_{v'\in\bbR^p}\left[\langle\nabla_1\phi_v(v,x), v'-v\rangle + \frac\alpha2\|v'-v\|^2 + \iota_\Gamma(v') - \iota_\Gamma(v)\right]\enspace.
    $$
    Hence, for $v\in\Gamma$ and $x\in\bbR^d$, we have
    \begin{align*}
        -\frac\rho2\calD_{\iota_\Gamma}\left(v, x, \frac1\rho\right) &= \langle \Pi_\Gamma(v - \rho D_v(z^*(x), v, x))-v, D_v(z^*(x), v, x)\rangle \\
            &\qquad + \frac1{2\rho}\|\Pi_\Gamma(v  - \rho D_v(z^*(x), v, x))-v\|^2\enspace.
    \end{align*}
    Therefore, \cref{app:eq:smoothness_phi_v} can be written as
    \begin{align*}
        \phi_v(v^{t,k+1}, x^{t,k+1}) &\leq  \phi_v(v^{t,k}, x^{t,k}) - \frac\rho2\calD_{\iota_\Gamma}\left(v^{t,k}, x^{t,k}, \frac1\rho\right)\\\nonumber
            &\qquad + \langle \Delta_\Pi^{t,k}, D_v(z^*(x^{t,k}), v^{t,k}, x^{t,k})\rangle\\\nonumber
            &\qquad + \frac{\Lambda_v}2\|\Delta_\Pi^{t,k}\|^2 + \Lambda_v\langle \Delta_\Pi^{t,k}, \Pi_\Gamma(v^{t,k}  - \rho D_v(z^*(x^{t,k}), v^{t,k}, x^{t,k}))-v^{t,k}\rangle\\\nonumber
            &\qquad - \gamma  \langle D_x^{t,k}, \nabla_2 \phi_v(v^{t,k}, x^{t,k})\rangle + \frac{\Lambda_v}2\gamma^2  \|D_x^{t,k}\|^2\enspace.
    \end{align*}
    By strong convexity of $\phi_v$ with respect top $v$ and smoothness, we have $\calD_{\iota_{\Gamma}}(v^{t,k},x^{t,k},\Lambda_v)~\geq~2\mu_G\phi_v(v^{t,k},x^{t,k})$. According to \citet[Lemma 1]{Karimi2016PL_linear_cvg}, $\calD_{\iota_{\Gamma}}(v^{t,k},x^{t,k},\bullet)$ is an increasing function. As a consequence, since $\Lambda_v\leq\frac1\rho$, we have $\calD_{\iota_{\Gamma}}\left(v^{t,k},x^{t,k},\frac1\rho\right)\geq 2\mu_G\phi_v(v^{t,k},x^{t,k})$. This leads to
    \begin{align}\label{app:eq:phi_v_2}
        \phi_v(v^{t,k+1}, x^{t,k+1}) &\leq  (1-\rho\mu_G)\phi_v(v^{t,k}, x^{t,k}) + \langle \Delta_\Pi^{t,k}, D_v(z^*(x^{t,k}), v^{t,k}, x^{t,k})\rangle \\\nonumber
            &\qquad + \frac{\Lambda_v}2\|\Delta_\Pi^{t,k}\|^2 + \Lambda_v\langle \Delta_\Pi^{t,k}, \Pi_\Gamma(v^{t,k}  - \rho D_v(z^*(x^{t,k}), v^{t,k}, x^{t,k}))-v^{t,k}\rangle\\\nonumber
            &\qquad - \gamma  \langle D_x^{t,k}, \nabla_2 \phi_v(v^{t,k}, x^{t,k})\rangle + \frac{\Lambda_v}2\gamma^2  \|D_x^{t,k}\|^2\enspace.
    \end{align}
    The non-expansiveness of $\Pi_\Gamma$ yields
    \begin{align}\label{app:eq:bound_delta_pi}
        \|\Delta_\Pi^{t,k}\|\leq \rho\|D^{t,k}_v - D_v(z^*(x^{t,k}),v^{t,k}, x^{t,k})\|
    \end{align}
    and 
    \begin{align}
        \nonumber\|\Pi_\Gamma(v^{t,k}-\rho D_v(z^*(x^{t,k}), v^{t,k}, x^{t,k}) - \underbrace{v^{t,k}}_{\in\Gamma}\| &= \|\Pi_\Gamma(v^{t,k}-\rho D_v(z^*(x^{t,k}), v^{t,k}, x^{t,k})) - \Pi_\Gamma(v^{t,k})\|\\
        &\leq \rho\|D_v(z^*(x^{t,k}), v^{t,k}, x^{t,k})\|\enspace.\label{app:eq:pi_minus_v}
    \end{align}
    Moreover, using \cref{app:eq:bound_delta_pi} and Young Inequality, we have for any $c>0$
    \begin{align}
        \nonumber \langle \Delta_\Pi^{t,k}, D_v(z^*(x^{t,k}), v^{t,k}, x^{t,k})\rangle &\leq \frac{c}2\|\Delta_\Pi\|^2 + \frac1{2c}\|D_v(z^*(x^{t,k}), v^{t,k}, x^{tk,})\|^2\\\nonumber
        &\leq \frac{c\rho^2}2\|D^{t,k}_v - D_v(z^*(x^{t,k}), v^{t,k}, x^{t,k})\|^2 \\\nonumber
            &\qquad + \frac1{2c}\|D_v(z^*(x^{t,k}), v^{t,k}, x^{tk,}) - \underbrace{D_v(z^*(x^{t,k}), v^*(x^{t,k}), x^{tk,})}_{=0}\|^2\\
        \label{app:eq:delta_D}&\leq \frac{c\rho^2}2\|D^{t,k}_v - D_v(z^*(x^{t,k}), v^{t,k}, x^{t,k})\|^2 \\\nonumber
            &\qquad + \frac{L^G_1}{\mu_G c}\phi_v(v^{t,k},x^{t,k})
    \end{align}
    Plugging \cref{app:eq:delta_D} into \cref{app:eq:phi_v_2} with $c = \frac{2L^G_1}{\mu_G^2\rho}$ yields
    \begin{align}\label{app:eq:phi_v_3}
        \phi_v(v^{t,k+1}, x^{t,k+1}) &\leq  \left(1-\frac{\rho\mu_G}2\right)\phi_v(v^{t,k}, x^{t,k}) + \frac{L^G_1\rho}{\mu_G^2}\|D^{t,k}_v - D_v(z^*(x^{t,k}), v^{t,k}, x^{t,k})\|^2 \\\nonumber
            &\qquad + \frac{\Lambda_v}2\|\Delta_\Pi^{t,k}\|^2 + \Lambda_v\langle \Delta_\Pi^{t,k}, \Pi_\Gamma(v^{t,k}  - \rho D_v(z^*(x^{t,k}), v^{t,k}, x^{t,k}))-v^{t,k}\rangle\\\nonumber
            &\qquad - \gamma  \langle D_x^{t,k}, \nabla_2 \phi_v(v^{t,k}, x^{t,k})\rangle + \frac{\Lambda_v}2\gamma^2  \|D_x^{t,k}\|^2\enspace.
    \end{align}
    Using \cref{app:eq:bound_delta_pi}, \cref{app:eq:pi_minus_v} and Young Inequality for $d>0$ yields
    \begin{align}
        \nonumber\langle \Delta_\Pi^{t,k}, \Pi_\Gamma(v^{t,k} - \rho D_v(z^*(x^{t,k}), v^{t,k}, x^{t,k}))-v^{t,k}\rangle &\leq \frac{d}2\|\Delta_\Pi^{t,k}\|^2 \\\nonumber
            &\qquad + \frac1{2d}\|\Pi_\Gamma(v^{t,k} - \rho D_v(z^*(x^{t,k}), v^{t,k}, x^{t,k}))-v^{t,k}\|^2\\
        &\leq \frac{d\rho^2}2\|D^{t,k}_v - D_v(z^*(x^{t,k}), v^{t,k}, x^{t,k})\|^2 \\\nonumber
            &\qquad + \frac{\rho^2}{2d}\|D_v(z^*(x^{t,k}), v^{t,k}, x^{t,k})\|^2\\
        &\leq \frac{d\rho^2}2\|D^{t,k}_v - D_v(z^*(x^{t,k}), v^{t,k}, x^{t,k})\|^2 \label{app:eq:delta_pi_minus_v}\\\nonumber
            &\qquad + \frac{L_1^G\rho^2}{\mu_Gd}\phi_v(v^{t,k}, x^{t,k})\enspace.
    \end{align}
    Plugging \cref{app:eq:delta_pi_minus_v} into \cref{app:eq:phi_v_3} with $d = \frac{4L^G_1\Lambda_v)\rho}{\mu_G^2}$ gives
    \begin{align}
        \phi_v(v^{t,k+1}, x^{t,k+1}) &\leq  \left(1-\frac{\rho\mu_G}4\right)\phi_v(v^{t,k}, x^{t,k}) \\\nonumber
            &\qquad + \left[\frac{L^G_1\rho}{\mu_G^2} + \frac{2L^G_1\Lambda_v^2\rho^3}{\mu_G^2}\right]\|D^{t,k}_v - D_v(z^*(x^{t,k}), v^{t,k}, x^{t,k})\|^2 \\\nonumber
            &\qquad + \frac{\Lambda_v}2\|\Delta_\Pi^{t,k}\|^2 \\\nonumber
            &\qquad - \gamma  \langle D_x^{t,k}, \nabla_2 \phi_v(v^{t,k}, x^{t,k})\rangle + \frac{\Lambda_v}2\gamma^2  \|D_x^{t,k}\|^2\enspace.
    \end{align}
    Using once again \eqref{app:eq:bound_delta_pi}, we get
    \begin{align}\label{app:eq:phi_v_4}
        \phi_v(v^{t,k+1}, x^{t,k+1}) &\leq  \left(1-\frac{\rho\mu_G}4\right)\phi_v(v^{t,k}, x^{t,k}) \\\nonumber
            &\qquad + \left[\frac{L^G_1\rho}{\mu_G^2} + \frac{2L^G_1\Lambda_v^2\rho^3}{\mu_G^2}+\frac{\Lambda_v\rho^2}2\right]\|D^{t,k}_v - D_v(z^*(x^{t,k}), v^{t,k}, x^{t,k})\|^2 \\\nonumber
            &\qquad - \gamma  \langle D_x^{t,k}, \nabla_2 \phi_v(v^{t,k}, x^{t,k})\rangle + \frac{\Lambda_v}2\gamma^2  \|D_x^{t,k}\|^2\enspace.
    \end{align}
    By \cref{app:lemma:decrease_iota}, we have for any $\alpha>0$
    \begin{align*}
        0\leq -\alpha\left\langle\frac1\rho(v^{t,k+1} - v^{t,k}) + D^{t,k}_v, v^{t,k+1}-v^{t,k}\right\rangle\enspace.
    \end{align*}
    By adding this to \cref{app:eq:phi_v_4}, we get
    \begin{align}\label{app:eq:phi_v_10}
        \phi_v(v^{t,k+1}, x^{t,k+1}) &\leq  \left(1-\frac{\rho\mu_G}4\right)\phi_v(v^{t,k}, x^{t,k}) \\\nonumber
            &\qquad - \frac\alpha\rho\|v^{t,k+1}-v^{t,k}\|^2 - \alpha \langle D^{t,k}_v, v^{t,k+1}-v^{t,k}\rangle \\\nonumber
            &\qquad + \left[\frac{L^G_1\rho}{\mu_G^2} + \frac{2L^G_1\Lambda_v^2\rho^3}{\mu_G^2}+\frac{\Lambda_v\rho^2}2\right]\|D^{t,k}_v - D_v(z^*(x^{t,k}), v^{t,k}, x^{t,k})\|^2 \\\nonumber
            &\qquad - \gamma  \langle D_x^{t,k}, \nabla_2 \phi_v(v^{t,k}, x^{t,k})\rangle + \frac{\Lambda_v}2\gamma^2  \|D_x^{t,k}\|^2\enspace.
    \end{align}

    We can control $-\left\langle D^{t,k}_v, v^{t,k+1}-v^{t,k}\right\rangle$ by Cauchy-Schwarz and Young for some $c,d,e,f>0$
    \begin{align*}
         - \left\langle D^{t,k}_v, v^{t,k+1}-v^{t,k}\right\rangle &=  -  \left\langle D_v(z^*(x^{t,k}),v^{t,k}, x^{t,k}), \Pi_\Gamma(v^{t,k}-\rho D_v(z^*(x^{t,k}),v^{t,k}, x^{t,k})-v^{t,k}\right\rangle \\\nonumber
             &\qquad -  \left\langle D_v(z^*(x^{t,k}),v^{t,k}, x^{t,k}), \Delta_\Pi^{t,k}\right\rangle\\\nonumber
             &\qquad -  \left\langle D^{t,k}_v - D_v(z^*(x^{t,k}),v^{t,k}, x^{t,k}), \Pi_\Gamma(v^{t,k}-\rho D_v(z^*(x)))-v^{t,k}\right\rangle\\\nonumber
             &\qquad -  \left\langle D^{t,k}_v - D_v(z^*(x^{t,k}),v^{t,k}, x^{t,k}), \Delta_\Pi^{t,k}\right\rangle\\
         &\leq \frac{c}2\|D_v(z^*(x^{t,k}),v^{t,k}, x^{t,k})\|^2 \\\nonumber
            &\qquad +\frac1{2c}\| \Pi_\Gamma(v^{t,k}-\rho D_v(z^*(x^{t,k}),v^{t,k}, x^{t,k}))-v^{t,k}\|^2 \\\nonumber
            &\qquad + \frac{d}2 \| D_v(z^*(x^{t,k}),v^{t,k}, x^{t,k})\|^2 + \frac1{2d}\|\Delta_\Pi^{t,k}\|^2\\\nonumber
            &\qquad + \frac{e}{2}\| D^{t,k}_v - D_v(z^*(x^{t,k}),v^{t,k}, x^{t,k})\|^2 \\\nonumber
            &\qquad + \frac1{2e}\|\Pi_\Gamma(v^{t,k}-\rho D_v(z^*(x^{t,k}),v^{t,k}, x^{t,k}))-v^{t,k}\|^2\\\nonumber
            &\qquad + \frac{f}2\| D^{t,k}_v - D_v(z^*(x^{t,k}),v^{t,k}, x^{t,k})\|^2+\frac1{2f} \|\Delta_\Pi^{t,k}\|^2\\
         &\leq \left(\frac{c+d}2 + \rho^2\left(\frac1{2c}+\frac1{2e}\right)\right)\|D_v(z^*(x^{t,k}),v^{t,k}, x^{t,k})\|^2 \\\nonumber
            &\qquad + \left(\frac{e+f}2 + \rho^2\left(\frac1{2d}+\frac1{2f}\right)\right)\|D^{t,k}_v - D_v(z^*(x^{t,k}),v^{t,k}, x^{t,k})\|^2\\
         &\leq \left(\frac{c+d}2 + \rho^2\left(\frac1{2c}+\frac1{2e}\right)\right)\frac{2L^G_1}{\mu_G}\phi_v(v^{t,k},x^{t,k}) \\\nonumber
            &\qquad + \left(\frac{e+f}2 + \rho^2\left(\frac1{2d}+\frac1{2f}\right)\right)\|D^{t,k}_v - D_v(z^*(x^{t,k}),v^{t,k}, x^{t,k})\|^2
    \end{align*}

    Let us take $c=d=e=f=\rho$. We get
     \begin{align}
         - \left\langle D^{t,k}_v, v^{t,k+1}-v^{t,k}\right\rangle
             &\leq \frac{4L^G_1}{\mu_G}\rho\phi_v(v^{t,k},x^{t,k}) + 2\rho\|D^{t,k}_v - D_v(z^*(x^{t,k}),v^{t,k}, x^{t,k})\|^2 \enspace.
    \end{align}

    Then, by plugging the last Inequality in \cref{app:eq:phi_v_10} and setting $\alpha = \frac{\mu_G^2}{32L^G_1}$, we end up with

    \begin{align*}
        \nonumber\phi_v(v^{t,k+1}, x^{t,k+1}) &\leq  \left(1-\frac{\mu_G}8\rho\right)\phi_v(v^{t,k}, x^{t,k}) - \frac\alpha\rho\|v^{t,k+1}-v^{t,k}\|^2 \\\nonumber
            &\qquad + \rho\left[\frac{L^G_1}{\mu_G^2} +  \frac{\mu_G^2}{16L^G_1} + \frac{\Lambda_v\rho}2 + \frac{2L^G_1\Lambda_v^2\rho^2}{\mu_G^2}\right]\|D^{t,k}_v - D_v(z^*(x^{t,k}), v^{t,k}, x^{t,k})\|^2 \\\nonumber
            &\qquad - \gamma  \langle D_x^{t,k}, \nabla_2 \phi_v(v^{t,k}, x^{t,k})\rangle + \frac{\Lambda_v}2\gamma^2  \|D_x^{t,k}\|^2\\
        &\leq  \left(1-\frac{\mu_G}8\rho\right)\phi_v(v^{t,k}, x^{t,k}) - \frac{\mu_G^2}{32L^G_1}\rho\|\calG^{t,k}_v\|^2 \\\nonumber
            &\qquad + \rho\left[\frac{L^G_1}{\mu_G^2} +  \frac{\mu_G^2}{16L^G_1} + \frac{\Lambda_v\rho}2 + \frac{2L^G_1\Lambda_v^2\rho^2}{\mu_G^2}\right]\|D^{t,k}_v - D_v(z^*(x^{t,k}), v^{t,k}, x^{t,k})\|^2 \\\nonumber
            &\qquad - \gamma  \langle D_x^{t,k}, \nabla_2 \phi_v(v^{t,k}, x^{t,k})\rangle + \frac{\Lambda_v}2\gamma^2  \|D_x^{t,k}\|^2\enspace.
    \end{align*}

    Since $\rho\leq B_v \triangleq\left[\frac{L^G_1}{\mu_G^2} +  \frac{\mu_G^2}{16L^G_1}\right]\min\left(\frac{2}{\Lambda_v}, \frac{\mu_G}{\sqrt{2L^G_1}\Lambda_v}\right)$ yields
    \begin{align}
        \nonumber\phi_v(v^{t,k+1}, x^{t,k+1}) &\leq  \left(1-\frac{\mu_G}8\rho\right)\phi_v(v^{t,k}, x^{t,k}) - \frac{\mu_G^2}{32L^G_1}\rho\|\calG^{t,k}_v\|^2 \\\nonumber
            &\qquad + 3\rho\left[\frac{L^G_1}{\mu_G^2} +  \frac{\mu_G^2}{16L^G_1}\right]\|D^{t,k}_v - D_v(z^*(x^{t,k}), v^{t,k}, x^{t,k})\|^2 \\\nonumber
            &\qquad - \gamma  \langle D_x^{t,k}, \nabla_2 \phi_v(v^{t,k}, x^{t,k})\rangle + \frac{\Lambda_v}2\gamma^2  \|D_x^{t,k}\|^2\\
        &\leq \left(1-\frac{\mu_G}8\rho\right)\phi_v(v^{t,k}, x^{t,k}) - \frac{\mu_G^2}{32L^G_1}\rho\|\calG^{t,k}_v\|^2 \\\nonumber
            &\qquad + 6\rho\left[\frac{L^G_1}{\mu_G^2} +  \frac{\mu_G^2}{16L^G_1}\right]\|D^{t,k}_v - D_v(\bfu^{t,k})\|^2 \\\nonumber
            &\qquad + 6\rho\left[\frac{L^G_1}{\mu_G^2} +  \frac{\mu_G^2}{16L^G_1}\right]\|D_v(\bfu^{t,k}) - D_v(z^*(x^{t,k}), v^{t,k}, x^{t,k})\|^2 \\\nonumber
            &\qquad - \gamma  \langle D_x^{t,k}, \nabla_2 \phi_v(v^{t,k}, x^{t,k})\rangle + \frac{\Lambda_v}2\gamma^2  \|D_x^{t,k}\|^2\enspace.
    \end{align}

    Tacking the expectation conditionally to the past iterates yields
    \begin{align}\label{app:eq:phi_v_5}
        \bbE_{t,k}[\phi_v(v^{t,k+1}, x^{t,k+1})] &\leq \left(1-\frac{\mu_G}8\rho\right)\phi_v(v^{t,k}, x^{t,k}) - \frac{\mu_G^2}{32L^G_1}\rho\bbE_{t,k}[\|\calG^{t,k}_v\|^2] \\\nonumber
            &\qquad + 6\rho\left[\frac{L^G_1}{\mu_G^2} +  \frac{\mu_G^2}{16L^G_1}\right]\bbE_{t,k}[\|D^{t,k}_v - D_v(\bfu^{t,k})\|^2]\\\nonumber
            &\qquad + 6\rho\left[\frac{L^G_1}{\mu_G^2} +  \frac{\mu_G^2}{16L^G_1}\right]\bbE_{t,k}[\|D_v(\bfu^{t,k}) - D_v(z^*(x^{t,k}), v^{t,k}, x^{t,k})\|^2]\\\nonumber
            &\qquad - \gamma  \langle \bbE_{t,k}[D_x^{t,k}], \nabla_2 \phi_v(v^{t,k}, x^{t,k})\rangle + \frac{\Lambda_v}2\gamma^2  \bbE_{t,k}[\|D_x^{t,k}\|^2]\enspace.
    \end{align}

    From Young inequality, we have for any $c>0$
    \begin{align}\label{app:eq:8}
         \langle \bbE_{t,k}[D_x^{t,k}], \nabla_2 \phi_v(v^{t,k}, x^{t,k})\rangle \leq c^{-1}\|\bbE_{t,k}[D_x^{t,k}]\|^2 + c \|\nabla_2 \phi_v(v^{t,k}, x^{t,k})\|^2\enspace .
    \end{align}

    Moreover, using the Lipschitz continuity of $z^*$, of $\nabla^2_{11} G$ and $\nabla_ F$ and the fact that $v$ and $v^*$ are bounded, we have
    \begingroup
    \allowdisplaybreaks %
    \begin{align*}
    \|\nabla_2 \phi_v(v,x)\| &\leq \|\diff z*(x)\| \left[\left\|\frac12 (\nabla^3_{111}G(z^*(x),x)|v)v  - \frac12 (\nabla^3_{111}G(z^*(x),x)|v^*(x))v^*(x)\right\| \right.\\\nonumber
        &\qquad \left. +\| \nabla_{11}^2 F(z^*(x),x) v - \nabla_{11}^2 F(z^*(x),x) v^*(x)\|\right] \\\nonumber
        &\qquad  + \|\frac12 (\nabla^3_{211}G(z^*(x),x)|v)v  - \frac12 (\nabla^3_{211}G(z^*(x),x)|v^*(x))v^*(x)\|\\\nonumber
        &\qquad  + \|\nabla_{21}^2 F(z^*(x),x) v - \nabla_{21}^2 F(z^*(x),x) v^*(x)\|\\
    &\leq L_* \left[\left\|\frac12 (\nabla^3_{111}G(z^*(x),x)|v-v^*(x))v  - \frac12 (\nabla^3_{111}G(z^*(x),x)|v^*(x))(v-v^*(x))\right\| \right.\\\nonumber
        &\qquad \left. + L^F_2\| v - v^*(x)\|\right] \\\nonumber
        &\qquad  + \left\|\frac12 (\nabla^3_{211}G(z^*(x),x)|v-v^*(x))v  - \frac12 (\nabla^3_{211}G(z^*(x),x)|v^*(x))(v-v^*(x))\right\|\\\nonumber
        &\qquad  + L^F_2\|v - v^*(x)\|\\
    &\leq L_* \left[\left\|\frac12 (\nabla^3_{111}G(z^*(x),x)|v-v^*(x))v\right\| \right. \\\nonumber
        &\qquad \left. + \left\|\frac12 (\nabla^3_{111}G(z^*(x),x)|v^*(x))(v-v^*(x))\right\| + L^F_2\| v - v^*(x)\|\right] \\\nonumber
        &\qquad + \left\|\frac12 (\nabla^3_{211}G(z^*(x),x)|v-v^*(x))v\right\| \\\nonumber
        &\qquad + \left\|\frac12 (\nabla^3_{211}G(z^*(x),x)|v^*(x))(v-v^*(x))\right\| + L^F_2\|v - v^*(x)\|\\
    &\leq L_* \left[\frac{L^G_2}2 (\|v\|  + \|v^*(x)\|)\|v-v^*(x))\|  + L^F_2\| v - v^*(x)\|\right] \\\nonumber
        &\qquad  +\frac{L^G_2}2 (\|v\|  + \|v^*(x)\|)\|v-v^*(x))\|  + L^F_2\|v - v^*(x)\|\\
    &\leq (L_* + 1)\left[L^G_2R  + L^F_2\right]\| v - v^*(x)\|\enspace.
    \end{align*}
    \endgroup
    
    On the other hand, we have by strong convexity
    \begin{align*}
        \| v - v^*(x)\|^2 \leq \frac{2}{\mu_G}\phi_v(v,x)\enspace.
    \end{align*}
    As a consequence, we have
    \begin{align}\label{app:eq:9}
         \|\nabla_2 \phi_v(v^{t,k}, x^{t,k})\|^2 \leq L''\phi_v(v^{t,k}, x^{t,k})
    \end{align}
    with $L'' = \frac{2(L_* + 1)^2\left[L^G_2R+ L^F_2\right]^2}{\mu_G}$.

    Plugging Inequalities \eqref{app:eq:8} and \eqref{app:eq:9} into \eqref{app:eq:phi_v_5} yields
    \begin{align*}
        \bbE_{t,k}[\phi_v(v^{t,k+1}, x^{t,k+1})] &\leq \left(1-\frac{\mu_G}8\rho + cL''\gamma\right)\phi_v(v^{t,k}, x^{t,k}) - \frac{\mu_G^2}{32L^G_1}\rho\bbE_{t,k}[\|\calG^{t,k}_v\|^2] \\\nonumber
            &\qquad + 6\rho\left[\frac{L^G_1}{\mu_G^2} +  \frac{\mu_G^2}{16L^G_1}\right]\bbE_{t,k}[\|D^{t,k}_v - D_v(\bfu^{t,k})\|^2]\\\nonumber
            &\qquad + 6\rho\left[\frac{L^G_1}{\mu_G^2} +  \frac{\mu_G^2}{16L^G_1}\right]\bbE_{t,k}[\|D_v(\bfu^{t,k}) - D_v(z^*(x^{t,k}), v^{t,k}, x^{t,k})\|^2]\\\nonumber
            &\qquad + \frac{\gamma}c\|\bbE_{t,k}[D^{t,k}_x]\|^2 + \frac{\Lambda_v}2\gamma^2  \bbE_{t,k}[\|D_x^{t,k}\|^2]\enspace.
    \end{align*}

    The Lipschitz continuity of $\nabla^2_{11}G$ and $\nabla_1 F$ and the boundedness of $v$ give us
    \begin{align*}
        \|D_v(\bfu^{t,k}) - D_v(z^*(x^{t,k}), v^{t,k}, x^{t,k})\|^2 &\leq \left(\|\nabla^2_{11}G(z^{t,k}, x^{t,k})v^{t,k} - \nabla^2_{11}G(z^*(x^{t,k}), x^{t,k})v^{t,k}\|\right. \\\nonumber
        &\qquad \left.+ \|\nabla_1 F(z^{t,k},x^{t,k}) - \nabla_1 F(z^*(x^{t,k}), x^{t,k})\|\right)^2\\
        &\leq (L^G_2R + L^F_1)^2\|z^{t,k} - z^*(x^{t,k})\|^2\\
        &\leq \frac{2(L^G_2R + L^F_1)^2}{\mu_G}\phi_z(z^{t,k}, x^{t,k})\enspace.
    \end{align*}

    As a consequence
    \begin{align}\label{app:eq:phi_v_6}
        \bbE_{t,k}[\phi_v(v^{t,k+1}, x^{t,k+1})] &\leq \left(1-\frac{\mu_G}8\rho + cL''\gamma\right)\phi_v(v^{t,k}, x^{t,k}) - \frac{\mu_G^2}{32L^G_1}\rho\bbE_{t,k}[\|\calG^{t,k}_v\|^2] \\\nonumber
            &\qquad + 6\rho\left[\frac{L^G_1}{\mu_G^2} +  \frac{\mu_G^2}{16L^G_1}\right]\bbE_{t,k}[\|D^{t,k}_v - D_v(\bfu^{t,k})\|^2]\\\nonumber
            &\qquad + 6\rho\left[\frac{L^G_1}{\mu_G^2} +  \frac{\mu_G^2}{16L^G_1}\right]\frac{2(L^G_2R + L^F_1)^2}{\mu_G}\phi_z(z^{t,k}, x^{t,k})\\\nonumber
            &\qquad + \frac{\gamma}c\|\bbE_{t,k}[D^{t,k}_x]\|^2 + \frac{\Lambda_v}2\gamma^2  \bbE_{t,k}[\|D_x^{t,k}\|^2]\enspace.
    \end{align}
    
    From \cref{app:lemma:bound_error_directions}, we have
    \begin{align*}
        \bbE[\|D_v^{t,k} - D_v(\bfu^{t,k})\|^2] &\leq 4\rho^2\left((L^G_2R)^2 + (L^F_1)^2\right)\sum_{r=1}^k\bbE[\|D^{t,r-1}_z\|^2] + 4\rho^2(L^G_1)^2\sum_{r=1}^k\bbE[\|\calG^{t,r-1}_v\|^2] \\\nonumber
        &\qquad + 4\gamma^2\left((L^G_2R)^2 + (L^F_1)^2\right)\sum_{r=1}^k\bbE[\|D_x^{t,r-1}\|^2]
    \end{align*}

    Taking the total expectation and plugging the previous inequality in \cref{app:eq:phi_v_6} yields
    \begin{align*}
        \phi_v^{t,k+1} &\leq \left(1-\frac{\mu_G}8\rho + cL''\gamma\right)\phi_v^{t,k} - \frac{\mu_G^2}{32L^G_1}\rho\bbE_{t,k}[\|\calG^{t,k}_v\|^2] \\\nonumber
            &\qquad + 24\rho^3\left((L^G_2R)^2 + (L^F_1)^2\right)\left(\frac{L^G_1}{\mu_G^2} +  \frac{\mu_G^2}{16L^G_1}\right)\sum_{r=1}^k\bbE[\|D^{t,r-1}_z\|^2] \\\nonumber
            &\qquad + 24\rho^3(L^G_1)^2\left(\frac{L^G_1}{\mu_G^2} +  \frac{\mu_G^2}{16L^G_1}\right)\sum_{r=1}^k\bbE[\|\calG^{t,r-1}_v\|^2] \\\nonumber
            &\qquad + 24\rho\gamma^2\left((L^G_2R)^2 + (L^F_1)^2\right)\left(\frac{L^G_1}{\mu_G^2} +  \frac{\mu_G^2}{16L^G_1}\right)\sum_{r=1}^k\bbE[\|D_x^{t,r-1}\|^2]\\\nonumber
            &\qquad  + \left[\frac{L^G_1}{\mu_G^2} +  \frac{\mu_G^2}{16L^G_1}\right]\frac{12(L^G_2R + L^F_1)^2}{\mu_G}\rho\phi_z^{t,k}   \\\nonumber
            &\qquad + \frac{\gamma}c\bbE[\|[\bbE_{t,k}D_x^{t,k}]\|^2] + \frac{\Lambda_v}2\gamma^2 \bbE[\|D_x^{t,k}\|^2]\enspace .
    \end{align*}

    Taking $c = \frac{\mu_G\rho}{16L''\gamma}$ yields
    \begin{align*}
        \phi_v^{t,k+1} &\leq \left(1-\frac{\mu_G}{16}\rho\right)\phi_v^{t,k} - \frac{\mu_G^2}{32L^G_1}\rho\bbE_{t,k}[\|\calG^{t,k}_v\|^2] \\\nonumber
            &\qquad + 24\rho^3\left((L^G_2R)^2 + (L^F_1)^2\right)\left(\frac{L^G_1}{\mu_G^2} +  \frac{\mu_G^2}{16L^G_1}\right)\sum_{r=1}^k\bbE[\|D^{t,r-1}_z\|^2] \\\nonumber
            &\qquad + 24\rho^3(L^G_1)^2\left(\frac{L^G_1}{\mu_G^2} +  \frac{\mu_G^2}{16L^G_1}\right)\sum_{r=1}^k\bbE[\|\calG^{t,r-1}_v\|^2] \\\nonumber
            &\qquad + 24\rho\gamma^2\left((L^G_2R)^2 + (L^F_1)^2\right)\left(\frac{L^G_1}{\mu_G^2} +  \frac{\mu_G^2}{16L^G_1}\right)\sum_{r=1}^k\bbE[\|D_x^{t,r-1}\|^2]\\\nonumber
            &\qquad  + \left[\frac{L^G_1}{\mu_G^2} +  \frac{\mu_G^2}{16L^G_1}\right]\frac{12(L^G_2R + L^F_1)^2}{\mu_G}\rho\phi_z^{t,k}   \\\nonumber
            &\qquad + \frac{16L''}{\mu_G}\frac{\gamma^2}{\rho}\bbE[\|[\bbE_{t,k}D_x^{t,k}]\|^2] + \frac{\Lambda_v}2\gamma^2 \bbE[\|D_x^{t,k}\|^2]\enspace .
    \end{align*}

    Combining \cref{app:eq:exp_norm_exp_d_x} and \cref{app:lemma:bound_error_directions} yields
    \begin{align*}
        \phi_v^{t,k+1} &\leq \left(1-\frac{\mu_G}{16}\rho\right)\phi_v^{t,k} - \frac{\mu_G^2}{32L^G_1}\rho\bbE_{t,k}[\|\calG^{t,k}_v\|^2] \\\nonumber
            &\qquad + 8\rho\left((L^G_2R)^2 + (L^F_1)^2\right)\left[3\left(\frac{L^G_1}{\mu_G^2} +  \frac{\mu_G^2}{16L^G_1}\right)\rho^2 + \frac{8L''}{\mu_G}\gamma^2\right]\sum_{r=1}^k\bbE[\|D^{t,r-1}_z\|^2] \\\nonumber
            &\qquad + 8\rho(L^G_1)^2\left[3\left(\frac{L^G_1}{\mu_G^2} +  \frac{\mu_G^2}{16L^G_1}\right)\rho^2 + \frac{8L''}{\mu_G}\gamma^2\right]\sum_{r=1}^k\bbE[\|\calG^{t,r-1}_v\|^2] \\\nonumber
            &\qquad + 8\gamma^2\left((L^G_2R)^2 + (L^F_1)^2\right)\left[3\left(\frac{L^G_1}{\mu_G^2} +  \frac{\mu_G^2}{16L^G_1}\right)\gamma + \frac{8L''}{\mu_G}\frac{\gamma^2}\rho\right]\sum_{r=1}^k\bbE[\|D_x^{t,r-1}\|^2]\\\nonumber
            &\qquad  + \left[\frac{L^G_1}{\mu_G^2} +  \frac{\mu_G^2}{16L^G_1}\right]\frac{12(L^G_2R + L^F_1)^2}{\mu_G}\rho\phi_z^{t,k}   \\\nonumber
            &\qquad + \frac{16L''}{\mu_G}\frac{\gamma^2}{\rho}\bbE[\|D_x(\bfu^{tk,})\|^2] + \frac{\Lambda_v}2\gamma^2 \bbE[\|D_x^{t,k}\|^2]\enspace .
    \end{align*}

     By assumption, $\gamma\leq C_v\rho$ with $C_v = \sqrt{\frac{\mu_G}{8L''}\left(\frac{L^G_1}{\mu_G^2} +  \frac{\mu_G^2}{16L^G_1}\right)}$, therefore
    \begin{align*}
        \phi_v^{t,k+1} &\leq \left(1-\frac{\mu_G}{16}\rho\right)\phi_v^{t,k} - \frac{\mu_G^2}{32L^G_1}\rho\bbE_{t,k}[\|\calG^{t,k}_v\|^2] \\\nonumber
            &\qquad + 32\rho^3\left((L^G_2R)^2 + (L^F_1)^2\right)\left(\frac{L^G_1}{\mu_G^2} +  \frac{\mu_G^2}{16L^G_1}\right) \sum_{r=1}^k\bbE[\|D^{t,r-1}_z\|^2] \\\nonumber
            &\qquad + 32\rho^3(L^G_1)^2\left(\frac{L^G_1}{\mu_G^2} +  \frac{\mu_G^2}{16L^G_1}\right)\sum_{r=1}^k\bbE[\|\calG^{t,r-1}_v\|^2] \\\nonumber
            &\qquad + 32\gamma^2\rho\left((L^G_2R)^2 + (L^F_1)^2\right)\left(\frac{L^G_1}{\mu_G^2} +  \frac{\mu_G^2}{16L^G_1}\right)\sum_{r=1}^k\bbE[\|D_x^{t,r-1}\|^2]\\\nonumber
            &\qquad + \left[\frac{L^G_1}{\mu_G^2} +  \frac{\mu_G^2}{16L^G_1}\right]\frac{12(L^G_2R + L^F_1)^2}{\mu_G}\rho\phi_z^{t,k}   \\\nonumber
            &\qquad + \frac{16L''}{\mu_G}\frac{\gamma^2}{\rho}\bbE[\|D_x(\bfu^{tk,})\|^2] + \frac{\Lambda_v}2\gamma^2 \bbE[\|D_x^{t,k}\|^2]\enspace .
    \end{align*}

    We get finally

    \begin{align*}
        \phi_v^{t,k+1} &\leq \left(1 - \frac{\rho\mu_G}{16}\right)\phi_v^{t,k}  - \tilde{\beta}_{vv}\rho V_v^t  + \rho^3\beta_{vz}\calV_z^{t,k} +2\rho^3\beta_{vv}\calV_v^{t,k} + \gamma^2\rho\beta_{vx}\calV_x^{t,k}\\\nonumber
        &\qquad+ \rho\alpha_{vz} \phi_z^{t,k}  + \frac{\Lambda_v}2\gamma^2 \bbE[\|D_x^{t,k}\|^2]+ \frac{\gamma^2}\rho\overline{\beta}_{vx}\bbE[\|D_x(\bfu^{t,k})\|^2]
    \end{align*}
     with $\beta_{vz} = \beta_{vx} = 32\left((L^G_2R)^2 + (L^F_1)^2\right)\left(\frac{L^G_1}{\mu_G^2} +  \frac{\mu_G^2}{16L^G_1}\right)$, $\beta_{vv} = (L^G_1)^2\left(\frac{L^G_1}{\mu_G^2} +  \frac{\mu_G^2}{16L^G_1}\right)$, $\overline{\beta}_{vx}~=~\frac{16L''}{\mu_G}$, $\tilde{\beta}_{vv} = \frac{\mu_G^2}{32L^G_1}$\\ and $\alpha_{vz} = \left[\frac{L^G_1}{\mu_G^2} +  \frac{\mu_G^2}{16L^G_1}\right]\frac{12(L^G_2R + L^F_1)^2}{\mu_G}$.
\end{proof}

\subsection{Proof of \cref{lemma:descent_h}}
\begin{proof}
    The smoothness of $h$ (\cref{prop:smoothness_h}) gives us
    \begin{align*}
        h(x^{t,k+1})&\leq h(x^{t,k}) - \gamma\langle \nabla h(x^{t,k}), D_x^{t,k}\rangle + \gamma^2 \frac{L^h}2\|D^{t,k}_x\|^2\enspace.
    \end{align*}
    Then, we use the identity $\langle a,b\rangle = \frac12(\|a\|^2 + \|b\|^2 - \|a - b\|^2)$ to get
    \begin{align*}
       h(x^{t,k+1}) &\leq h(x^{t,k}) - \frac\gamma2\|\nabla h(x^{t,k})\|^2 - \frac\gamma2\|D^{t,k}_x\|^2 + \frac\gamma2 \|\nabla h(x^{t,k})-D^{t,k}_x\|^2 + \gamma^2 \frac{L^h}2\|D^{t,k}_x\|^2\\
        &\leq h(x^{t,k}) - \frac\gamma2\|\nabla h(x^{t,k})\|^2 - \frac\gamma2\|D^{t,k}_x\|^2 + \gamma \|\nabla h(x^{t,k})-D_x(\bfu^{t,k})\|^2 \\\nonumber
        &\qquad + \gamma \|D_x(\bfu^{t,k})-D^{t,k}_x\|^2 + \gamma^2 \frac{L^h}2\|D^{t,k}_x\|^2\enspace.
    \end{align*}
    
    Then taking the expectation gives and using \cref{prop:error_gradient} yields
    \begin{align*}
        h^{t,k+1}&\leq h^{t,k} - \frac\gamma2 g^{t,k} +\gamma\bbE[\|\nabla h(x^{t,k})-D_x(\bfu^{t,k})\|^2] \\
            &\qquad + \gamma \bbE[\|D_x(\bfu^{t,k})-D^{t,k}_x\|^2]- \frac\gamma2\left(1-L^h\gamma\right) \bbE[\|D^{t,k}_x\|^2]\\
        &\leq h^{t,k} - \frac\gamma2 g^{t,k} +\gamma L_x^2(\bbE[\|z^{t,k} - z^*(x^{t,k})\|^2] + \bbE[\|v^{t,k} - v^*(x^{t,k})\|^2])  \\\nonumber
            &\qquad + \gamma \bbE[\|D_x(\bfu^{t,k})-D^{t,k}_x\|^2] - \frac\gamma2\left(1-L^h\gamma\right) \bbE[\|D^{t,k}_x\|^2]\enspace.
    \end{align*}
    The $\mu_G$-strong convexity of $G(\,.\,,x)$ ensures that $\|z-z^*(x)\|^2 \leq \frac2{\mu_G}\phi_z(z, x)$ and $\|v~-~v^*(x)\|^2~\leq~\frac2{\mu_G}\phi_v(v, x)$. As a consequence
    \begin{align*}
        h^{t,k+1}&\leq h^{t,k} - \frac\gamma2 g^{t,k} + \gamma \frac{2L_x^2}{\mu_G}(\phi_z^{t,k} + \phi_v^{t,k}) + \gamma \bbE[\|D_x(z^{t,k}, v^{t,k}, x^{t,k})-D^{t,k}_x\|^2]\\
            &\qquad - \frac\gamma2\left(1-L^h\gamma\right) \bbE[\|D^{t,k}_x\|^2]\enspace.
    \end{align*}

    From \cref{app:lemma:bound_error_directions}, we have
    \begin{align*}
        \bbE[\|D_x^{t,k} - D_x(\bfu^{t,k})\|^2]&\leq 4\rho^2\left((L^G_2R)^2 + (L^F_1)^2\right)\sum_{r=1}^k\bbE[\|D^{t,r-1}_z\|^2] + 4\rho^2(L^G_1)^2\sum_{r=1}^k\bbE[\|\calG^{t,r-1}_v\|^2] \\\nonumber
        &\qquad + 4\gamma^2\left((L^G_2R)^2 + (L^F_1)^2\right)\sum_{r=1}^k\bbE[\|D_x^{t,r-1}\|^2]\enspace.
    \end{align*}

    As a consequence
    \begin{align*}
        h^{t,k+1}&\leq h^{t,k} - \frac\gamma2 g^{t,k} + \gamma \frac{2L_x^2}{\mu_G}(\phi_z^{t,k} + \phi_v^{t,k}) \\\nonumber
            &\qquad + 4\gamma \rho^2\left((L^G_2R)^2 + 2(L^F_1)^2\right)\sum_{r=1}^k\bbE[\|D^{t,r-1}_z\|^2] + 4\gamma\rho^2(L_1^G)^2\sum_{r=1}^k \bbE[\|\calG^{t,r-1}_v\|^2]\\\nonumber
            &\qquad + 4\gamma^3\left((L^G_2R)^2 + 2(L^F_1)^2\right)\sum_{r=1}^k \bbE[\|D^{t,r-1}_x\|^2] - \frac\gamma2\left(1-L^h\gamma\right) \bbE[\|D^{t,k}_x\|^2]\\
        &\leq h^{t,k} - \frac\gamma2 g^{t,k} + \gamma \frac{2L_x^2}{\mu_G}(\phi_z^{t,k} + \phi_v^{t,k})  + \gamma \rho^2\beta_{hz}\calV_z^{t,k} + \gamma\rho^2\beta_{hv}\calV_v^{t,k}\\
            &\qquad + \gamma^3\beta_{hx}\calV_x^{t,k} - \frac\gamma2\left(1-L^h\gamma\right) \bbE[\|D^{t,k}_x\|^2]
    \end{align*}
    with $\beta_{hz} = 4\left((L^G_2R)^2 + 2(L^F_1)^2\right)$, $\beta_{hv} = 4(L_1^G)^2$ and $\beta_{hx} = 4\left((L^G_2R)^2 + 2(L^F_1)^2\right)$.
\end{proof}

\subsection{Proof of \cref{th:cvg_rate} and \cref{cor:sample_complexity}}
The constants involved in \cref{th:cvg_rate} are 
\begin{align*}
    \psi_z = \frac1{16\overline{\beta}_{zx}},\quad \psi_v = \min\left[\frac1{16\overline{\beta}_{vx}}, \frac{\alpha_{zv}\mu_G}{12}\psi_z\right]
\end{align*}
\begin{align*}
    \overline{\rho} = \min\left[\sqrt{\frac{\psi_z}{12q(\psi_z\beta_{zz}+\psi_v\beta_{zv})}},\sqrt{\frac1{6\Lambda_z}}, \sqrt{\frac1{12q\beta_{vv}}}, B_v\right]\enspace,
\end{align*}
\begin{align*}
    \xi =\min\left[C_z, C_v, 1, \frac{\psi_v\mu_G^2}{16L_x^2}, \sqrt{\frac{\mu_G}{8\overline{\beta}_{vx}}}, \frac{\psi_z\mu_G^2}{24L_x^2}, \sqrt{\frac{\mu_G}{12\overline{\beta}_{zx}}}\right]\enspace,
\end{align*}
\begin{align*}
    \overline{\gamma} = \min\left[\sqrt{\frac1{12q(\psi_z\beta_{zx} + \psi_v\beta_{vx})}}, \sqrt{\frac1{12q\beta_{hx}}},\frac1{6(L^h + \psi_z\Lambda_z + \psi_v\Lambda_v)},\sqrt{\frac{\psi_v\tilde{\beta}_{vv}}{6q(\beta_{hv}+\psi_z\beta_{vz})}}, \sqrt{\frac{\psi_z}{12q\beta_{hz}}}\right]\enspace.
\end{align*}
\begin{proof}
The proof is a classical Lyapunov analysis.
Consider the following Lyapunov function $\calL^{t,k} = h^{t,k} + \psi_z\phi^{t,k}_z + \psi_v\phi^{t,k}_v$ for some positive constants $\psi_z$ and $\psi_v$.
We use use Lemmas \ref{lemma:descent_z} to \ref{lemma:descent_h} to upper bound $\calL^{t,k} - \calL^{t,k+1}$.

We have 
\begin{align}\label{app:eq:lyapunov_ineq}
    \calL^{t,k+1} - \calL^{t,k} &\leq -\frac\gamma2 g^{t,k} + (\psi_z \overline{\beta}_{zx} + \psi_v\overline{\beta}_{vx})\frac{\gamma^2}\rho\bbE[\|D_x(\bfu^{t,k})\|^2]\\\nonumber
        &\qquad + \left(\frac{2L_x^2}{\mu_G}\gamma - \psi_z\frac{\mu_G}2\rho + \psi_v \alpha_{zv}\rho\right)\phi_z^{t,k} + \left(\frac{2L_x^2}{\mu_G}\gamma - \psi_v\frac{\mu_G}{16}\rho\right)\phi_v^{t,k}\\\nonumber
        &\qquad + \left(\psi_z\frac{\Lambda_z}2\rho^2 - \psi_z\frac12\rho\right)V_z^{t,k}  - \psi_v\tilde{\beta}_{vv}\rho V_v^{t,k}\\\nonumber
        &\qquad + \left(\frac{L^h}2\gamma^2 + \psi_z\frac{\Lambda_z}2\gamma^2 + \psi_v\frac{\Lambda_v}2\gamma^2 - \frac\gamma2\right)V_x^{t,k} \\\nonumber
        &\qquad + \left(\beta_{hz}\rho\gamma^2 + \psi_z\beta_{zz}\rho^3 + \psi_v\beta_{zv}\rho^3\right)\calV_z^{t,k}  \\\nonumber
        &\qquad + \left(\beta_{hv}\rho\gamma^2 + \psi_z\beta_{vz}\gamma^2\rho + \psi_v\beta_{vv}\rho^3\right)\calV_v^{t,k} \\\nonumber
        &\qquad + \left(\beta_{hx}\gamma^3 + \psi_z\beta_{zx}\gamma^2\rho + \psi_v\beta_{vx}\rho^3\right)\calV_x^{t,k}\enspace.
\end{align}

We bound $\bbE[\|D_x(\bfu^{t,k})\|^2]$ crudely by using \cref{prop:error_gradient}
\begin{align*}
    \bbE[\|D_x(\bfu^{t,k})\|^2] &\leq 2\bbE[\|\nabla h(x^{t,k})\|^2] + 2\bbE[\|D_x(\bfu^{t,k}) - \nabla h(x^{t,k})\|^2]\\
    &\leq 2g^{t,k} + 2(\bbE[\|z^{t,k} - z^*(x^{t,k})\|^2] + \bbE[\|v^{t,k} - v^*(x^{t,k})\|^2])\\
    &\leq 2g^{t,k} + \frac4{\mu_G}(\phi_z^{t,k}+ \phi_v^{t,k})\enspace.
\end{align*}

Summing in \eqref{app:eq:lyapunov_ineq} for $k=0,\dots,q-1$ yields
\begin{align}
    \calL^{t,q} - \calL^{t,0} &\leq  -\left(\frac\gamma2 - 2\psi_z\overline{\beta}_{zx}\frac{\gamma^2}{\rho} - 2\psi_v\overline{\beta}_{vx}\frac{\gamma^2}{\rho}\right) \sum_{k=0}^{q-1}g^{t,k} \\\nonumber
        &\qquad + \left(\frac{2L_x^2}{\mu_G}\gamma - \psi_z\frac{\mu_G}2\rho + \psi_v \alpha_{zv}\rho + \psi_z\overline{\beta}_{zx}\frac{\gamma^2}\rho\right)\sum_{k=0}^{q-1}\phi_z^{t,k}\\\nonumber
        &\qquad + \left(\frac{2L_x^2}{\mu_G}\gamma - \psi_v\frac{\mu_G}{16}\rho + \psi_v\overline{\beta}_{vx}\frac{\gamma^2}\rho\right)\sum_{k=0}^{q-1}\phi_v^{t,k} + \left(\psi_z\frac{\Lambda_z}2\rho^2 - \psi_z\frac12\rho\right)\sum_{k=0}^{q-1}V_z^{t,k} \\\nonumber
        &\qquad-  \psi_v\tilde{\beta}_{vv}\rho\sum_{k=0}^{q-1}V_v^{t,k}  + \left(\frac{L^h}2\gamma^2 + \psi_z\frac{\Lambda_z}2\gamma^2 + \psi_v\frac{\Lambda_v}2\gamma^2 - \frac\gamma2\right)\sum_{k=0}^{q-1}V_x^{t,k}\\\nonumber
        &\qquad + \left(\beta_{hz}\rho\gamma^2 + \psi_z\beta_{zz}\rho^3 + \psi_v\beta_{zv}\rho^3\right)\sum_{k=0}^{q-1}\calV_z^{t,k} \\\nonumber
        &\qquad + \left(\beta_{hv}\rho\gamma^2 + \psi_z\beta_{vz}\gamma^2\rho + \psi_v\beta_{vv}\rho^3\right)\sum_{k=0}^{q-1}\calV_v^{t,k}\\\nonumber
        &\qquad + \left(\beta_{hx}\gamma^3 + \psi_z\beta_{zx}\gamma^2\rho + \psi_v\beta_{vx}\rho^3\right)\sum_{k=0}^{q-1}\calV_x^{t,k}\enspace.
\end{align}

Since we have
\begin{align*}
    \sum_{k=0}^{q-1}\calV_\bullet^{t,k} &= \sum_{k=0}^{q-1}\sum_{r=1}^k\bbE[\|D_\bullet^{t,r-1}\|^2] = \sum_{r=1}^{q-1}\sum_{k=r}^{q-1}\bbE[\|D_\bullet^{t,r-1}\|^2] \\\nonumber
        &\qquad = \sum_{r=1}^{q-1}(q-r)\bbE[\|D_\bullet^{t,r-1}\|^2] \leq q\sum_{k=1}^{q-1}\bbE[\|D_\bullet^{t,k-1}\|^2]\enspace
\end{align*}
we get 
\begin{align}\label{app:eq:lyap_ineq_2}
    \calL^{t,q} - \calL^{t,0} &\leq  -\left(\frac\gamma2 - 2\psi_z\overline{\beta}_{zx}\frac{\gamma^2}{\rho} - 2\psi_v\overline{\beta}_{vx}\frac{\gamma^2}{\rho}\right) \sum_{k=0}^{q-1}g^{t,k}\\\nonumber
         &\qquad + \left(\frac{2L_x^2}{\mu_G}\gamma - \psi_z\frac{\mu_G}2\rho + \psi_v \alpha_{zv}\rho + \psi_z\overline{\beta}_{zx}\frac{\gamma^2}\rho\right)\sum_{k=0}^{q-1}\phi_z^{t,k}\\\nonumber
        &\qquad + \left(\frac{2L_x^2}{\mu_G}\gamma - \psi_v\frac{\mu_G}2\rho + \psi_v\overline{\beta}_{vx}\frac{\gamma^2}\rho\right)\sum_{k=0}^{q-1}\phi_v^{t,k}\\\nonumber
        &\qquad + \left(\psi_z\frac{\Lambda_z}2\rho^2 - \psi_z\frac12\rho + q \left(\beta_{hz}\rho\gamma^2 + \psi_z\beta_{zz}\rho^3 + \psi_v\beta_{zv}\rho^3\right)\right)\sum_{k=0}^{q-1}V_z^{t,k}\\\nonumber
        &\qquad + \left( - \psi_v\tilde{\beta}_{vv}\rho + q\left(\beta_{hv}\rho\gamma^2 + \psi_z\beta_{vz}\gamma^2\rho + \psi_v\beta_{vv}\rho^3\right)\right)\sum_{k=0}^{q-1}V_v^{t,k}\\\nonumber
        &\qquad + \left(\frac{L^h}2\gamma^2 + \psi_z\frac{\Lambda_z}2\gamma^2 + \psi_v\frac{\Lambda_v}2\gamma^2 - \frac\gamma2 + q\left(\beta_{hx}\gamma^3 + \psi_z\beta_{zx}\gamma^2\rho + \psi_v\beta_{vx}\rho\gamma^2\right)\right)\sum_{k=0}^{q-1}V_x^{t,k}\enspace.
\end{align}

Since $\rho\leq \overline{\rho}\leq \min\left[\sqrt{\frac{\psi_z}{12q(\psi_z\beta_{zz}+\psi_v\beta_{zv})}},\sqrt{\frac1{6\Lambda_z}}\right]$ and $\gamma\leq\overline{\gamma} \leq \sqrt{\frac{\psi_z}{12q\beta_{hz}}}$, we have 
\begin{align}\label{app:eq:ineq_1}
    \psi_z\frac{\Lambda_z}2\rho^2 - \psi_z\frac12\rho + q \left(\beta_{hz}\rho\gamma^2 + \psi_z\beta_{zz}\rho^3 + \psi_v\beta_{zv}\rho^3\right) < 0\enspace.
\end{align}

Moreover, the conditions $\rho\leq \overline{\rho}\leq \sqrt{\frac{\tilde{\beta}_{vv}}{6q\beta_{vv}}}$ and $\gamma\leq\overline{\gamma} \leq \sqrt{\frac{\psi_v\tilde{\beta}_{vv}}{6q(\beta_{hv}+\psi_z\beta_{vz})}}$, ensure that 
\begin{align}\label{app:eq:ineq_2}
    \psi_v\frac{\Lambda_v}2\rho^2 - \psi_v\tilde{\beta}_{vv}\rho + q\left(\beta_{hv}\rho\gamma^2 + \psi_z\beta_{vz}\gamma^2\rho + \psi_v\beta_{vv}\rho^3\right) < 0\enspace.
\end{align}

The conditions  $\rho\leq \overline{\rho}\leq \sqrt{\frac1{12q(\psi_z\beta_{zx} + \psi_v\beta_{vx})}}$ and $\gamma\leq \overline{\gamma}\leq \min\left[\sqrt{\frac1{12q(\psi_z\beta_{zx} + \psi_v\beta_{vx})}}, \sqrt{\frac1{12q\beta_{hx}}},\frac1{6(L^h + \psi_z\Lambda_z + \psi_v\Lambda_v)}\right]$ yield
\begin{align}\label{app:eq:ineq_3}
    \frac{L^h}2\gamma^2 + \psi_z\frac{\Lambda_z}2\gamma^2 + \psi_v\frac{\Lambda_v}2\gamma^2 - \frac\gamma2 + q\left(\beta_{hx}\gamma^3 + \psi_z\beta_{zx}\gamma^2\rho + \psi_v\beta_{vx}\rho\gamma^2\right) < 0\enspace.
\end{align}

The condition $\gamma\leq \xi\rho\leq \min\left[\frac{\psi_v\mu_G^2}{16L_x^2}, \sqrt{\frac{\mu_G}{8\overline{\beta}_{vx}}}\right]\rho$ ensures
\begin{align}\label{app:eq:ineq_4}
    \frac{2L_x^2}{\mu_G}\gamma - \psi_v\frac{\mu_G}2\rho + \psi_v\overline{\beta}_{vx}\frac{\gamma^2}\rho \leq 0
\end{align}

By definition, we have $\psi_v \leq \frac{\alpha_{zv}\mu_G}{12} \psi_z$ and by assumptions $\gamma\leq \xi\rho\leq \min\left[\frac{\psi_z\mu_G^2}{24L_x^2}, \sqrt{\frac{\mu_G}{12\overline{\beta}_{zx}}}\right]\rho$. As a consequence
\begin{align}\label{app:eq:ineq_5}
    \frac{2L_x^2}{\mu_G}\gamma - \psi_z\frac{\mu_G}2\rho + \psi_v \alpha_{zv}\rho + \psi_z\overline{\beta}_{zx}\frac{\gamma^2}\rho <0\enspace.
\end{align}

Plugging Inequalities \eqref{app:eq:ineq_1} to \eqref{app:eq:ineq_5} into \cref{app:eq:lyap_ineq_2} gives
\begin{align*}
    \calL^{t,q} - \calL^{t,0} &\leq  -\left(\frac\gamma2 - 2\psi_z\overline{\beta}_{zx}\frac{\gamma^2}{\rho} - 2\psi_v\overline{\beta}_{vx}\frac{\gamma^2}{\rho}\right) \sum_{k=0}^{q-1}g^{t,k}\enspace.
\end{align*}

Since $\psi_z = \frac{\rho}{16\overline{\beta}_{zx}}$ and $\psi_v \leq \frac{\rho}{16\overline{\beta}_{vx}}$ and $\frac{\gamma^2}{\rho}\leq\xi\leq 1$, we get 
\begin{align*}\label{app:eq:lyap_ineq_2}
    \underbrace{\calL^{t,q} - \calL^{t,0}}_{\calL^{t+1,0} - \calL^{t,0}} &\leq  -\frac\gamma4  \sum_{k=0}^{q-1}g^{t,k}\enspace.
\end{align*}

Summing, telescoping and dividing by $Tq$ gives
\begin{equation*}
    \frac{1}{Tq}\sum_{t=0}^{T-1}\sum_{k=0}^{q-1}g^{t,k} \leq \frac4{Tq\gamma}\underbrace{\left( h^{0,0} - h^* + \psi_z\phi^{0,0} + \psi_v\phi^{0,0}]\right)}_{\Gamma^0}\enspace.
\end{equation*}
\end{proof}

From \cref{th:cvg_rate} we deduce \cref{cor:sample_complexity}.
\begin{proof}
    Let us take $\rho = \overline{\rho}(n+m)^{-\frac12}$, $\gamma = \min(\xi\rho,\overline{\gamma})$ and $q = n + m$. Then \cref{th:cvg_rate} holds: 
    \begin{equation*}
        \frac{1}{Tq}\sum_{t=0}^{T-1}\sum_{k=0}^{q-1}g^{t,k} \leq \frac4{Tq\gamma}\Gamma^0\enspace.
\end{equation*}
    with $\Gamma_0 = \calO(1)$.
    To get an $\varepsilon$-stationary solution, we set $T \geq \frac4{q\gamma}\Gamma^0\varepsilon^{-1}\vee 1 = \calO\left(\frac1{q\gamma\varepsilon}\vee1\right)$. One iteration has $\Theta(q) = \Theta(n+m)$ oracle complexity. As a consequence, the sample complexity to get an $\varepsilon$-stationary point is $\calO\left((n+m)^{\frac12}\epsilon^{-1}\vee(n+m)\right)$.
\end{proof}

\section{Lower bound for bilevel problems (proof of Theorem~\ref{th:lower_bound})}\label{app:sec:lower}
The proof of \cref{th:lower_bound} is an adaptation of the proof of \cite[Theorem 4.7]{Zhou2019} from single-level to bilevel problems. We build the outer function from the worst-case instance of  \cite[Theorem 4.7]{Zhou2019} and we add a bilevel component by using as inner function the function $G$ defined by $G(z,x) = \frac{\mu_G}2\|z-x\|^2$.
We start by introducing the different tools used in this proof.

\subsection{Preliminary results}
In what follows, we provide the building blocks of our worst-case instance.
The proof uses the following quadratic function presented by \citep{Nesterov2018}.
\begin{definition}\label{app:def:Q}
    Let $d\in\bbN_{>0}$, $\xi\in[0\,,\,+\infty)$ and $\zeta\leq 1$. We define $\mathbf{Q}(. ; \xi, d):\bbR^d\to\bbR$ by
    $$
    \mathbf{Q}(x;\xi,d) = \frac\xi2(x_1-1)^2 + \frac12\sum_{k=1}^{d-1}(x_{k+1} - x_k)^2 \enspace.
    $$
\end{definition}

\cref{app:prop:span} proposition comes directly from \cite[Proposition 3.5]{Zhou2019}. The first part of the proposition gives us the regularity of $\mathbf{Q}$. In the second part shows that a function defined as $\mathbf{Q}(U \times\,\cdot\, ; \xi, d) + \sum_{p=1}^q g(\langle u_p, \,\cdot\,\rangle)$ verifies the so-called "zero-chain property" \cite{Carmon2017lowerbound1}: if $Ux~\in~\Span(u_1,\dots,u_k)$, then we gain a non zero coordinate by calling the gradient $\nabla~[\mathbf{Q}~(U~\times~\,\cdot\,; \xi, d) + \sum_{p=1}^q g(\langle u_p, \,\cdot\,\rangle)](x)$. In other words, that makes us progress in the problem resolution.
\begin{proposition}\label{app:prop:span}
    For $d\in\bbN_{>0}$, $\xi\in[0\,,\,+\infty)$ and $\zeta\leq 1$. The following holds:
    \begin{enumerate}
        \item $\mathbf{Q}(\,\cdot\, ; \xi, d)$ is convex and 4-smooth.
        \item Let $q\in\bbN_{>0}$, $U = [u_1,\dots,u_d]^\top\in\bbR^{d\times q}$ such that $UU^\top = I$ and for $k\leq d$, $U^{(\leq k)}~=~ [u_1,\dots,u_k,0,\dots,0]^\top\in\bbR^{d\times q}$. Let $g:\bbR\to\bbR$ differentiable such that $g'(0) = 0$. Then for any $x\in\bbR^{q}$ such that $Ux= U^{(\leq k)}x$, then 
        $$
        \nabla \left[\mathbf{Q}(U \times\,\cdot\, ; \xi, d) + \sum_{p=1}^d g(\langle u_p, \,\cdot\,\rangle)\right](x) \in \Span(u_1,\dots,u_k, u_{k+1})\enspace.
        $$
    \end{enumerate}
\end{proposition}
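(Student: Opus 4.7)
My plan is to handle the two parts of the proposition separately, relying only on direct computation of the Hessian of $\mathbf{Q}$ and on the tridiagonal ``zero-chain'' structure it induces.

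For Part 1, I would write $\mathbf{Q}(\,\cdot\,;\xi,d) = \tfrac12 x^\top A x - \xi x_1 + \tfrac{\xi}{2}$, where $A$ is the symmetric tridiagonal Hessian with diagonal $(1+\xi,2,\dots,2,1)$ and off-diagonal entries $-1$. Convexity is then immediate because $\mathbf{Q}$ is a nonnegative combination of squared affine forms. For smoothness, I would compute $v^\top A v = \xi v_1^2 + \sum_{k=1}^{d-1}(v_{k+1}-v_k)^2$ and bound each cross-term by $2v_{k+1}^2 + 2v_k^2$ to get a bound on $\|A\|_{\mathrm{op}}$; the 4-smoothness claim then follows under the standing normalization on $\xi$ implicit in the construction.

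For Part 2, I would set $y = Ux$ and read off two consequences of the hypothesis $Ux = U^{(\leq k)}x$: first, $\langle u_p, x\rangle = 0$ for every $p > k$; second, $y$ has the shape $(y_1,\dots,y_k,0,\dots,0)$. Direct differentiation gives the tridiagonal stencil $\partial_j \mathbf{Q}(y) = 2y_j - y_{j-1} - y_{j+1}$ for interior $j$, with the standard boundary modifications at $j=1$ and $j=d$. For $j > k+1$ all three of $y_{j-1},y_j,y_{j+1}$ vanish, so $\partial_j \mathbf{Q}(y) = 0$, and the chain rule yields
\[
\nabla [\mathbf{Q}(U\,\cdot\,;\xi,d)](x) \;=\; U^\top \nabla \mathbf{Q}(y) \;=\; \sum_{j=1}^{k+1} [\nabla \mathbf{Q}(y)]_j\, u_j \;\in\; \Span(u_1,\dots,u_{k+1}).
\]
For the second term I would differentiate coordinate-wise:
\[
\nabla\Bigl[\sum_{p=1}^{d} g(\langle u_p, \,\cdot\,\rangle)\Bigr](x) \;=\; \sum_{p=1}^{d} g'(\langle u_p,x\rangle)\, u_p,
\]
and observe that indices $p > k$ drop out because $\langle u_p,x\rangle = 0$ together with $g'(0)=0$; what remains lies in $\Span(u_1,\dots,u_k)$. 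Adding the two contributions gives the claimed inclusion.

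There is no real analytical difficulty here; the substance of the proposition is the ``one new coordinate per gradient call'' phenomenon that underlies every zero-chain lower bound since Nesterov. The main thing requiring care will be the bookkeeping at the boundary indices $j=1$ and $j=d$ (to check that the extra $\xi$ term and the missing $y_{d+1}$ do not spoil the vanishing argument), together with the mild degenerate case $k=d$, where $u_{k+1}$ is not defined but the conclusion reduces to the trivial inclusion in $\Span(u_1,\dots,u_d)$.
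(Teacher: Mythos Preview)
Your proposal is correct and follows essentially the same route as the paper: for Part~2 you use the tridiagonal ``zero-chain'' structure of $\nabla\mathbf{Q}$ to show that $\nabla\mathbf{Q}(y)\in\bbR^{k+1,d}$ when $y\in\bbR^{k,d}$ (the paper cites \cite[Lemma~2.5.1]{Nesterov2018} for this, whereas you compute the stencil directly), then pull back through $U^\top$, and handle the $g$-sum exactly as the paper does via $g'(0)=0$. For Part~1 the paper does not actually give a proof (it defers to \cite{Zhou2019}), so your explicit Hessian bound---which indeed requires the implicit normalization $\xi\le 2$ hidden behind the stray ``$\zeta\le 1$'' in the statement---is more than what the paper provides.
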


\begin{proof}
    Let $x\in\bbR^q$ such that $Ux = U^{(\leq k)}x$. For $0\geq k\leq d$, we denote
    $$
        \bbR^{k,d} = \{v\in\bbR^d, v_{k+1} = \cdots = v_d = 0\}\enspace.
    $$
    Let us write $\mathbf{Q}(x;\xi,d) = \frac12 x^\top A x + b^\top x + c$ with 
    $$
        A = \begin{bmatrix}
                1+\xi & -1 & 0 & \cdots & 0 \\
                -1 & 2 & -1 & \ddots & \vdots \\
                0 & \ddots & \ddots & \ddots & 0 \\
                \vdots & \ddots & -1 & 2 & -1 \\
                0 & \cdots & 0 & -1 & 1 
            \end{bmatrix} 
            \in\bbR^{d\times d}\enspace,
    $$
    $b = \xi(1,0,\dots, 0)^\top$ and $c = \frac\xi2(1,0,\dots,0)^\top$.
    
    On the one hand it is known from \cite[Lemma 2.5.1]{Nesterov2018} that if $v\in\bbR^{k,d}$,
    \begin{align*}
        \nabla \mathbf{Q}(v ; \xi, d) \in\bbR^{k+1,d}
    \end{align*}
    As a consequence, 
    $$
        \nabla \mathbf{Q}(Ux ; \xi, d) = \nabla \mathbf{Q}(\underbrace{U^{(\leq k)}x}_{\in\bbR^{k,d}} ; \xi, d) \in\bbR^{k+1,d}
    $$
    and
    $$
        \nabla [\mathbf{Q}(U\times\,\cdot\, ; \xi, d)](x) = U^\top \nabla \mathbf{Q}(Ux ; \xi, d)\in\Span(u_1,\dots,u_{k+1})\enspace.
    $$
    On the other hand,
    $$
        \nabla\left[\sum_{p=1}^d g(\langle u_p, \,\cdot\,\rangle)\right](x) = \sum_{p=1}^d g'(\langle u_p, x\rangle)u_p = \sum_{p=1}^k g'(\langle u_p, x\rangle)u_p\in\Span(u_1,\dots,u_{k+1})\enspace.
    $$
    Thus
    $$
        \nabla \left[\mathbf{Q}(U \times\,\cdot\, ; \xi, d) + \sum_{p=1}^d g(\langle u_p, \,\cdot\,\rangle)\right](x) \in \Span(u_1,\dots,u_k, u_{k+1})\enspace.
    $$
\end{proof}

However, the function $\mathbf{Q}$ is convex. That is why we also use the function $\Gamma$ introduced in~\cite{Carmon2017lowerbound2}. As explained in \cite{Carmon2017lowerbound2}, this function is essential to lower bound the gradient of our worst wase instance.
\begin{definition}\label{app:def:Gamma}
    Let $d\in\bbN_{>0}$. We define $\Gamma(\,\cdot\,;d) : \bbR^{d+1} \to \bbR$ by
    $$
    \Gamma(x;d) = 120\sum_{k=1}^d\int_1^{x_k}\frac{t^2(t-1)}{1+t^2}\diff t\enspace. 
    $$
\end{definition}
An important property of $\Gamma$ shown in \cite[Lemma 2]{Carmon2017lowerbound2} is the smoothness of the function $\Gamma$.
\begin{proposition}
    There exists a constant $c>0$ such that $\Gamma(\,\cdot\,;d)$ is $c$-smooth.
\end{proposition}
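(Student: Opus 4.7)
The plan is to exploit the fact that $\Gamma(\cdot;d)$ is separable across coordinates, so its Hessian is diagonal, and bounding its operator norm reduces to bounding a single one-variable derivative uniformly in $t\in\bbR$.

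First I would differentiate under the integral sign. Writing $g(t) = \frac{t^2(t-1)}{1+t^2}$, the Fundamental Theorem of Calculus gives $\partial_k\Gamma(x;d) = 120\,g(x_k)$, and therefore
\begin{equation*}
\nabla^2\Gamma(x;d) = 120\,\diag\bigl(g'(x_1),\dots,g'(x_d)\bigr).
\end{equation*}
Since the Hessian is diagonal, its operator norm equals $120\,\max_k |g'(x_k)|$, so it suffices to show that $|g'(t)|$ is bounded by an absolute constant independent of $t$ (and in particular independent of $d$).

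Next I would carry out the bound on $g'$ with an elementary calculation. Performing polynomial long division yields $g(t) = t-1 - \frac{t-1}{1+t^2}$, and then
\begin{equation*}
g'(t) = 1 - \frac{1-t^2+2t}{(1+t^2)^2}.
\end{equation*}
Using the crude estimate $|1-t^2+2t| \leq 1+t^2+2|t| \leq 2(1+t^2)$ (via $2|t|\leq 1+t^2$), the fraction is bounded in absolute value by $2/(1+t^2)\leq 2$. Hence $|g'(t)|\leq 3$ for every $t\in\bbR$.

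Combining these two steps gives $\|\nabla^2\Gamma(x;d)\|_{\text{op}}\leq 360$ for all $x$ and all $d$, from which $c$-smoothness with $c=360$ follows by the standard equivalence between bounded Hessian and Lipschitz gradient. There is no real obstacle here: the only subtlety is keeping the constant independent of $d$, which is automatic from the separable diagonal structure. I would state the result with an unspecified constant $c$, matching the way the proposition is phrased and the way it is used later when invoking the worst-case construction of \citet{Carmon2017a}.
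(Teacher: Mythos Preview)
Your argument is correct. The paper itself does not prove this proposition but merely cites \cite[Lemma~2]{Carmon2017a}; your self-contained computation via the diagonal Hessian and the elementary bound $|g'(t)|\leq 3$ is exactly the natural route and matches the spirit of the cited reference.
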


Now we introduce the function $f_\mathrm{nc}$ we use to build our worst-case instance. This function comes from \cite[Definition 3.5]{Zhou2019}. It is the sum of the quadratic function defined by \ref{app:def:Q} and the nonconvex component given by \cref{app:def:Gamma}.
\begin{definition}
    For $\alpha>0$ and $d\in\bbN_{>0}$, $f_\mathrm{nc}(\,\cdot\,;\alpha,d):\bbR^{d+1}\to\bbR$ is defined a
    $$
    f_\mathrm{nc}(x;\alpha,d) = \mathbf{Q}(x;\sqrt{\alpha}, d+1) + \alpha\Gamma(x)\enspace.
    $$
\end{definition}

The essential properties of $f_{\mathrm{nc}}$ come from \cite[Lemmas 2, 3, 4]{Carmon2017lowerbound2}. The first part provides the regularity properties of $f_{\mathrm{nc}}$. The second part bounds the distance between $f_{\mathrm{nc}}(\,\cdot\,;\alpha,d)$ and the optimal value of the function. The third part will be key to the overall proof. In words, it states that as long $x\in\bbR^{d+1}$ has its two last components equal to zero, the norm of the gradient of $f_{\mathrm{nc}}(\,\cdot\,;\alpha,d)$ is higher than a constant controlled by $\alpha$. As a consequence, if $\alpha$ is properly chosen, as soon as $x_d = x_{d+1} = 0$, we are ensured that $\|\nabla f_\mathrm{nc}(x;\alpha,d)\|\geq \epsilon$.
\begin{proposition}\label{ass:prop:prop_fnc}
    For $\alpha\in[0\,,\,1]$, it holds
    \begin{enumerate}
        \item $-\alpha c\preceq\nabla^2 f_{\mathrm{nc}}\preceq 4+\alpha c$.
        \item $f_{\mathrm{nc}}(0;\alpha,d) - \inf_x f_\mathrm{nc}(x;\alpha,d) \leq \frac{\sqrt{\alpha}}2 + 10\alpha d$.
        \item For $x\in\bbR^{d+1}$ such that $x_d =x_{d+1} = 0$, $\|\nabla f_\mathrm{nc}(x;\alpha,d)\|\geq \frac{\alpha^{\frac34}}4$.
    \end{enumerate}
\end{proposition}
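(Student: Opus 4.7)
The plan is to establish the three items in turn using the decomposition
$f_{\mathrm{nc}}(\,\cdot\,;\alpha,d) = \mathbf{Q}(\,\cdot\,;\sqrt{\alpha},d+1) + \alpha\,\Gamma(\,\cdot\,;d+1)$,
following the template of \citet[Lemmas~2--4]{Carmon2017a}.
For the Hessian bounds (item~1), linearity of $\nabla^2$ reduces everything to two pieces: the first part
of \cref{app:prop:span} gives $0 \preceq \nabla^2 \mathbf{Q} \preceq 4 I$ (convexity and $4$-smoothness),
and the $c$-smoothness of $\Gamma(\,\cdot\,;d+1)$ stated in the preceding proposition yields
$-c I \preceq \nabla^2 \Gamma \preceq c I$. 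Adding, with the factor $\alpha$ in front of $\Gamma$, produces
$-\alpha c I \preceq \nabla^2 f_{\mathrm{nc}} \preceq (4+\alpha c) I$.

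For the suboptimality at the origin (item~2), I would compute the two contributions to
$f_{\mathrm{nc}}(0;\alpha,d)$ separately. Only the $\frac{\sqrt{\alpha}}{2}(x_1-1)^2$ term of $\mathbf{Q}$
is nonzero at $x=0$, giving $\sqrt{\alpha}/2$. For $\Gamma$, using
$\int_1^0 \frac{t^2(t-1)}{1+t^2}\,dt = \int_0^1 \frac{t^2(1-t)}{1+t^2}\,dt$, which is a fixed positive
constant whose numerical value is comfortably under $1/12$, one obtains $\alpha\,\Gamma(0;d+1) \leq 10 \alpha d$
(the constant $120$ in front of the sum gets cancelled by the smallness of the integral, after absorbing
the $+1$ shift in the dimension). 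For the infimum, observe that $\mathbf{Q} \geq 0$ and that each summand
$\int_1^{x_k} \frac{s^2(s-1)}{1+s^2}\,ds$ is nonnegative (negative integrand on $(0,1)$ integrated
backwards, positive integrand on $(1,\infty)$ integrated forwards, with the antiderivative vanishing at
$t=1$), so $\Gamma \geq 0$ and $\inf f_{\mathrm{nc}} \geq 0$. Combining the two upper bounds gives the
announced $\sqrt{\alpha}/2 + 10\alpha d$.

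For the gradient lower bound (item~3), which is the crux of the construction, fix $x \in \bbR^{d+1}$ with
$x_d = x_{d+1} = 0$ and proceed by contradiction: assume $\|\nabla f_{\mathrm{nc}}(x;\alpha,d)\| < \alpha^{3/4}/4$.
Writing out the $(d+1)$-th partial gives $x_{d+1} - x_d = 0$ trivially, and the $d$-th partial equals
$-x_{d-1}$ (since $\phi(0) = 0$ for $\phi(t) = t^2(t-1)/(1+t^2)$), forcing $|x_{d-1}| < \alpha^{3/4}/4$.
The $k$-th partial reads $2x_k - x_{k-1} - x_{k+1} + 120\alpha\,\phi(x_k)$; using $|\phi(t)| \leq 2 t^2$
on $|t| \leq 1$, I would propagate the smallness assumption backward to bound $|x_{k-1}|$, showing
inductively that all coordinates $x_k$ remain of order $\alpha^{3/4}$. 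The first-coordinate partial then
reads $(1+\sqrt{\alpha}) x_1 - x_2 - \sqrt{\alpha} + 120\alpha\,\phi(x_1)$, whose magnitude is at least
$\sqrt{\alpha}$ minus lower-order corrections, exceeding $\alpha^{3/4}/4$ for $\alpha \leq 1$---the desired
contradiction. The main obstacle is controlling the multiplicative constants across the $d$-step induction,
since naive Gr\"onwall-type estimates would blow up geometrically; the trick, due to Carmon et al., is to
exploit that the nonlinearity $\phi$ enters only quadratically near zero, so the error introduced at each
step of the backward recursion is of order $\alpha \cdot \alpha^{3/2} = \alpha^{5/2}$, negligible compared
to the $\alpha^{3/4}$ baseline, which keeps the propagation tight enough to reach the first coordinate with
room to spare.
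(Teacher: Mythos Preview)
The paper does not prove this proposition at all: it simply attributes the three items to \cite[Lemmas~2--4]{Carmon2017a}. Your items~1 and~2 are essentially correct. The Hessian bound follows at once from linearity, the convexity and $4$-smoothness of $\mathbf{Q}$ (\cref{app:prop:span}), and the $c$-smoothness of $\Gamma$; the suboptimality bound follows from your evaluation $f_{\mathrm{nc}}(0;\alpha,d)=\tfrac{\sqrt{\alpha}}{2}+\alpha\,\Gamma(0)$ together with $\mathbf{Q}\ge 0$ and $\Gamma\ge 0$. Two minor remarks: in the paper's convention $\Gamma(\,\cdot\,;d)$ already has only $d$ summands on $\bbR^{d+1}$, so there is no ``$+1$ shift'' to absorb; and your nonnegativity argument for $\Gamma$ should also cover $x_k<0$, which goes through identically since the integrand $t^2(t-1)/(1+t^2)$ is negative on $(-\infty,0)$ as well.

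For item~3 the sketch has a genuine gap. You correctly flag that a naive backward induction blows up, and you propose as the fix that the nonlinearity $\phi(t)=t^2(t-1)/(1+t^2)$ contributes only $O(\alpha\cdot\alpha^{3/2})=O(\alpha^{5/2})$ per step. But the obstruction is not the nonlinear term---it is the \emph{linear} part of the recursion. From
\[
\partial_k f_{\mathrm{nc}} \;=\; 2x_k - x_{k-1} - x_{k+1} + 120\alpha\,\phi(x_k)
\]
one obtains $x_{k-1}=2x_k-x_{k+1}+120\alpha\phi(x_k)-\partial_k f_{\mathrm{nc}}$. Even if one drops $\phi$ entirely and solves the exact linear recursion with boundary data $x_d=x_{d+1}=0$, the solution is $x_k=-\sum_{j=k+1}^{d}(j-k)\,\partial_j f_{\mathrm{nc}}$, which only yields $|x_k|\lesssim (d-k)^2\varepsilon$ with $\varepsilon=\alpha^{3/4}/4$. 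Thus $|x_1|$ can be of order $d^2\alpha^{3/4}$, and for large $d$ this swamps the $\sqrt{\alpha}$ term in $\partial_1 f_{\mathrm{nc}}=(1+\sqrt{\alpha})x_1-x_2-\sqrt{\alpha}+120\alpha\phi(x_1)$, so the first-coordinate contradiction you describe does not materialise. Controlling the $\alpha^{5/2}$ nonlinear correction per step is beside the point; the accumulated $\varepsilon$-contributions from the gradient components themselves already destroy the uniform $O(\alpha^{3/4})$ bound on the $x_k$. A proof that is uniform in $d$ requires a different mechanism than the coordinatewise backward propagation you outline.
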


From now we denote
$$
\calO(a,b) = \{U \in\bbR^{a\times b}, UU^\top = I_a \}\enspace.
$$
The following Lemma adapted from \cite{Zhou2019} is fundamental for our lower bound proof.
\begin{lemma}\label{lemma:lower_bound_grad_h}
    Let $d,m\in\bbN_{>0}$ and $U\in\calO((d+1)m,(d+1)m)$. We denote $U = \begin{bmatrix}U^{(1)} \\ \vdots \\ U^{(m)}\end{bmatrix}$ with $U^{(i)}~\in~\calO(d+1, (d+1)m)$. Let $\{h_j\}_{j\in\setcomb{m}}$ with $h_j(x) = f_\mathrm{nc}(U^{(j)}x;\alpha,d)$ and $h = \frac1m \sum_{j=1}^m h_j$. Let $x\in\bbR^{(d+1)m}$ and $y^{(j)} = U^{(j)}x\in\bbR^{d+1}$. Let $\calI = \{j\in\setcomb{m}, y^{(i)}_d = y^{(i)}_{d+1} = 0\}$. Then it holds
    $$
    \|\nabla h(x)\|^2 \geq \frac{\alpha^{\frac32}|\calI|}{16m^2}\enspace.
    $$
\end{lemma}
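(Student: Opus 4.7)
The plan is to compute $\nabla h(x)$ explicitly by the chain rule, use the orthogonality of $U$ to eliminate the cross terms when squaring the norm, and conclude with the third item of \cref{ass:prop:prop_fnc}.

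First I would apply the chain rule to each $h_j$, noting that the differential of $x\mapsto U^{(j)}x$ has adjoint $(U^{(j)})^\top$, to get
\begin{equation*}
\nabla h(x) \;=\; \frac{1}{m}\sum_{j=1}^m (U^{(j)})^\top \nabla f_{\mathrm{nc}}(y^{(j)};\alpha,d).
\end{equation*}
The key algebraic observation is that $U\in\mathcal{O}((d+1)m,(d+1)m)$ means $UU^\top = I_{(d+1)m}$; writing this identity blockwise with the row-blocks $U^{(j)}\in\mathbb{R}^{(d+1)\times(d+1)m}$ yields
\begin{equation*}
U^{(j)}(U^{(k)})^\top \;=\; \delta_{jk}\, I_{d+1}.
\end{equation*}

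Expanding the squared norm and using this block-orthogonality to kill all off-diagonal terms then gives
\begin{equation*}
\|\nabla h(x)\|^2 \;=\; \frac{1}{m^2}\sum_{j,k=1}^m \bigl\langle \nabla f_{\mathrm{nc}}(y^{(j)}),\; U^{(j)}(U^{(k)})^\top \nabla f_{\mathrm{nc}}(y^{(k)}) \bigr\rangle \;=\; \frac{1}{m^2}\sum_{j=1}^m \|\nabla f_{\mathrm{nc}}(y^{(j)};\alpha,d)\|^2.
\end{equation*}

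Finally, for every $j\in\mathcal{I}$ the point $y^{(j)}$ satisfies $y^{(j)}_d = y^{(j)}_{d+1} = 0$, so item~3 of \cref{ass:prop:prop_fnc} gives $\|\nabla f_{\mathrm{nc}}(y^{(j)};\alpha,d)\|\geq \alpha^{3/4}/4$, hence $\|\nabla f_{\mathrm{nc}}(y^{(j)};\alpha,d)\|^2\geq \alpha^{3/2}/16$. Discarding the nonnegative contributions from $j\notin\mathcal{I}$ yields the stated lower bound $\|\nabla h(x)\|^2 \geq \alpha^{3/2}|\mathcal{I}|/(16m^2)$. There is no real obstacle here: the entire argument rests on correctly identifying the block-orthogonality $U^{(j)}(U^{(k)})^\top=\delta_{jk}I_{d+1}$, which collapses the cross-terms and reduces the bilevel-flavored sum to $m$ independent instances of the scalar lower bound on $\|\nabla f_{\mathrm{nc}}\|$.
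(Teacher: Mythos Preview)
Your proposal is correct and follows essentially the same approach as the paper's proof: chain rule, block-orthogonality $U^{(j)}(U^{(k)})^\top=\delta_{jk}I_{d+1}$ to eliminate cross terms, then item~3 of \cref{ass:prop:prop_fnc} on the indices in $\calI$. Your presentation is in fact slightly cleaner, as you directly obtain $\frac{1}{m^2}\sum_j\|\nabla f_{\mathrm{nc}}(y^{(j)})\|^2$ rather than passing through $\|(U^{(j)})^\top\nabla f_{\mathrm{nc}}\|^2$ first.
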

\begin{proof}
    We have
    \begin{align*}
        \|\nabla h(x)\|^2 &= \left\|\frac1m\sum_{j=1}^d\nabla h_j(x)\right\|^2\\
        &=\left\|\frac1m\sum_{j=1}^m(U^{(j)})^\top\nabla f_{\mathrm{nc}}(U^{(j)}x;\alpha,d)\right\|^2\\
        &=\frac1{m^2}\left\|\sum_{j=1}^m(U^{(j)})^\top\nabla f_{\mathrm{nc}}(U^{(j)}x;\alpha,d)\right\|^2 \\
            &\qquad + \frac2{m^2}\sum_{\substack{j,l=1\\j\neq l}}^m\nabla f_{\mathrm{nc}}(U^{(j)}x;\alpha,d)^\top U^{(l)}(U^{(j)})^\top\nabla f_{\mathrm{nc}}(U^{(j)}x;\alpha,d)\\
        &= \frac1{m^2}\sum_{j=1}^m\left\|(U^{(j)})^\top\nabla f_{\mathrm{nc}}(U^{(j)}x;\alpha,d)\right\|^2
    \end{align*}
    where the last equality comes from the fact that for $j\neq l$, $U^{(l)}(U^{(j)})^\top=0$ since $U~\in~\calO~((d~+~1)m,(d+1)m)$. Now, using the third part of \cref{ass:prop:prop_fnc}, we get
    \begin{align*}
        \|\nabla h(x)\|^2 &\geq \frac1{m^2}\sum_{j\in\calI}\left\|(U^{(j)})^\top\nabla f_{\mathrm{nc}}(U^{(j)}x;\alpha,d)\right\|^2\\
        &\geq \frac1{m^2}\sum_{j\in\calI}\left\|\nabla f_{\mathrm{nc}}(U^{(j)}x;\alpha,d)\right\|^2\\
        &\geq \frac{\alpha^{\frac32}|\calI|}{16m^2}\enspace.
    \end{align*}
\end{proof}
\subsection{Main proof}
Now we are ready to prove \cref{th:lower_bound}.
\begin{proof}
     We consider $U\in\calO((T+1)m,(T+1)m)$ and we denote 
    $$
    U = \begin{bmatrix}U^{(1)} \\ \vdots \\ U^{(m)}\end{bmatrix}
    $$ 
    with $U^{(j)} = (u^{(j)}_1,\dots,u^{(j)}_{T+1})^\top~\in~\calO(T+1, (T+1)m)$.

    For $j\in\setcomb{m}$, we choose $\overline{F}_j:\bbR^{(T+1)m+(T+1)m}\to\bbR$ defined by
    $$
    \overline{F}_j(z,x) = f_\mathrm{nc}(U^{(j)}z;\alpha,T)
    $$
    and we set $\overline{F} = \frac1m\sum_{j=1}^m \overline{F}_j$.
    We also define for $i\in\setcomb{n}$ $\overline{G}_i(z,x) = \frac12\|z-x\|^2$, $\overline{G} = \frac1n\sum_{i=1}^n\overline{G}_i$, $\overline{z}^*(x) = \argmin_z \overline{G}(z,x)$ and $\overline{h}(x) = \overline{F}(\overline{z}^*(x),x)=f_\mathrm{nc}(U^{(j)}x;\alpha,T)$. By \cref{ass:prop:prop_fnc}, $\overline{F}_j$ is $4+\frac{\alpha c}m$ smooth, and $\overline{G}_i$ is $1$-smooth and $1$-strongly convex.

    We have
    $$
    \overline{h}(0) - \inf_x \overline{h}(x) \leq \sqrt{\alpha} + 10\alpha T\enspace.
    $$

    We finally consider $F_j(z,x) = \lambda_F \overline{F}_j(z/\beta, x/\beta)$, $G_i(z,x) = \lambda_G \overline{G}_i(z/\beta, x/\beta)$. As a consequence, we have $z^*(x)=\argmin G = \overline{z}^*(x)$ and $h(x) = F(z^*(x),x)$. We also consider a \emph{fixed} indices sequence $(i_t, j_t)$. We set
    \begin{align*}
        \alpha = \min\left\{1,\frac{m}c\right\},\quad 
        \lambda_F = \frac{160m\epsilon}{L^F_1\alpha^{3/2}},\quad
        \beta = \sqrt{5\lambda_F/L^F_1},\quad
        \lambda_G = \beta^2 \mu_G,\quad
        T = \frac{\Delta L^F_1}{1760m\epsilon}\sqrt{\alpha}\enspace.
    \end{align*}
    We can check that each $F_j$ is $L^F_1$-smooth, and each $G_i$ is $\mu_G$-strongly convex. Assuming $\epsilon~\leq~\Delta L^F_1\alpha/(1760m)$ ensures that $h(0) - \inf_x h(x) \leq \Delta$ (we can check that $h(0) = \lambda_F\overline{h}(0)$ and $\inf h = \lambda_F \inf \overline{h}$).
    
    Let us assume without loss of generality that the algorithm at initialization we have $z^0 = v^0 = x^0 = 0$ and consider $(z^t,v^t,x^t)$ the output of an algorithm with the known sequence $(i_t, j_t)$.

    Given our inner function and the fact that $\nabla_2 F(z,x) = 0$ for any $(z,x)\in\bbR^{(m+1)d+(m+1)d}$, we have 
    \begin{align}
        \label{app:eq:iter_z}z^{t+1} &\in \Span(z^0-x^0,\dots,z^t-x^t)\\
        \label{app:eq:iter_v}v^{t+1} &\in \Span(v^0 + \nabla_1 F_{j_0}(z^0,x^0),\dots,v^t + \nabla_1 F_{j_t}(z^t,x^t))\\
        \label{app:eq:iter_x}x^{t+1} &\in \Span(v^0,\dots,v^t)\enspace.
    \end{align}
    Since $v^0=0$, we have by \cref{app:eq:iter_v} $v^1\in\Span(\nabla_1 F_{j_0}(z^0,x^0))$ and by induction 
    $$
        v^{t+1}~\in~\Span(\nabla_1 F_{j_0}(z^0,x^0),\dots,\nabla_1 F_{j_t}(z^t,x^t))\enspace.
    $$
    Therefore, using \cref{app:eq:iter_x}, we have 
    $$
        x^{t+1} \in \Span(\nabla_1 F_{j_0}(z^0,x^0),\dots,\nabla_1 F_{j_t}(z^t,x^t))\enspace.
    $$
    Since $z^0=0$, by \cref{app:eq:iter_z}, $z^1\in\Span(x^0)$ and by induction 
    $$
        z^{t+1}\in\Span(x^0, \dots, x^t)\enspace.
    $$
    As a consequence, 
    \begin{align*}
        x^t\in\Span(\nabla_1 F_{j_0}(\Span(x^0),x^0),\dots,\nabla_1 F_{j_t}(\Span((x^s)_{s\leq t}),x^t))\enspace.
    \end{align*}

    Let us denote $y^{(j,t)}= U^{(j)}x^t$. Since $x^0 = 0$, $y^{(j_0,0)}=0$ and by the second part of \cref{app:prop:span}, $x^1\in\Span(u^{(j_0)}_{1})$. 
    
    Now we assume that for all $s\leq t$ we have 
    $$
        x^s~\in~\Span(u^{(j_0)}_{1},\dots, u^{(j_0)}_{s},\dots,u^{(j_{s-1})}_{1},\dots, u^{(j_{s-1})}_{s})\enspace.
    $$
    There exist scalars $\alpha_1, \dots, \alpha_r, \beta_{1,1},\beta_{2,1}, \beta_{2,2},\dots,\beta_{t,1},\dots,\beta_{t,t}$ such that
    \begin{align*}
        x^{t+1} = \sum_{r=1}^t\alpha_r\nabla_1 F_{j_r}\left(\sum_{s=1}^r \beta_{r,s}x^s, x^r\right)\enspace.
    \end{align*}
    Let $X^r = \sum_{s=1}^r \beta_{r,s}x^s$.
   For $r\in\{1,\dots, t\}$, we have by induction hypothesis
    $$
        X^r\in\Span(u^{(j_0)}_{1},\dots, u^{(j_0)}_{r},\dots,u^{(j_{r-1})}_{1},\dots, u^{(j_{r-1})}_{r})\enspace.
    $$
    By orthogonality, we have
    $$
    \Span(u^{(j_0)}_{1},\dots, u^{(j_0)}_{r},\dots,u^{(j_{r-1})}_{1},\dots, u^{(j_{r-1})}_{r})\perp\Span(u^{(j_r)}_{r+1},\dots,u^{(j_r)}_{T+1})\enspace.
    $$
    As a consequence
    $$
        U^{(j_r)}X^r = (\langle u^{(j_r)}_{1}, X^r\rangle,\dots, \langle u^{(j_r)}_{r}, X^r\rangle,0,\dots,0)\enspace.
    $$
    We can use \cref{app:prop:span} to say
    \begin{align*}
         \nabla_1 F_{j_r}\left(X^r, x^r\right)&\in\Span(u^{(j_r)}_{1},\dots, u^{(j_{r})}_{r+1})\subset \Span(u^{(j_0)}_{1},\dots, u^{(j_0)}_r, u^{(j_0)}_{r+1},\dots,u^{(j_{r})}_{1},\dots, u^{(j_{r})}_{r+1})\enspace.
    \end{align*}

    And we get finally
    $$
        x^{t+1}\in\Span(u^{(j_0)}_{1},\dots, u^{(j_0)}_{t}, u^{(j_0)}_{t+1},\dots,u^{(j_{t})}_{1},\dots, u^{(j_{t})}_{t+1})\enspace.
    $$
    
    By induction, for any $t$, we have 
    $$
        x^t\in\Span(\underbrace{u^{(j_0)}_{1},\dots,u^{(j_0)}_{t},\dots,u^{(j_t)}_{1},\dots,u^{(j_t)}_{t}}_{\text{at most $mt$ vectors}})
    $$
    and so 
    $$
        x^t\perp \Span((u^{(j)}_{1},\dots,u^{(j)}_{T+1})_{j\in\setcomb{m}\setminus\{j_0,\dots,j_t\}},(u^{(j)}_{t+1},\dots,u^{(j)}_{T+1})_{j\in\{j_0,\dots,j_t\}})\enspace.
    $$

    As a consequence, for $t\leq \frac{m}2 T$, let $\calI = \{j, y^{(j,t)}_T = y^{(j,t)}_{T+1} = 0\}$ with $y^{(j,t)} = U^{(j)}x^t$. Since $t\leq \frac{m}2 T$, we have $|\calI|\leq \frac{m}2$ and by \cref{lemma:lower_bound_grad_h}, we have 
    $$
    \|\nabla h(x^t)\|\geq \epsilon\enspace.
    $$
    If we define $T((x^t)_t, h) = \inf\{t\in\bbN, \|\nabla h(x^t)\|^2\leq \epsilon\}$, we just showed that for the fixed sequence $(i_t, j_t)$, we have
    $$
    T((x^t)_t, h)\geq \frac{m}2 T = \Omega(\sqrt{m}\epsilon^{-1})\enspace.
    $$
    The right-hand side being independent from the sequence $(i_t, j_t)$, for $t\leq \frac{m}2T$, we have
    $$
        \bbE[\|\nabla h(x^t)\|^2]>\epsilon
    $$
    where the expectation is taken over the random choice of $i_0, \dots, i_{t-1}, j_0,\dots,j_{t-1}$.
\end{proof}

\section{Details on the experiments}\label{app:sec:exp}
We performed the experiments with the Python package Benchopt \citep{Moreau2022}\footnote{The code of the benchmark is available at \url{https://github.com/benchopt/benchmark_bilevel} and the results are displayed in \url{ https://benchopt.github.io/results/benchmark_bilevel.html}.}. For each experiment, we use minibatches instead of single samples to estimate oracles because it is more efficient in practice. We use a batch size of 64 for the stochastic inner and outer oracles. \new{All the experiments were performed on processors AMD EPYC 7742 (4 CPUs/experiment).}

\new{\subsection{Benchmark on quadratics}
\label{app:sec:quadratic}
For this benchmark, we consider
$$
F(z,x) = \frac1m\sum_{j=1}^m F_j(z,x), \quad G(z,x) = \frac1n\sum_{i=1}^n G_i(z,x)\enspace.
$$
The functions $F_j$ and $G_i$ are defined as
\begin{align*}
    F_j(z,x) &= \frac12 z^\top A^{F_j}_z z + \frac12 x^\top A^{F_j}_x + xB^{F_j}z + (d^{F_j}_z)^\top z + (d^{F_j}_x)^\top x\\
    G_i(z,x) &= \frac12 z^\top A^{G_i}_z z + \frac12 x^\top A^{G_i}_x + xB^{G_i}z + (d^{G_i}_z)^\top z + (d^{G_i}_x)^\top x
\end{align*}
with $A^{F_j}_z,A^{G_i}_z\in\bbR^{p\times p}$, $A^{F_j}_x,A^{G_i}_x\in\bbR^{d\times d}$, $B^{F_j}, B^{G_i}\in\bbR^{d\times p}$, $d^{F_j}_z, d^{G_i}_z \in\bbR^p$ and $d^{F_j}_x, d^{G_i}_x \in\bbR^d$.
The vectors $d^{F_j}_x$, $d^{G_i}_x$ are drawn randomly according to a normal distribution $\calN(0, I_d)$. The vectors $d^{F_j}_z$, $d^{G_i}_z$ are drawn randomly according to a normal distribution $\calN(0, I_p)$. For the Hessian matrices with respect to $z$, we generate $A^{G_i}_z$ so that $\frac1n\sum_{i=1}^n A^{G_i}_z = A$ for a symetric positive definite matrix $A$ with spectrum in $[0.1, 1]$. To do so, we generate $x_i\sim\calN(0, I_p)$ and set $A^{G_i}_z = \sqrt{A}x_i(\sqrt{A}x_i)^\top$. We proceed similarly for $A^{F_j}_z$, $A^{G_i}_z$, $A^{F_j}_x$. For $B^{G_i}$, we want $\frac1n\sum_{i=1}^n B^{G_i} = B$ for a prescribed matrix $B\in\bbR^{d\times p}$ such that $\|B\| = 0.1$. Let $B = U\Sigma V^\top$ the singular values decomposition of $B$. To get $B^{G_i}$, we generate $x_i\sim\calN(0, I_p)$ and set $B^{G_i} = (V\Sigma x_i)(U x_i)^\top$. We proceed similarly for $B^{F_j}$. In our experiment, we take $n = 32768$ and $m=1024$.
To select the parameters of the solvers, we perform a grid search. More precisely, for each solver, we take the inner step size in the form of $\alpha t^{-a}$ where $a$ is the theoretical decrease rate of each solver and $\alpha$ is chosen in $\{0.01, 0.1\}$. The outer step size is taken as $\frac\alpha{r}t^{-b}$ where $b$ is the theoretical decrease rate and $r$ is chosen in $\{0.1,1,10,100\}$.
For the two-loops algorithms (\emph{i.e.} StocBiO, VRBO, AmIGO), the number of inner steps is set to $10$ after a manual search. In the methods implementing Neumann approximations (MRBO, VRBO, StocBiO), the number of terms in the Neumann series is also set to $10$ and the scaling parameter $\eta$ is set to $0.5$.
To get the fastest convergence, we keep for each solver the set of parameters that give the best decrease of $h$ on the 100 first epochs.  The period of full batch computation of VRBO and SRBA $q$ is parametrized as $q=a\frac{n+m}b$ where $b=64$ is the batch size and $a$ is chosen in $\{2^{-6},2^{-3},2^{-1}, 2^3, 2^6\}$.
For F$^2$SA, we take $\lambda_0 = 1$ and $\delta_t = \alpha t^{-\frac17}$ with $\alpha$ chosen in $\{0.01,0.1,1\}$.}

\subsection{Hyperparameter selection with IJCNN1}\label{app:sec:ijcnn1}
We solve a regularization selection problem for an $\ell^2$-regularized logistic regression problem. Here, we assume that we have a regularization parameter per feature. We have $n_\train=49,990$ training samples $(d^\train_i, y_i^\train)_{i\in\setcomb{n_\train}}$ and $n_\val=91,701$ validation samples $(d^\val_i, y_i^\val)_{i\in\setcomb{n_\train}}$ coming from the IJCNN1\footnote{\url{1https://www.csie.ntu.edu.tw/~cjlin/libsvmtools/datasets/binary.html}} dataset. Mathematically, it boils down to solve Problem \eqref{eq:bilevel_pb} with $F$ and $G$ given by
\begin{align*}
    F(\theta, \lambda) &= \frac1{n_\val}\sum_{j=1}^{n_\val}\varphi(y_j^\val\langle d_j^\val,\theta\rangle)\\
    G(\theta, \lambda) &= \frac1{n_\train}\sum_{i=1}^{n_\train}\varphi(y_i^\train\langle d_i^\train,\theta\rangle) + \frac12\sum_{k=1}^p\e^{\lambda_k}\theta_k^2
\end{align*}
where $\varphi$ is the logistic loss defined by $\varphi(u) = \log(1 + \e^{-u})$. The inner and outer step sizes are set to 0.05. 

To make our comparison, we select the parameters of each solver with an extensive grid search. More precisely, for each solver, we take the inner step size in the form of $\alpha t^{-a}$ where $a$ is the theoretical decrease rate of each solver and $\alpha$ is chosen in $\{2^{-5},2^{-4},2^{-3},2^{-2}\}$. The outer step size is taken as $\frac\alpha{r}t^{-b}$ where $b$ is the theoretical decrease rate and $r$ is chosen in $\{10^{-2},10^{-1.5},10^{-1},10^{-0.5}, 10^0\}$.
For the two-loops algorithms (\emph{i.e.} StocBiO, VRBO, AmIGO), the number of inner steps is set to $10$ after a manual search. In the methods implementing Neumann approximations (MRBO, VRBO, StocBiO), the number of terms in the Neumann series is also set to $10$ and the scaling parameter $\eta$ is set to $0.5$.
To get the fastest convergence, we keep for each solver the set of parameters that give the best decrease of $h$ on the 100 first epochs.  The period of full batch computation of VRBO and SRBA $q$ is parametrized as $q=a\frac{n+m}b$ where $b=64$ is the batch size and $a$ is chosen in $\{2^{-6},2^{-3},2^{-1}, 2^3, 2^6, 2^9\}$.
\new{For F$^2$SA, we take $\lambda_0 = 1$ and $\delta_t = \alpha t^{-\frac17}$ with $\alpha$ chosen in $\{0.01,0.1,1\}$.}

\section{Additional experiment: Datacleaning task}\label{app:sec:supp_exp}
We run an additional experiment. For each experiment, the parameters of the solvers are chosen by an extensive grid search. Then we select the curve that gives the best validation accuracy for each solver and finally plot the corresponding test error on \cref{app:fig:exp}.
\begin{figure}
    \centering
    \includegraphics[width=.48\textwidth]{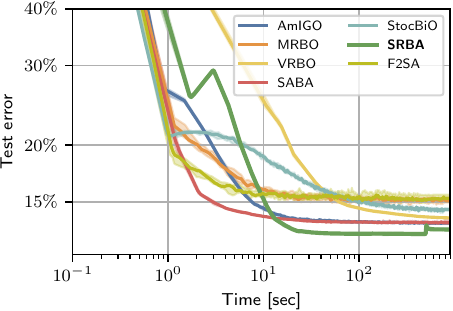}
    \caption{Comparison of stochastic bilevel solvers. Each solver is run on 10 random seeds and the lines show the median performances.  The shaded area corresponds to the performances between the 20\% and the 80\% percentiles. Test error on the datacleaning task with the MNIST dataset with a corruption rate $0.5$.}
    \label{app:fig:exp}
\end{figure}

The third experiment is the datacleaning task. It aims to train a multiclass classifier while having some training samples with noisy labels. On the one hand we have $n_\train = 20,000$ training labelled samples $(d_i^\train,y_i^\train)_{i\in\setcomb{n_\train}}$ with potentially corrupted labels with probability $p_c$ (in the experiments $p_c=0.5$). On the other hand, we have a validation set $(d_j^\val,y_j^\val)_{j\in\setcomb{n_\val}}$ of $n_\val = 5,000$ samples where all the samples are clean. We also have $10,000$ clean test samples. The datacleaning problem consists in learning a classifier on all these samples by giving less weight to corrupted labels. It can be cast as a bilevel optimization problem like \eqref{eq:bilevel_pb} where the function $F$ and $G$ are given by
\begin{align*}
    F(\theta, \lambda) &= \frac1{n_\val}\sum_{j=1}^{n_\val}\ell(\theta d_j^\val, y_j^\val)\\
    G(\theta, \lambda) &= \frac1{n_\train}\sum_{i=1}^{n_\train}\sigma(\lambda_i)\ell(\theta d_i^\train, y_i^\train) + C_r\|\theta\|^2
\end{align*}
where $\theta\in\bbR^{C\times p}$, $\lambda\in\bbR^{n_\train}$, $\ell$ is the cross entropy loss and $\sigma$ is the sigmoid function defined by $\sigma(\lambda) = \frac1{1+\e^{-\lambda}}\in (0\,,\,1]$.

We run this experiment on the MNIST dataset. We used $20,000$ training samples, $5,000$ validation samples, and $10,000$ test samples. The parameter $C_r$ is set to $0.2$ after a manual search to get the best performance. For the tuning of the step sizes of each method, we set $(\rho_t, \gamma_t) = (\alpha t^{-a}, \beta t^{-b})$ where $(a, b)$ are the rate provided by the analysis of each method, $\alpha$ is chosen among 4 values between $10^{-3}$ and $10^0$ spaced on a logarithmic scale. The scaling parameter $\beta$ is set to $\frac\beta{r}$ where $r$ is chosen among 6 values between $10^{-5}$ and $10^0$ spaced on a logarithmic scale. The other parameters are chosen in the same way as the IJCNN1 experiments (see \cref{app:sec:ijcnn1}).

We plot the test error on the \cref{app:fig:exp} (right). On the one hand, SRBA reaches the best final value. On the other hand, in terms of speed, it is the second fastest after SABA. The other methods are slower and reach a worse final accuracy.

\end{document}